\newlength\figureheight
\newlength\figurewidth
\newcommand*\bigcdot{\mathpalette\bigcdot@{.5}}
\newcommand*\bigcdot@[2]{\mathbin{\vcenter{\hbox{\scalebox{#2}{$\m@th#1\bullet$}}}}}
\newcommand{\equationFullStop}{.}
\newcommand{\widthSymbol}{i}
\newcommand{\widthSymbolB}{j}
\newcommand{\inputSymbol}{x}
\newcommand{\nonlinearity}{\phi}
\newcommand{\normal}{\mathcal{N}}
\newcommand{\expectation}[2]{\mathbb{E}_{#2}{\left\lbrack #1\right\rbrack}}
\newcommand{\sequenceVariable}{n}
\newcommand{\realLine}{\mathbb{R}}
\newcommand{\naturalNumbers}{\mathbb{N}}
\newcommand{\datapoint}{x}
\newcommand{\indexSet}{\mathcal{X}}
\newcommand{\envelopegradient}{m}
\newcommand{\envelopeconstant}{c}
\newcommand{\genericRV}{X}
\newcommand{\rowIndex}{n}
\newcommand{\rowVariance}{\sigma^2_{\rowIndex}}
\newcommand{\limitStd}{\sigma_{*}}
\newcommand{\limitVariance}{\sigma^2_{*}}
\newcommand{\colIndex}{i}
\newcommand{\generalSum}{S}
\newcommand{\littleO}{\mathrm{o}}
\newcommand{\summand}{\gamma}
\newcommand{\projectionIndeces}{\mathcal{L}}
\newcommand{\projectionCoefficients}{\alpha}
\newcommand{\atWidth}{[\sequenceVariable]}
\newcommand{\monotoneCLTfunc}{h}
\newtheorem{prop}{Proposition}
\newtheorem{thm}{Theorem}
\newtheorem{lem}{Lemma}
\newtheorem{cor}{Corollary}
\newtheorem{defn}{Definition} 
\newtheorem{rem}{Remark}
\def\norm#1{\Vert#1\Vert}
\newcommand*\samethanks[1][\value{footnote}]{\footnotemark[#1]}
\title{On the Neural Tangent Kernel of Deep Networks with Orthogonal Initialization }
\author{%
	Wei~Huang \thanks{Both authors contributed equally to this work.}\\
	University of Technology Sydney, Australia \\
	\texttt{wei.huang-6@student.uts.edu.au} \\
	\And
	Weitao Du   \samethanks[1] \\
	Northwestern University, USA \\
	\texttt{weitao.du@northwestern.edu} \\
	\And
	Richard Yi Da Xu \\
	University of Technology Sydney, Australia \\
	\texttt{YiDa.Xu@uts.edu.au} \\
}
\begin{document}
	
\maketitle
	
	\begin{abstract}
		The prevailing thinking is that orthogonal weights are crucial to enforcing dynamical isometry and speeding up training. The increase in learning speed that results from orthogonal initialization in linear networks has been well-proven. However, while the same is believed to also hold for nonlinear networks when the dynamical isometry condition is satisfied, the training dynamics behind this contention have not been thoroughly explored. In this work, we study the dynamics of ultra-wide networks across a range of architectures, including Fully Connected Networks (FCNs) and Convolutional Neural Networks (CNNs) with orthogonal initialization via neural tangent kernel (NTK). Through a series of propositions and lemmas, we prove that two NTKs, one corresponding to Gaussian weights and one to orthogonal weights, are equal when the network width is infinite. Further, during training, the NTK of an orthogonally-initialized infinite-width network should theoretically remain constant. This suggests that the orthogonal initialization cannot speed up training in the NTK (lazy training) regime, contrary to the prevailing thoughts. In order to explore under what circumstances can orthogonality accelerate training, we conduct a thorough empirical investigation outside the NTK regime. We find that when the hyper-parameters are set to achieve a linear regime in nonlinear activation, orthogonal initialization can improve the learning speed with a large learning rate or large depth.
	\end{abstract}
	
	\section{Introduction}
	\label{intro}
	
	Deep learning has been responsible for a step-change in performance across machine learning, setting new benchmarks for state-of-the-art performance in many applications, from computer vision \cite{nixon2019feature}, natural language processing \cite{devlin2018bert}, to reinforcement learning \cite{mnih2015human}, and more. Beyond its fundamental shift in approach, an array of innovative techniques underpin the success of deep learning, such as residual connections \cite{he2016deep}, dropout \cite{srivastava2014dropout}, and batch normalization \cite{ioffe2015batch}. The mean field theory  \cite{poole2016exponential,schoenholz2016deep} recently opened a gate to analyze the principles behind neural networks with random, infinite width, and fully-connected networks as the first subjects. Broadly, what Schoenholz et al. \cite{schoenholz2016deep} discovered, and then empirically proved, is that there exists a critical initialization called the edge of chaos, allowing the correlation signal to go infinitely far forward and preventing vanishing or exploding gradients. Later, this theory had been extended to a much wider range of architectures, e.g., convolutional networks \cite{xiao2018dynamical}, recurrent networks \cite{chen2018dynamical}, dropout networks \cite{huang2019mean}, residual networks \cite{yang2017mean}, and batch normalization \cite{yang2019mean}.   
	
	Critical initialization requires the mean squared singular value of a network's input-output Jacobian to be $O(1)$. It was already known that the learning process in deep linear networks could be dramatically accelerated by ensuring all singular values of the Jacobian being concentrated near 1, a property known as {\it dynamical isometry} \cite{saxe2013exact}. However, what was not known was how to impose dynamical isometry in deep nonlinear networks. Pennington et al. \cite{pennington2017resurrecting,pennington2018emergence} conjectured that they could do so with techniques based on free probability and random matrix theory, giving rise to a new and improved form of initialization in deep nonlinear networks. Since then, dynamical isometry has been introduced to various architectures, such as residual networks \cite{tarnowski2018dynamical,ling2019spectrum}, convolutional networks \cite{xiao2018dynamical}, or recurrent networks \cite{chen2018dynamical} with excellent performance on real-world datasets. 
	
	In fully connected networks, two key factors help to ensure dynamical isometry. One is orthogonality, and the other is appropriately tuning weights' and biases' parameters to establish a linear regime in nonlinear activation \cite{pennington2017resurrecting}. In straightforward scenarios, orthogonal initialization is usually enough to impose dynamical isometry in a linear network. The benefit of the orthogonality in linear networks has been proven recently \cite{hu2020provable}. However, the dynamics of nonlinear networks with orthogonal initialization has not been investigated. The roadblock is that it has been unclear how to derive a simple analytic expression for the training dynamics. 
	
	Hence, to fill this gap, we look to a recent technique called neural tangent kernel (NTK) \cite{jacot2018neural}, developed for studying the evolution of a deep network using gradient descent in the infinite width limit. NTK is a kernel characterized by a derivative of the output of a network to its parameters. It has been shown that the NTK of a network with Gaussian initialization converges to a deterministic kernel and remains unchanged during gradient descent in the infinite-width limit. We extend these results to the orthogonal initialization case and find that orthogonal weights contribute to the same properties for NTK. Given a sufficiently small learning rate and wide width, the network optimized by gradient descent behaves as a model linearized about its initial parameters \cite{lee2019wide}, where these dynamics are called NTK regime, or {\it lazy training} \cite{chizat2019lazy}. As the learning rate gets larger or the network becomes deeper, that is, out of the NTK regime, we expect that there will be new phenomena that can differentiate two initialization. To summarize, our contribution is as follows,
	
	\begin{itemize}
\item We prove that the NTK of an orthogonally-initialized network converges to the NTK of a network initialized by Gaussian weights in the infinite-width limit. Besides, theoretically, during training, the NTK of an orthogonally-initialized infinite-width network stays constant in the infinite-width limit. 
\item We prove that the NTK of an orthogonally-initialized network across architectures, including FCNs and CNNs, varies at a rate of the same order for finite-width as the NTK of a Gaussian-initialized network. Therefore, there are no significant improvements brought by orthogonal initialization for wide and nonlinear networks compared with Gaussian initialization in the NTK regime.
\item We conduct a thorough empirical investigation of training speed outside the NTK regime to complement theoretical results. We show that orthogonal initialization can speed up training in the large learning rate and depth regime when the hyper-parameters are set to achieve a linear regime in nonlinear activation.
	\end{itemize}

	\section{Related Work}
	\label{rela}
	
	Hu et al. \cite{hu2020provable}'s investigation of orthogonal initialization in linear networks provided a rigorous proof that drawing the initial weights from the orthogonal group speeds up convergence relative to standard Gaussian initialization. However, deep nonlinear networks are much more complicated, making generating proof the same in these nonlinear settings much more difficult. For example,  Sokol and Park \cite{sokol2018information} attempted to explain why dynamical isometry imposed through orthogonal initialization can significantly increase training speed. They showed a connection between the maximum curvature of the optimization landscape, as measured by a Fisher information matrix (FIM) and the spectral radius of the input-output Jacobian, which partially explains why networks with greater isometric are able to train much faster. Given that NTK and FIM have the same spectrum for the regression problem, our theoretical results can be seen as a complement to each other since our work provide a precise characterization of the dynamics of wide networks in the NTK regime. At the same time, Sokol and Park \cite{sokol2018information} investigated the advantage of orthogonal initialization outside of the NTK regime.
	
	Jacot et al. \cite{jacot2018neural}, who conceived of the neural tangent kernel, shows that NTK both converges to an explicit limiting kernel in the infinite-width networks and remains constant during training with Gaussian initialization. Lee et al. \cite{lee2019wide} reached the same conclusion from a different angle with a demonstration that the gradient descent dynamics of the original neural network fall into its linearized dynamics regime. We extend these results to orthogonal initialization and show that both NTK of Gaussian and orthogonally-initialized network are the same kernel in the infinite-width limit. While the original work of NTK is groundbreaking in producing an equation to predict the behavior of gradient descent in the NTK regime, its proof is incomplete as it implicitly assumes gradient independence. Yang \cite{yang2019scaling,yang2020tensor} subsequently removed this assumption and completed the proof. Besides, Huang and Yau \cite{huang2019dynamics,arora2019exact,allen2019convergence,du2018gradient} have proven the same proprieties of NTK and global convergence of deep networks in a few different ways. However, all of these studies did not focus on the orthogonal initialization as with our work.
	
	Studies involving NTK commonly adopt the ntk-parameterization \cite{jacot2018neural}, since standard parameterization with an infinite-width limit can lead to divergent gradient flow in the infinite limit. 
	To account for this problem for standard parameterization, the authors of {sohl2020infinite} developed an improved standard parameterization. We tested both techniques using the ``Neural Tangents'' Python library  \cite{neuraltangents2019} with some added codes to support orthogonal initialization.
	
	\section{Preliminaries}
	\label{Preliminaries}
	
\subsection{Networks and Parameterization}
Suppose there are $D$ training points denoted by $\{ (x_d, y_d) \}_{d=1}^D$, where input $X = (x_1, \dots, x_D) \in \mathbb{R}^{n_{0} \times D} $, and label $Y = (y_1, \dots, y_D) \in \mathbb{R}^{n_{L} \times D} $. We consider the following architectures:

{\bf Fully-Connected Network (FCN).} Consider a fully-connected network of $L$ layers of widths $n_l$, for $l = 0, \cdots, L$, where $l=0$ is the input layer and $l=L$ is output layer.  Following the typical nomenclature of literature, we denote synaptic weight and bias for the $l$-th layer by $W^l \in \mathbb{R}^{n_{l} \times n_{l-1}} $ and $b^l \in \mathbb{R}^{n_l} $, with a point-wise activations function $\phi: \mathbb{R} \rightarrow \mathbb{R} $.
 For each input $x \in \mathbb{R}^{n_0}$, pre-activations and post-activations are denoted by $h^l(x) \in \mathbb{R}^{n_l}$ and $x^l(x) \in \mathbb{R}^{n_l}$ respectively. The information propagation for $l \in \{1,\dots,L\}$ in this network is govern by,
  \begin{equation} \label{eq:fcn}
    x_i^l  = \phi(h_i^l),~~~ h^l_i =  \sum_{j=1}^{n_l} W_{ij}^l x_j^{l-1} + b^l_i,
  \end{equation}

{\bf Convolutional Neural Network (CNN).} For notational simplicity, we consider a 1D convolutional networks with periodic boundary conditions. We denote the filter relative spatial location $\beta \in  \{-k,\dots, 0, \dots, k \}$ and spatial location $\alpha \in \{1, \dots, m \}$, where $m$ is the spatial size. The forward propagation  for $l \in \{1,\dots,L-1 \}$ is given by, 
\begin{equation} \label{eq:cnn}
    x_{i,\alpha}^l  = \phi(h_{i,\alpha}^l),~~~ h_{i,\alpha}^l = \sum_{j=1}^{n_l} \sum_{\beta=-k}^{k} W_{ij,\beta}^l x_{j,\alpha+\beta}^{l-1} + b_i^l,
  \end{equation}
where weight $W^l \in \mathbb{R}^{n_{l} \times n_{l-1} \times (2k+1)} $, and $n_{l}$ is the number of channels in the $l^{th}$ layer. The output layer of a CNN is processed with a fully-connected layer, $f_i(x) = h^{L}_i = \sum_{j=1}^{n_{L}} \sum_{\alpha} W^{L}_{ij,\alpha} x_{j,\alpha}^{L-1}$.

Standard parameterization requires the parameter set $\theta = \{ W^l_{ij}, b^l_i\}$ is an ensemble generated by, $W^l_{ij} \sim  \mathcal{N} (0, \frac{\sigma^2_w}{ n_{l-1}}) , ~~~ b_i^l \sim \mathcal{N} (0,\sigma^2_b)$, where $\sigma^2_w$ and $\sigma^2_b$ are weight and bias variances. The variance of weights is scaled by the width of previous layer $n_{l-1}$ to preserve the order of post-activations layer to be $O(1)$. We denote this parameterizationas {\it standard parameterizaiton}. However, this paramterization can lead to a divergence in derivation of neural tangent kernel. To overcome this problem, {\it ntk-parameterization} was introduced, $W^l_{ij} =  \frac{\sigma_w}{\sqrt{n_{l-1}}} \omega^l_{ij}   , ~~~ b_i^l = \sigma_b \beta_i^l$, where $ \omega^l_{ij}, \beta_i^l  \sim \mathcal{N} (0,1)$. 

\begin{table*}[!htbp]
		\begin{center}
			\begin{tabular}{ | c |c | c| c|}
			\hline   
			  Parameterization  &  $W$ initialization           & $b$ initialization                    &  layer equation   \\
			 \hline 
			  ntk Gaussian  & $ \omega_{ij} \sim \mathcal{N} (0,1)$   & \multirow{2}{*}{$\beta_i \sim \mathcal{N}(0,1)$ } & \multirow{2}{*}{$h^l=  \frac{\sigma_w}{\sqrt{n_{l-1}}}\omega^l x^{l-1}+\sigma_b \beta^l$} \\
                                ntk Orthogonal      & $(\omega^l)^T \omega^l = n_{l-1}{\bf I}$             &          &    \\
		      std Gaussian       & $W_{ij} \sim \mathcal{N}(0,\frac{\sigma_w^2}{N_{l-1}})$  &\multirow{2}{*}{$b_i \sim \mathcal{N}(0,\sigma_b^2)$}   & \multirow{2}{*}{$h^l=\frac{1}{\sqrt{s}} W^l x^{l-1}+ b^l$}   \\
             std Orthogonal     & $W^T W = \sigma_w^2{\bf I}$             &    &   \\       
          \hline 
 ntk Gaussian      & $ \omega_{ij,\alpha} \sim \mathcal{N} (0,1)$   & \multirow{2}{*}{$\beta_i \sim \mathcal{N}(0,1)$ } & \multirow{2}{*}{$h^l_\alpha=  \sum_{\beta=-k}^{k} \frac{\sigma_w}{\sqrt{(2k+1)n_{l-1}}}\omega_\beta^l x_{\alpha+\beta}^{l-1}+\sigma_b \beta^l$} \\
                                ntk Orthogonal      & $(\omega_\alpha^l)^T \omega_\alpha^l = n_{l-1}{\bf I}$             &          &    \\
		      std Gaussian       & $W_{ij,\alpha} \sim \mathcal{N}(0,\frac{\sigma_w^2}{(2k+1)N_{l-1}})$  &\multirow{2}{*}{$b_i \sim \mathcal{N}(0,\sigma_b^2)$}   & \multirow{2}{*}{$h_\alpha^l=\frac{1}{\sqrt{s}} W_\beta^l x_{\alpha+\beta}^{l-1}+ b^l$}   \\
             std Orthogonal     & $W_\alpha^T W_\alpha = \frac{\sigma_w^2}{2k+1}{\bf I}$             &    &   \\       
          \hline 
			\end{tabular}
		\end{center}
    \caption{Summary of improved standard parameterization and ntk-parameterization for Gaussian and orthogonal initialization. The abbreviation ``std'' stands for standard, and the ``parameterization'' is omitted after ntk or std.}
\label{tab:parameterization}
	\end{table*}	
	
	\subsection{Dynamical Isometry and Orthogonal Initialization}

Consider the input-output Jacobian $J = \frac{\partial h^{L}}{\partial x^{0}} = \prod_{l=1}^L D^l W^l$, where $h^{L}$ is output function, $x^{0}$ is input, and $D^l$ is a diagonal matrix with elements $D^l_{ij}=\phi'(h^l_i)\delta_{ij}$. Ensuring all singular values of the Jacobian concentrate near 1 is a property known as {\it dynamical isometry}. In particular,  It is shown that two conditions regarding singular values of $W^l$ and $D^l$ contribute crucially to the dynamical isometry in non-linear networks \cite{pennington2017resurrecting}. More precisely, the singular values of $D^l$ can be made arbitrarily close to 1 by choosing a linear regime in a nonlinear activation. On the other hand, adopting a random orthogonal initialization can force the singular values of weights into 1. In particular, weights are drawn from a uniform distribution over scaled orthogonal matrices obeying,
 \begin{equation} \label{eq:gaussian}
    (W^l)^T  W^l = \sigma^2_w {\bf I}, 
  \end{equation}
This is the {\it standard parameterization} for orthogonal weights, and {\it ntk-parameterization} of orthogonality follows,
 \begin{equation} \label{eq:gaussian}
     W^l_{ij} =  \frac{\sigma_w}{\sqrt{n_{l-1}}} \omega^l_{ij}, ~~~(\omega^l)^T \omega^l = n_{l-1}{\bf I}.
  \end{equation}
We show a summary of improved standard parameterization and ntk-parameterization across FCN and CNN for Gaussian and orthogonal initialization in Table \ref{tab:parameterization}. The factor $s$ in the layer equation of standard parameterization is introduced to prevent divergence of NTK \cite{sohl2020infinite}. The core idea is to write the width of the neural network in each layer in terms of an auxiliary parameter, $n_l = s N_l$. Instead of letting $n_l \rightarrow \infty$, we adopt $s$ as the limiting factor.

\subsection{Neural Tangent Kernel}

The neural tangent kernel (NTK) is originated from \cite{jacot2018neural} and defined as,
 \begin{equation}\label{eq:Jacobian}
   \Theta_t(X,X) = \nabla_\theta f_{t}(\theta,X) \nabla_\theta f_t(\theta,X)^T.
 \end{equation}
where function $f_t$ are the outputs of the network at training time $t$, i.e. $f_t(\theta,X)= h_t^L(\theta,X) \in \mathbb{R}^{D \times n_L} $, and $ \nabla_\theta f_t(\theta,X) = {\rm vec}([\nabla_\theta f_t(\theta,x)]_{x \in X}) \in \mathbb{R}^{D n_L} $. As such, the neural tangent kernel is formulated as a $D n_L \times D n_L $ matrix. Let $\eta$ be the learning rate, and $\mathcal{L}$ be the loss function. The ynamics of gradient flow for parameters and output function are given by,
\begin{equation}
 \begin{aligned}
\frac{\partial \theta}{\partial t} &=   - \eta \nabla_\theta \mathcal{L} = - \eta  \nabla_\theta f_t (\theta,X)^T \nabla_{f_t(\theta,X)} \mathcal{L} \\
\frac{\partial f_t(\theta, X)}{\partial t}& =  \nabla_\theta f_t(\theta, X) \frac{\partial \theta}{\partial t}   = - \eta \Theta_t(X,X) \nabla_{f_t(\theta,X)} \mathcal{L}. \\
\end{aligned}
\end{equation}
This equation for $f_t$ has no substantial insight in studying the behavior of networks because $\Theta_t(X,X)$ varies with the time during training. Interestingly, as shown by \cite{jacot2018neural}, the NTK $\Theta_t(X,X)$ converges to a deterministic kernel $\Theta_\infty(X,X)$ and does not change during training in the infinite-width limit, i.e. $\Theta_t(X,X) = \Theta_\infty(X,X)$. As a result, the infinite width limit of the training dynamics are given by,
 \begin{equation}\label{eq:ode}
   \frac{\partial f_t(\theta, X)}{\partial t} =  - \eta \Theta_\infty(X,X) \nabla_{f_t(\theta,X)} \mathcal{L}.
 \end{equation}
In the case of an MSE loss, $\mathcal{L}(y,f) = \frac{1}{2}   \left\lVert y-f_t(\theta,x)\right\lVert^2_2$, the Equation (\ref{eq:ode}) becomes a linear model with a solution,
\begin{equation}\label{eq:Jacobian}
   f_t(\theta, X) = ( {\bf I} - e^{ - \eta \Theta_\infty(X,X)t} ) Y +  e^{ - \eta \Theta_\infty(X,X)t} f_0(\theta,X).
 \end{equation}
	
\section{Theoretical Results}
\label{Theory}

\subsection{An Orthogonally Initialized Network is a Gaussian Process in the Infinite Width Limit}

As stated in \cite{lee2017deep,matthews2018gaussian}, the pre-activation $h^l_i$ of Gaussian initialized network tends to Gaussian processes (GPs) in the infinite-width limit. This is the proposition to construct the NTK in networks with Gaussian weights \cite{jacot2018neural}. We extend this result to the orthogonal initialization across Fully Connected Networks (FCNs) and Convolutional Neural Networks (CNNs):
\begin{thm}\label{thm:output_limit}
Consider a FCN of the form (\ref{eq:fcn}) at orthogonal initialization, with a Lipschitz nonlinearity $\phi$, and in the limit as $n_1, ..., n_{L-1} \to \infty$, the pre-activations $h^l_{i}$, for $i=1, ..., n_{l}$ and $l \in \{1,\dots, L\}$, tend to i.i.d centered Gaussian processes of covariance $\Sigma^{l}$ which is defined recursively by:
\begin{align*}
\Sigma^{1}(x, x') &= \frac{\sigma^2_w}{n_0} x^T x' + \sigma^2_b \\
\Sigma^{l}(x, x') &= \sigma^2_w \mathbb{E}_{f\sim\mathcal{N}\left(0,\Sigma^{l-1}\right)}[\phi(f(x)) \phi(f(x'))] + \sigma^2_b,
\end{align*}

For a CNN of the form (\ref{eq:cnn}) at orthogonal initialization, and in the limit as $n_1, ..., n_{L-1} \to \infty$, the pre-activations $h^l_{i,\alpha}$ tend to Gaussian processes of covariance $\Sigma_{\alpha,\alpha'}^{l}$, which is defined recursively by:
	\begin{align*}
	&\Sigma^{1}_{\alpha,\alpha'}(x, x') = \frac{\sigma^2_w}{n_0(2k+1)} \sum_{\beta=-k}^k  x_{\alpha+\beta}^T x'_{\alpha'+\beta} + \sigma^2_b \\
	&\Sigma^{l}_{\alpha,\alpha'}(x, x') = \frac{\sigma^2_w}{(2k+1)} \sum_{\beta=-k}^k \big[ \mathbb{E}_{f\sim\mathcal{N}\left(0,\Sigma_{\alpha+\beta,\alpha'+\beta}^{l-1}\right)}\\
	&   ~~~~~~~~~~~~~     [\phi(f(x_{\alpha+\beta})) \phi(f(x_{\alpha'+\beta}'))] \big] + \sigma^2_b. \\
   & \Sigma^{L}(x, x')  =  \sum_\alpha \delta_{\alpha,\alpha'} \big[ \mathbb{E}_{f\sim\mathcal{N}\left(0,\Sigma_{\alpha,\alpha'}^{L-1}\right)}  [\phi(f(x_{\alpha})) \phi(f(x_{\alpha'}'))] \big]
	\end{align*}
\end{thm}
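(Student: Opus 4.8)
The plan is to follow the standard Gaussian-process induction of Lee et al.\ \cite{lee2017deep} and Matthews et al.\ \cite{matthews2018gaussian} (which underlies Jacot et al.\ \cite{jacot2018neural}), replacing the single step that exploits Gaussianity of the weights — the conditional law of a layer's pre-activations given the previous layer — by a Poincar\'e--Borel (``spherical CLT'') estimate for Haar-distributed orthogonal matrices. Concretely, I would induct on the layer index $l$, showing that for every finite collection of inputs $x^{(1)},\dots,x^{(p)}$ and neuron indices $i=1,\dots,m$ the vector $(h^l_i(x^{(a)}))_{i\le m,\,a\le p}$ converges in distribution, as the hidden widths tend to infinity, to a centered Gaussian with $\Cov(h^l_i(x^{(a)}),h^l_{i'}(x^{(b)}))=\delta_{ii'}\,\Sigma^l(x^{(a)},x^{(b)})$. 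Since in both parameterizations the biases are i.i.d.\ $\mathcal N(0,\sigma_b^2)$ and independent of the weights (Table~\ref{tab:parameterization}), they only add the $\sigma_b^2\delta_{ii'}$ term and never interfere with Gaussianity or independence across neurons, so I suppress them. The base case $l=1$ is handled as in the Gaussian case (for a Gaussian first layer $h^1_i$ is exactly $\mathcal N(0,\Sigma^1)$ at every width).

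For the inductive step, condition on the $\sigma$-algebra generated by layers $\le l-1$. Writing the orthogonal weight matrix as $\omega^l=\sqrt{n_{l-1}}\,Q^l$, with $Q^l$ a uniform (Haar) orthonormal $n_{l-1}$-frame in $\mathbb R^{n_l}$, independent of everything earlier, the ntk layer equation collapses to $h^l_i(x^{(a)})=\sigma_w\,(Q^l x^{l-1}(x^{(a)}))_i$, i.e.\ the first $m$ coordinates of $Q^l$ applied to the $p$ fixed vectors $x^{l-1}(x^{(1)}),\dots,x^{l-1}(x^{(p)})\in\mathbb R^{n_l}$. The key lemma is: if $U$ is Haar on $O(N)$ and $w^{(1)},\dots,w^{(p)}\in\mathbb R^N$ are deterministic with $\tfrac1N\langle w^{(a)},w^{(b)}\rangle\to K_{ab}$, then $((Uw^{(a)})_i)_{i\le m,\,a\le p}$ converges in distribution to a centered Gaussian with $\Cov((Uw^{(a)})_i,(Uw^{(b)})_{i'})=\delta_{ii'}K_{ab}$. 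This is classical and can be shown either via the Gram--Schmidt/QR comparison of $U$ with $G/\sqrt N$ for a standard Gaussian matrix $G$ (whose triangular factor obeys $R/\sqrt N\to I$), or from the fact that a fixed $m\times N$ block of a Haar orthogonal matrix, rescaled by $\sqrt N$, is asymptotically i.i.d.\ standard Gaussian, so that $(Uw^{(a)})_i\approx \tfrac1{\sqrt N}\langle g_i,w^{(a)}\rangle$ with $g_i$ i.i.d.\ $\mathcal N(0,I_N)$.

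It remains to identify $K_{ab}$ in our application, with $N=n_l$ and $w^{(a)}=x^{l-1}(x^{(a)})$. Write $\tfrac1{n_l}\langle x^{l-1}(x^{(a)}),x^{l-1}(x^{(b)})\rangle=\tfrac{n_{l-1}}{n_l}\cdot\tfrac1{n_{l-1}}\sum_{j=1}^{n_{l-1}}\phi(h^{l-1}_j(x^{(a)}))\phi(h^{l-1}_j(x^{(b)}))$. By the induction hypothesis the $h^{l-1}_j$ are asymptotically i.i.d.\ $\mathcal N(0,\Sigma^{l-1})$, and $\phi$ Lipschitz forces all moments of $\phi(h^{l-1}_j(x^{(a)}))\phi(h^{l-1}_j(x^{(b)}))$ under control, so a law of large numbers sends the average to $\mathbb E_{f\sim\mathcal N(0,\Sigma^{l-1})}[\phi(f(x^{(a)}))\phi(f(x^{(b)}))]$; taking the hidden widths to infinity together so that $n_{l-1}/n_l\to1$ then gives $K_{ab}=\mathbb E_{f\sim\mathcal N(0,\Sigma^{l-1})}[\phi(f(x^{(a)}))\phi(f(x^{(b)}))]$. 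Restoring the bias yields $\Sigma^l(x^{(a)},x^{(b)})=\sigma_w^2 K_{ab}+\sigma_b^2$, exactly the stated recursion, and Gaussianity together with the diagonal-in-$i$ covariance gives the i.i.d.\ structure over neurons. The CNN statement follows by the same induction carrying the spatial index $\alpha$: conditioned on layer $l-1$, each $h^l_{i,\alpha}$ is again a coordinate of a Haar-orthogonal image of a fixed tensor, the filter's sum over $\beta\in\{-k,\dots,k\}$ producing the $\tfrac1{2k+1}\sum_\beta$ in $\Sigma^l_{\alpha,\alpha'}$, while the final fully-connected readout $f_i=\sum_{j,\alpha}W^L_{ij,\alpha}x^{L-1}_{j,\alpha}$ is treated as one more FCN-type layer summing over both $j$ and $\alpha$, giving $\Sigma^L$.

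\textbf{Main obstacle.} The delicate point, absent in the Gaussian case, is that conditioning on the previous layer no longer makes the pre-activations exactly Gaussian, so a genuine CLT must be run at \emph{every} layer; and because the ``data'' $x^{l-1}$ defining the relevant sphere is itself random and of growing dimension, one must carefully interchange the width limit with the conditional-to-marginal passage — e.g.\ through convergence of conditional characteristic functions plus bounded convergence, together with a uniform moment control of $\phi(h^{l-1}_j)$ supplied by the Lipschitz hypothesis. This is precisely the bookkeeping Matthews et al.\ \cite{matthews2018gaussian} carry out in the Gaussian setting and which must be redone here with the non-independent rows of an orthogonal matrix. A cleaner but essentially equivalent route is to couple the orthogonal network to a Gaussian one via $\tilde\omega^l=Q^l R^l$ and propagate the estimate $R^l/\sqrt{n_{l-1}}\to I$ through the $L$ layers and the Lipschitz nonlinearity.
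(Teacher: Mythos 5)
Your proposal reaches the same statement by a genuinely different organization of the argument. The paper follows Matthews et al.\ \cite{matthews2018gaussian} head-on: it forms finite linear projections of the layer-$l$ pre-activations, observes that invariance of Haar measure under column permutations makes the summands over the hidden index $j$ an exchangeable sequence, and then invokes an exchangeable-sequence CLT whose moment conditions (vanishing pair correlation, fourth-moment limit, third-moment growth) are verified with explicit second- and fourth-moment (Weingarten-type) formulas for Haar orthogonal matrices, plus a Lipschitz-envelope uniform-integrability argument to identify the limiting covariance recursion. You instead exploit the exact conditional structure: given layer $l-1$, the new pre-activations are coordinates of a Haar frame applied to fixed vectors, so no CLT over $j$ is needed at the new layer at all — Gaussianity comes from the classical Poincar\'e--Borel fact (or the QR coupling with a Gaussian matrix), and the whole burden is shifted to a weak law of large numbers for the empirical Gram matrix $\frac{1}{n}\sum_j\phi(h^{l-1}_j(x))\phi(h^{l-1}_j(x'))$ together with the conditional-to-marginal interchange via characteristic functions. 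That is a legitimate and arguably more transparent route for orthogonal weights, and your key lemma is correct (reduce to a uniform orthonormal $p$-frame; fixed blocks of $\sqrt{N}\,U$ are asymptotically i.i.d.\ Gaussian); the paper's route buys a single unified limit theorem at the price of heavier moment bookkeeping, while yours buys an exact conditional law at the price of a limit interchange.

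The one step you should not gloss over is the LLN itself: "the $h^{l-1}_j$ are asymptotically i.i.d., so a law of large numbers applies" does not follow from convergence of finite-dimensional marginals alone, since the average runs over a number of coordinates growing with the same index. You must strengthen the induction hypothesis to carry convergence in probability of these empirical second moments (equivalently, a variance bound for the Gram entries), and proving that variance vanishes at finite width again requires the Haar pair-moment formulas — under orthogonal initialization the summands over $j\neq j'$ are correlated at order $O(1/n)$, exactly the estimates the paper extracts from its Lemma on Haar moments — together with the Lipschitz envelope for uniform integrability. You flag this in your "main obstacle" paragraph, so it is an acknowledged piece of bookkeeping rather than a wrong turn, but it is the substantive content of the proof rather than routine. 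Two smaller points: the base case under orthogonal initialization is not exactly Gaussian at finite width (unlike the Gaussian case); it needs the same block-of-Haar lemma with $n_0$ fixed and $n_1\to\infty$. And your identification of the limit uses $n_{l-1}/n_l\to 1$, i.e.\ asymptotically equal hidden widths, which matches the paper's appendix assumption $n_1=\dots=n_L=n$ but should be stated.
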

Different from the independence property of Gaussian initialization, the entries of the orthogonal matrix are correlated. We use the Stein method and exchangeable sequence to overcome this difficulty and leave the detailed proof in the appendix. As shown by Theorem \ref{thm:output_limit}, neural networks with Gaussian and orthogonal initialization are in correspondence with an identical class of GPs.

\subsection{Neural Tangent Kernel at Initialization}

\begin{figure*}[t!]
\centering
  \centering
  \includegraphics[width=0.7\textwidth]{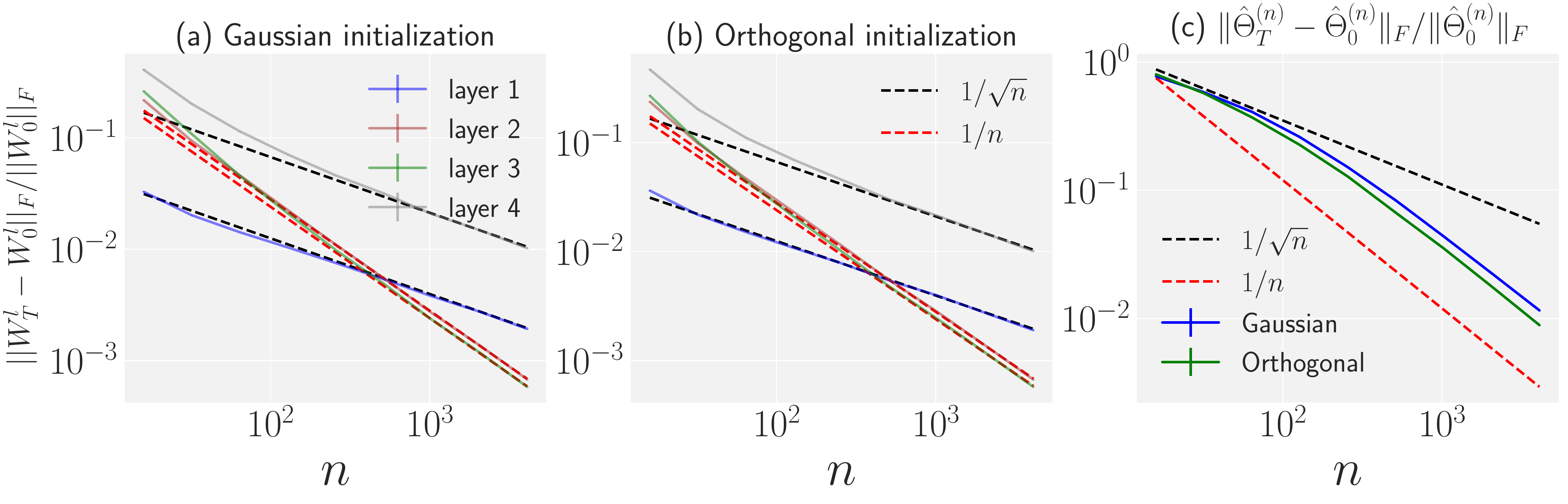}
 \caption{Changes of weights, empirical NTK on a three hidden layer Erf Network. Solid lines correspond to empirical simulation, and dotted lines are theoretical predictions, i.e., black dotted lines are $1/\sqrt{n}$ while red dotted lines are $1/n$. (a) Weight changes on the Gaussian initialized network. (b) Weight changes on the orthogonally initialized network. (c) NTK changes on networks with Gaussian and orthogonal initialization. }
 \label{fig:1}
\end{figure*}

According to \cite{jacot2018neural}, the NTK of a network with Gaussian weights converges in probability to a deterministic kernel in the infinite-width limit. We show that the NTK of an orthogonally initialized network is identical to the one with Gaussian weights in the infinite-width limit. 
\begin{thm}\label{thm:convergence_NTK_initialization}
Consider a FCN of the form (\ref{eq:fcn}) at orthogonal initialization, with a Lipschitz nonlinearity $\phi$, and in the limit as the layers width $n_1, ..., n_{L-1} \to \infty$, the NTK $\Theta^L_0(x,x')$, converges in probability to a deterministic limiting kernel: $$\Theta_0^{L}(x,x') \to \Theta^{L}_\infty(x,x') \otimes {\bf I}_{n_L \times n_L}.$$
The scalar kernel $\Theta^{L}_\infty(x,x')$ is defined recursively by
\begin{align*}
    \Theta^{1}_\infty(x, x') &= \Sigma^{1}(x, x') \\
    \Theta^{l}_\infty(x, x') &= \sigma^2_w  \dot{\Sigma}^{l}(x, x') \Theta^{l-1}_\infty(x, x')  + \Sigma^{l}(x, x'),
\end{align*}
where
\[
	\dot{\Sigma}^{l}\left(x,x'\right)=
	\mathbb{E}_{f\sim\mathcal{N}\left(0,\Sigma^{\left(l-1\right)}\right)}\left[\dot{\phi}\left(f\left(x\right)\right)\dot{\phi}\left(f\left(x'\right)\right)\right],
\]

For a CNN of the form (\ref{eq:cnn}) at orthogonal initialization, and in the limit as $n_1, ..., n_{L-1} \to \infty$, the NTK $\Theta^{L}_0(x,x')$, converges in probability to a deterministic limiting kernel: $$\Theta^{L}_0(x,x') \to \Theta^{L}_\infty(x,x') \otimes {\bf I}_{n_L \times n_L}.$$
The scalar kernel $\Theta^{L}_\infty(x,x')$ is given recursively by
\begin{align*}
   & {\Theta_{\alpha,\alpha'}^{1}}_\infty(x, x') = \Sigma_{\alpha,\alpha'}^{1}(x, x') \\
   & {\Theta_{\alpha,\alpha'}^{l}}_\infty(x, x') = \frac{\sigma^2_w}{(2k+1)} \sum_\beta \big[ \dot{\Sigma}_{\alpha+\beta,\alpha'+\beta}^{l}(x, x')\\
                                                &~~~~~~~~~~~~~ {\Theta_{\alpha+\beta,\alpha'+\beta}^{l-1}}_\infty(x, x')  + \Sigma_{\alpha+\beta,\alpha'+\beta}^{l}(x, x') \big] \\
   & \Theta^L_\infty(x,x')  = \sum_\alpha \delta_{\alpha,\alpha'} \big[  \dot{\Sigma}_{\alpha,\alpha'}^{L}(x, x') 
                                                 {\Theta_{\alpha,\alpha'}^{L-1}}_\infty(x, x') \\
                                            &~~~~~~~~~~~~~~~ + \Sigma_{\alpha,\alpha'}^{L}(x, x')   \big]
\end{align*}
\end{thm}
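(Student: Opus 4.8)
The plan is to adapt the induction-on-layers argument of Jacot et al.\ \cite{jacot2018neural} to the orthogonal setting, leaning crucially on Theorem~\ref{thm:output_limit}, which already tells us that the pre-activations $h^l_i$ (resp.\ $h^l_{i,\alpha}$) converge to the same Gaussian processes as in the Gaussian-weight case. The NTK is a sum of contributions, one per layer of parameters: writing $\Theta^L_0 = \sum_{l=1}^{L} \nabla_{\theta^l} f \,\nabla_{\theta^l} f^{\,T}$ where $\theta^l = \{W^l, b^l\}$, each term factorizes (by the chain rule through the network) into an ``input-side'' factor built from the forward-pass activities $x^{l-1}$ and a ``back-propagated'' factor built from the Jacobian $\partial f / \partial h^l$. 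First I would establish the base case $\Theta^1_\infty = \Sigma^1$ directly, then set up the inductive step: assuming $\Theta^{l-1}_0 \to \Theta^{l-1}_\infty \otimes \mathbf{I}$ in probability, show the same for layer $l$. The new layer contributes a ``diagonal'' term $\tfrac{1}{n_{l-1}}\sum_j (x_j^{l-1})(x_j^{l-1}) \to \Sigma^l$ (for the weights) plus the bias term $\sigma_b^2$, and the back-propagation through layer $l$ multiplies the inherited kernel by $\sigma_w^2 \dot\Sigma^l$, where $\dot\Sigma^l$ comes from averaging $\dot\phi(h^l(x))\dot\phi(h^l(x'))$ against the limiting Gaussian of Theorem~\ref{thm:output_limit}. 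Summing the recursion yields the stated formula; the CNN case is identical with the extra spatial sums over $\beta$ and the final fully-connected layer handled as the stated $\sum_\alpha \delta_{\alpha,\alpha'}$ contraction.

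The two places where orthogonality genuinely changes the argument — and where I expect the real work to be — are these. First, in the Gaussian case the law of large numbers $\tfrac{1}{n_{l-1}}\sum_j x_j^{l-1}(x) x_j^{l-1}(x') \to \Sigma^l(x,x')$ uses independence of the rows of $W^l$ (hence of the coordinates $h^l_j$); for orthogonal $W^l$ the columns are exchangeable but correlated, so I would invoke the Stein-method / exchangeable-sequence machinery already used for Theorem~\ref{thm:output_limit} (the entries of an $n\times n$ orthogonal matrix are approximately independent $\mathcal{N}(0,1/n)$ with $O(1/n)$ corrections — a standard consequence of the representation of Haar measure) to show the empirical covariance still concentrates on $\Sigma^l$, with vanishing fluctuations. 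Second, the back-propagation factor involves the transpose $(W^l)^T$ acting on the incoming gradient; the delicate point — the ``gradient independence'' issue flagged in the related-work discussion of \cite{yang2019scaling} — is that one must justify treating the $W^l$ appearing in the forward pass and in $\partial f/\partial x^{l-1} = (W^l)^T D^l \cdots$ as effectively independent when computing the limit. For orthogonal matrices $(W^l)^T W^l = \sigma_w^2 \mathbf{I}$ exactly, which if anything makes the relevant second-moment computation $\tfrac{1}{n}\mathbb{E}[(W^l)^T A W^l]$ cleaner, but the cross-terms between forward and backward copies of $W^l$ must be shown to be $o(1)$; this is the main obstacle and is where I would spend most of the effort, again using the moment estimates for Haar-orthogonal matrices.

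Once those two lemmas are in place, the rest is bookkeeping: propagate the ``$\to \cdot \otimes \mathbf{I}_{n_L}$'' structure (which holds because distinct output neurons $i$ see independent final-layer weight rows, and the orthogonality constraint couples rows only at $O(1/n)$), assemble the per-layer contributions into the recursion $\Theta^l_\infty = \sigma_w^2 \dot\Sigma^l \,\Theta^{l-1}_\infty + \Sigma^l$, and repeat verbatim for the CNN with the spatial index carried along, noting that the periodic-boundary convolution is again a sum of i.i.d.-like blocks in the channel index so the same concentration applies spatially-locally. I would present the FCN argument in full and remark that the CNN case follows by the same steps with the indicated notational changes, deferring the detailed orthogonal-matrix moment estimates to the appendix where the proof of Theorem~\ref{thm:output_limit} lives.
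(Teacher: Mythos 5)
Your proposal takes essentially the same route as the paper's proof: a layerwise induction that splits the layer-$l$ NTK into the current-layer contribution (shown to concentrate on $\Sigma^l(x,x')+\sigma_b^2$-type terms) and the back-propagated contribution (shown to converge to $\sigma_w^2\,\dot\Sigma^l\,\Theta^{l-1}_\infty$), with concentration obtained from the exact second- and fourth-moment formulas for Haar-orthogonal matrices (the paper's Lemma \ref{lem:expect}) plus Chebyshev, the GP limit of Theorem \ref{thm:output_limit} supplying $\Sigma^l$ and $\dot\Sigma^l$, and the CNN case handled by carrying the spatial index and taking the trace in the final fully-connected layer. One small remark: the forward--backward coupling you single out as the main obstacle largely dissolves in this inductive decomposition, since $W^l$ is exactly independent of $h^{l-1}$ and of $\Theta^{l-1}$; the real effort in the paper is the fourth-moment variance bound for $\sum_{i,i'}\Theta^{l-1}_{ii'}\dot\phi(h^{l-1}_i(x))\dot\phi(h^{l-1}_{i'}(x'))W^l_{ik}W^l_{i'k'}$ via a case analysis over coinciding indices, which your plan correctly subsumes under the Haar moment estimates.
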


\begin{rem}
Since the Lipschitz function is differentiable besides a measure zero set, then taking the expectation would not destroy the whole statement, which allows for the ReLU activation. 
\end{rem}
 	
In general, the NTK of CNNs can be computed recursively in a similar manner to the NTK for FCNs. However, the NTK of CNNs propagate differently by averaging over the NTKs regarding the neuron location of the previous layer. According to Theorem \ref{thm:convergence_NTK_initialization}, the NTK of an orthogonally initialized network converges to an identical kernel as Gaussian initialization. This suggests these two NTKs are equivalent when the network structure (depth of $L$, filter size of $2k+1$, and activation of $\phi$) and choice of hyper-parameters ($\sigma^2_w$ and $\sigma^2_b$) are the same in the infinite-width limit.


\begin{figure*}[t!]
\centering
  \centering
  \includegraphics[width=0.8\textwidth]{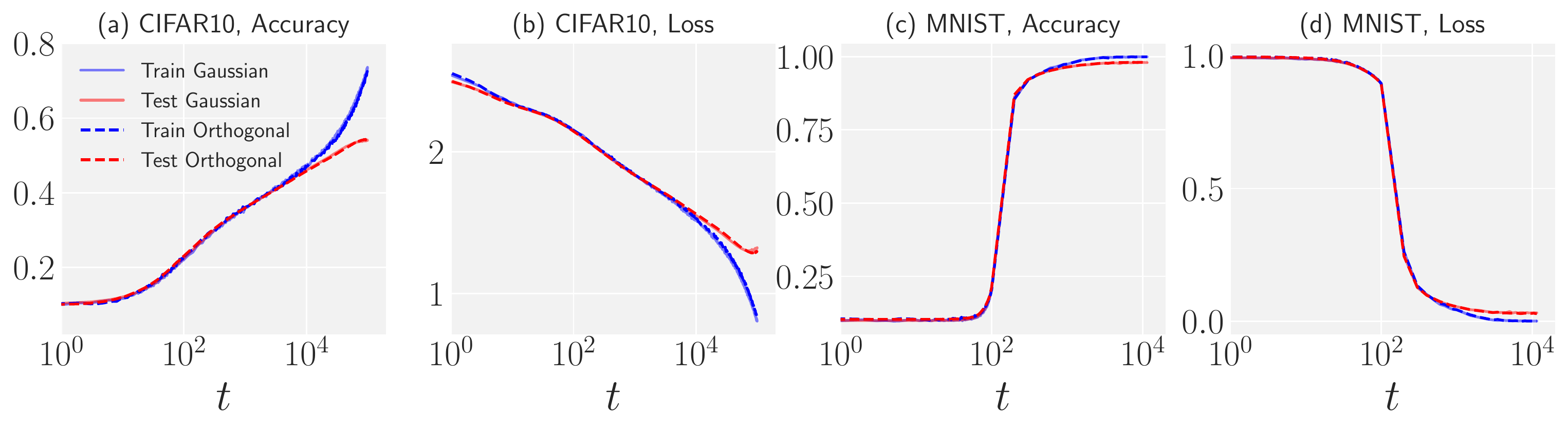}
 \caption{ Orthogonally initialized networks behave similarly to the networks with Gaussian initialization in the NTK regime. (a,b) We adopt the network architecture of depth of $L=5$, width of $n = 800$, activation of tanh function, with $\sigma^2_w = 2.0$, and $\sigma^2_b =0.1$. The networks are trained by SGD with a small learning rate of $\eta = 10^{-3}$ for $T=10^5$ steps with a batch size of $10^3$ on cross-entropy loss on full CIFAR-10. (c,d)  We adopt the network architecture of depth of $L=9$, width of $n = 1600$, activation of ReLU function, with $\sigma^2_w = 2.0$, and $\sigma^2_b =0.1$. The networks are trained by PMSProp with a small learning rate of $\eta = 10^{-5}$ for $T=1.2 \times 10^4$ steps with a batch size of $10^3$ on MSE loss on MNIST. While the solid lines stand for Gaussian weights, dotted lines represent orthogonal initialization.}
 \label{fig:2}
\end{figure*}

\subsection{Neural Tangent Kernel During Training}

It is shown that the NTK of a network with Gaussian initialization stays asymptotically constant during gradient descent training in the infinite-width limit, providing a guarantee for loss convergence \cite{jacot2018neural}. We find that the NTK of orthogonally initialized networks have the same property, which is demonstrated below in an asymptotic way,
 \begin{thm}\label{thm:main}
            Assume that  $\lambda_{\rm min}(\Theta_\infty) >0$ and $\eta_{\rm critical} = \frac{\lambda_{\rm min}(\Theta_\infty)+\lambda_{\rm max}(\Theta_\infty)}{2}$. Let $n = n_1, ..., n_{L-1}$ be the width of hidden layers. Consider a FCN of the form (\ref{eq:fcn}) at orthogonal initialization, trained by gradient descent with learning rate $\eta < \eta_{\rm critical}$ (or gradient flow). For every input $x\in \mathbb R^{n_0}$ with $\|x\|_2\leq 1$, with probability arbitrarily close to 1,  
           \begin{equation}
            \sup_{t\geq 0}\frac{\left\|\theta_t -\theta_0\right\|_2}{\sqrt n},
            \,\, 
            \sup_{t\geq 0}\left\|\hat \Theta_t - \hat \Theta_0 \right\|_F = O(n^{-\frac 1 2}), \,\, {\rm as }\quad n\to \infty\,. 
           \end{equation}
 where $\hat \Theta_t$  are empirical kernels of networks with finite width.
 
 For a CNN of the form (\ref{eq:cnn}) at orthogonal initialization, trained by gradient descent with learning rate $\eta < \eta_{\rm critical}$ (or gradient flow), for every input $x\in \mathbb R^{n_0}$ with $\|x\|_2\leq 1$, and filter relative spatial location $\beta \in  \{-k,\dots, 0, \dots, k \}$, with probability arbitrarily close to 1,
           \begin{equation}
            \sup_{t\geq 0}\frac{\left\|\theta_{\beta,t} -\theta_{\beta,0} \right\|_2}{\sqrt n},
            \,\, 
            \sup_{t\geq 0}\left\| \hat \Theta_t -   \hat \Theta_0 \right\|_F = O(n^{-\frac 1 2}).
           \end{equation}
 \end{thm}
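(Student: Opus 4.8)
The plan is to adapt the linearisation argument of Lee et al.\ \cite{lee2019wide}. Write $J(\theta)=\nabla_\theta f(\theta,X)$, so that the empirical kernel is $\hat\Theta_t = J(\theta_t)J(\theta_t)^\top$, and abbreviate $\lambda_0=\lambda_{\min}(\Theta_\infty)$. The whole argument rests on two facts holding with probability tending to $1$ as $n\to\infty$: \textbf{(i)} a local Lipschitz estimate for the Jacobian --- there are a constant $K$, depending on $L,\sigma_w,\sigma_b,\phi$ but not on $n$, and a radius $R$, such that $\|J(\theta)\|_F\le K$ and $\|J(\theta)-J(\tilde\theta)\|_F\le K n^{-1/2}\|\theta-\tilde\theta\|_2$ for all $\theta,\tilde\theta$ in the ball $B(\theta_0,R)$ around the orthogonal initialisation; and \textbf{(ii)} positive-definiteness at initialisation, $\lambda_{\min}(\hat\Theta_0)\ge \tfrac23\lambda_0>0$, which follows directly from Theorem~\ref{thm:convergence_NTK_initialization} (convergence in probability of $\hat\Theta_0$ to $\Theta_\infty$) together with Weyl's inequality. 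Granting \textbf{(i)}--\textbf{(ii)} and the sub-critical step size $\eta<\eta_{\rm critical}$, the trajectory is controlled by a standard bootstrap: as long as $\theta_t\in B(\theta_0,R)$, the squared-loss residual $f_t(\theta,X)-Y$ contracts geometrically (with a rate bounded away from $1$ for gradient descent at a sub-critical step, and exponentially for gradient flow), so the increments satisfy $\|\theta_{t+1}-\theta_t\|_2\le \eta\,\|J(\theta_t)\|_{\rm op}\,\|f_t-Y\|_2\le \eta K\|f_t-Y\|_2$ and form a summable geometric series of total size $O(1)$ (here $\|f_0-Y\|_2=O(1)$ since the outputs are $O(1)$ by Theorem~\ref{thm:output_limit} under $\|x\|_2\le 1$). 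Hence $\sup_t\|\theta_t-\theta_0\|_2=O(1)$, i.e.\ $\sup_t\|\theta_t-\theta_0\|_2/\sqrt n=O(n^{-1/2})$; taking $R$ fixed but large enough closes the bootstrap so the trajectory never leaves $B(\theta_0,R)$ once $n$ is large. Finally $\|\hat\Theta_t-\hat\Theta_0\|_F\le(\|J(\theta_t)\|_F+\|J(\theta_0)\|_F)\,\|J(\theta_t)-J(\theta_0)\|_F\le 2K^2 n^{-1/2}\|\theta_t-\theta_0\|_2=O(n^{-1/2})$, uniformly in $t$.

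The only genuinely new part relative to the Gaussian analysis is establishing \textbf{(i)} for weights drawn from the scaled orthogonal group. Here the constraint $(\omega^l)^\top\omega^l=n_{l-1}{\bf I}$ is, for part of the argument, an asset: the operator norm $\|\omega^l\|_{\rm op}=\sqrt{n_{l-1}}$ is \emph{deterministic}, which removes one layer of concentration needed in the Gaussian case and makes the forward bounds $\|x^l(x)\|_2=O(\sqrt n)$ and the backward bounds $\|\partial f_i/\partial h^l\|_2=O(1)$ propagate cleanly, as does the $n^{-1/2}$ reduction a parameter perturbation suffers in the backward pass (each one-step backward operator $\frac{\sigma_w}{\sqrt{n_l}}\omega^{l+1}D^l$ changes by $O(n^{-1/2}\|\delta\theta\|)$ in operator norm), which is exactly what produces the $n^{-1/2}$ in the Lipschitz constant. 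What requires work is the concentration of the quantities that genuinely fluctuate --- the normalised post-activation inner products $\tfrac1{n_l}x^l(x)^\top x^l(x')$, the normalised backprop correlations $\tfrac1{n_l}\langle \partial f_i/\partial h^l(x),\partial f_j/\partial h^l(x')\rangle$, the diagonal derivative matrices $D^l$, and the analogous objects for perturbed parameters in $B(\theta_0,R)$. These can be controlled either through concentration of measure on the orthogonal group (L\'evy / Gromov--Milman: a $1$-Lipschitz function of a Haar element of $O(m)$ is sub-Gaussian at scale $m^{-1/2}$), applied layer by layer to the maps $\omega^l\mapsto x^l$ and $\omega^l\mapsto\partial f_i/\partial h^l$, or by extending to the perturbed network the Stein's-method and exchangeable-sequence estimates already used to prove Theorems~\ref{thm:output_limit} and~\ref{thm:convergence_NTK_initialization}; the dependence among the entries of $\omega^l$ is precisely what those tools handle.

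For the CNN of the form~(\ref{eq:cnn}) the structure is identical: one carries the spatial index $\alpha$ and filter offset $\beta$ through every estimate, replaces each weight matrix by the associated block-convolution operator --- whose operator norm is again controlled exactly because every filter slice obeys $(\omega^l_\alpha)^\top\omega^l_\alpha=n_{l-1}{\bf I}$ --- and treats the terminal fully-connected read-out $f_i(x)=\sum_{j,\alpha}W^L_{ij,\alpha}x^{L-1}_{j,\alpha}$ exactly as in Theorem~\ref{thm:convergence_NTK_initialization}. Weight sharing multiplies by at most $2k+1$ the number of paths contributing to each Jacobian entry, which is absorbed into the constant $K$; no new phenomenon appears.

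The main obstacle I expect is precisely the high-probability local Lipschitz bound \textbf{(i)} \emph{with} the $n^{-1/2}$ factor --- rather than a bare $O(1)$ factor --- holding uniformly over the fixed-radius ball $B(\theta_0,R)$ and uniformly in $n$. One must verify that the $1/\sqrt n$ gain the backward pass enjoys on parameter perturbations genuinely survives the orthogonal constraint, and that composing the (fixed number of) layer maps does not let the effective Lipschitz constants grow uncontrollably, all while the entries of each $\omega^l$ are correlated, so that coordinatewise Gaussian-concentration arguments must be replaced by manifold concentration or exchangeability. Once that estimate is in hand, the bootstrap and the passage from $\theta$-control to $\hat\Theta$-control are routine.
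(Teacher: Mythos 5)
Your proposal is correct and follows essentially the same route as the paper: the paper likewise reduces Theorem~\ref{thm:main} to the bootstrap of Theorems G.1--G.2 in \cite{lee2019wide}, with the only genuinely new ingredient being the local Lipschitzness of $J(\theta)$ under orthogonal initialization, proved exactly as you suggest by exploiting the deterministic operator-norm bound $\|W^l\|_{op}\leq 3\sigma_w$ from the orthogonality constraint (together with a rescaling of the non-square input layer so the Gaussian-case constants are recovered) and Stein's-method/exchangeability-type control for the fluctuating quantities. The one divergence is the CNN case: rather than carrying the Jacobian-Lipschitz argument through the convolutional structure as you propose, the paper invokes the Hessian-norm framework of \cite{liu2020linearity} and only replaces its Lemma F.4 by a Stein's-method (total-variation) bound showing the pre-activations are $\tilde O(1)$ under the orthogonal ensemble; both routes work, the paper's simply reuses existing machinery.
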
 
   
\cite{jacot2018neural} proved the stability of NTK under the assumption of global convergence of neural networks, while \cite{lee2019wide} provided a self-contained proof of both global convergence and stability of NTK simultaneously. In this work, we refer to the proof strategy from \cite{lee2019wide,liu2020linearity} and extend it to the orthogonal case, as shown in the appendix.

To certificate this theorem empirically, we adopt three hidden layers Erf networks trained by gradient descent with learning rate $\eta =1.0$ on a subset of the MNIST dataset of $D = 20$. We measure changes of weights, empirical NTK after $T = 2^{15}$ steps of gradient descent for varying width at both Gaussian and orthogonal initialization. Figure \ref{fig:1}(a,b) show that the relative change in the first and last layer weights scales as $1/\sqrt{n}$ while second and third layer weights scale as $1/n$ with Gaussian and orthogonal weights respectively. In Figure \ref{fig:2}(c), we observe the change in NTK is upper bounded by $O(1/\sqrt{n})$ but is closer to $O(1/n)$ for both Gaussian and orthogonal initialization. The discrepancy between theoretical bound ($O(n^{-1/2})$) and experimental observation ($O(n^{-1})$) has been solved in \cite{huang2019dynamics}, where they prove that relative change of empirical NTK of Gaussian initialized networks is bounded by $O(1/n)$. Without loss of generality, we infer that the proof framework is suitable for orthogonal weights.

\begin{figure*}[t!]
\centering
  \centering
  \includegraphics[width=0.85\textwidth]{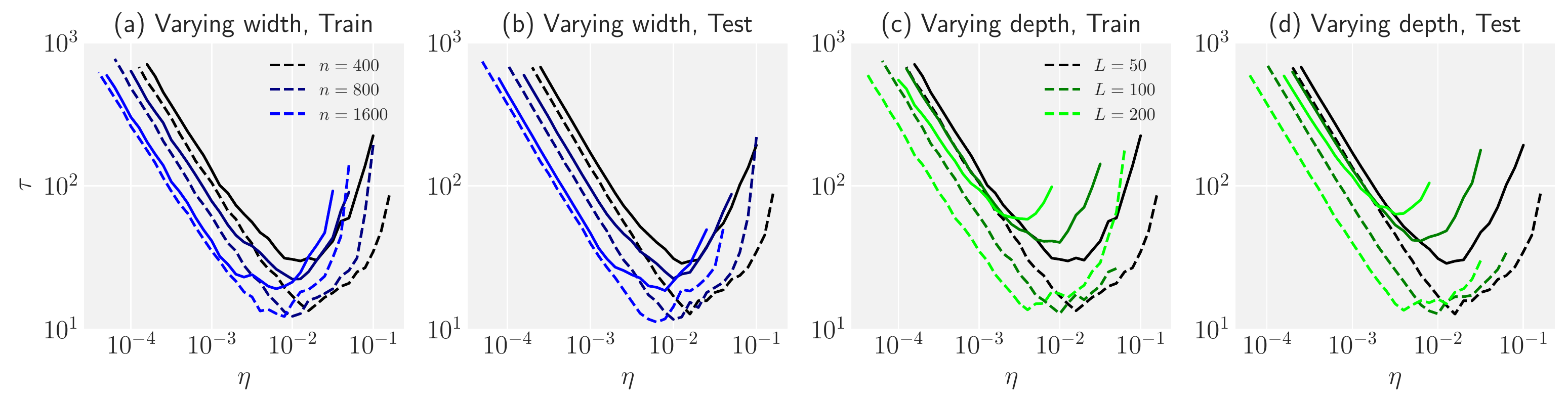}
 \caption{The steps $\tau$ as a function of learning rate $\eta$ of two lines of networks on both train and test dataset. The results of orthogonal networks are marked by dotted lines while those of Gaussian initialization are plotted by solid lines. Networks with varying width, i.e. $n = 400, 800,$ and $1600$, on (a) train set and (b) test set;  Networks with varying depth, i.e. $L = 50, 100,$ and $200$, on (c) train set and (d) test set. Different colors represent the corresponding width and depth. While curves of orthogonal initialization are lower than those of Gaussian initialization in the small learning rate phase, the differences become more significant in the large learning rate. Besides, the greater the depth of the network, the more significant the difference in performance between orthogonal and Gaussian initialization.}
 \label{fig:3}
\end{figure*}

\begin{figure}[t!]
\centering
  \centering
  \includegraphics[width=0.75\textwidth]{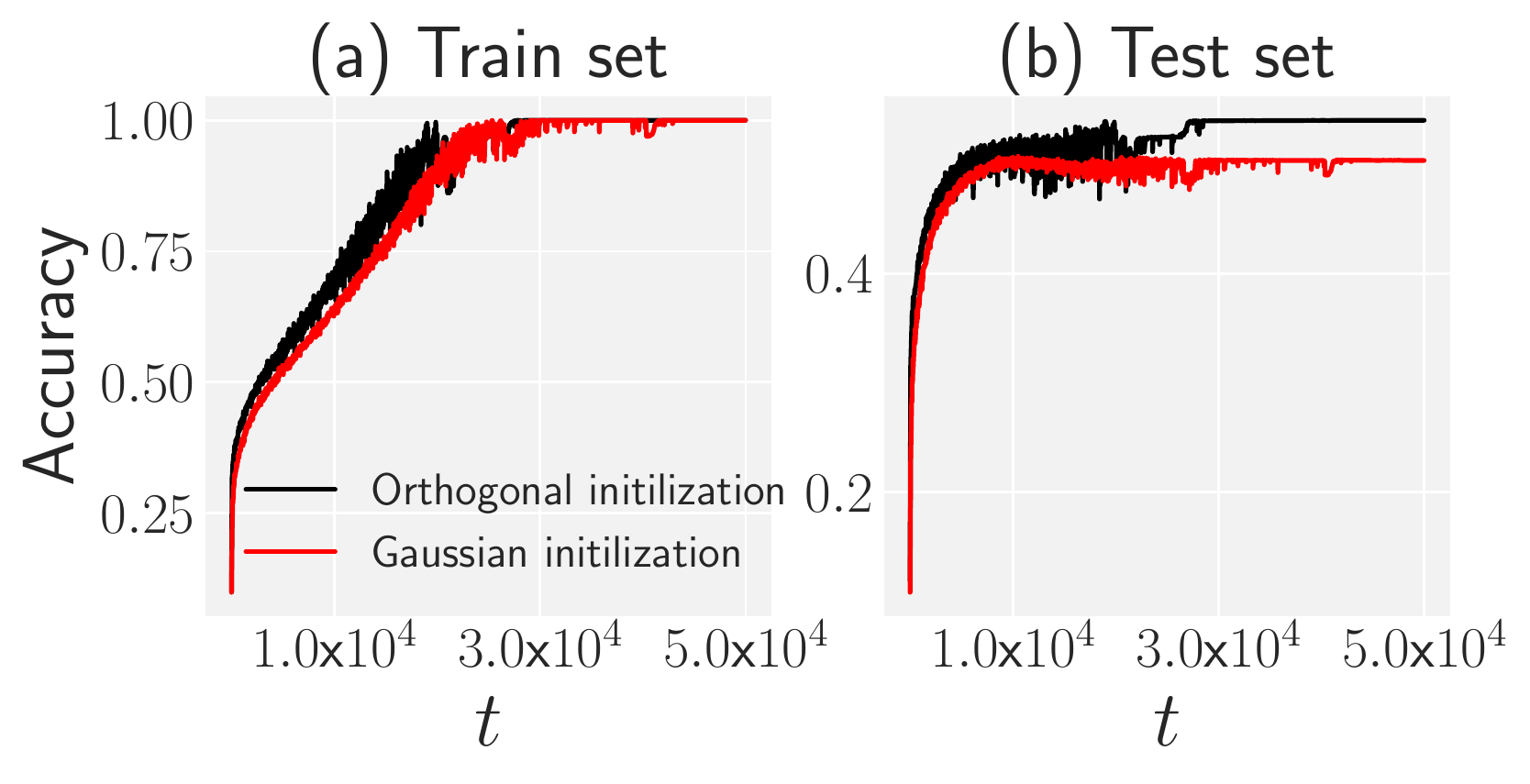}
 \caption{Learning dynamics measured by the optimization and generalization accuracy on train set and test set, for networks of depth $L=100$ and width $n=400$. We additionally average our results over 30 different instantiations of the network to reduce noise. Black curves are the results of orthogonal initialization, and red curves are performances of Gaussian initialization. (a) The training speed of an orthogonally initialized network is faster than that of a Gaussian initialized network. (b) On the test set, compared to the network with Gaussian initialization, the orthogonally initialized network not only learns faster but ultimately converges to a higher generalization performance.}
 \label{fig:4}
\end{figure}

\section{Numerical Experiments}

Our theoretical result indicates that ultra-wide networks with Gaussian and orthogonal initialization should have the same convergence rate during the gradient descent training. This means that two different initializations have similar training dynamics for loss and accuracy function in the NTK regime. Thus, it is now for us to verify our theories in practice. To this end, we perform a series of experiments on MNIST and CIFAR10 dataset. All the experiments are performed with the standard parameterization with TensorFlow.

We compare the train and test loss and accuracy with two different initialization, i.e., Gaussian and orthogonal weights using $D=256$ samples on full CIFAR-10 and MNIST dataset, as summarized in Figure \ref{fig:2}. To reduce noise, we averaged the results over 30 different instantiations of the networks. Figures \ref{fig:2}(a,b) show the results of the experiments on networks of depth $L=5$, width $n=800$, and activation tanh function, using SGD optimizer with a small learning rate of $\eta = 10^{-3}$ for $T=10^5$ steps on CIFAR-10 dataset. Figure \ref{fig:2}(c)(d) display the results on networks of depth $L=9$, width $n=1600$, and activation ReLU function, using PMSProp \cite{hinton2012neural} optimizer with a small learning rate of $\eta = 10^{-5}$ for $T=1.2 \times 10^4$ steps on MNIST. In all cases, we see an excellent agreement between the training dynamics of the two initialization, which is consistent with our theoretical finding (Theorem \ref{thm:main}).



Having confirmed the consistency between training speed of networks with Gaussian and orthogonal initialization in the NTK regime, our primary interest is to find when orthogonal initialization accelerates the training speed for nonlinear networks. We need to go beyond the NTK regime and experiment with an additional requirement for hyper-parameters according to the evidence that orthogonal initialization increases learning speeds when the variance of weights and biases is set to achieve a liner regime in nonlinear activation \cite{pennington2017resurrecting}. 

Following \cite{pennington2017resurrecting}, we set $\sigma^2_w = 1.05$, and $\sigma^2_b = 2.01 \times 10^{-5}$, and $\phi(x) = {\rm tanh}(x)$. We then vary the width of network in one set of experiments as $n=400,800$ and $1600$ when $L=50$, and the depth in another as $L=50,100$ and $200$, when $n=400$. All networks are trained by SGD optimizer on CIFAR-10 dataset. To evaluate the relationship between the learning rate and training speed, we select a threshold accuracy of $p = 0.25$ and measure the first step $\tau$ when accuracy exceeds $p$. Figure \ref{fig:3} shows the steps of $\tau$ as a function of the learning rate of $\eta$ for both the training and testing sets.

The results in Figure \ref{fig:3} suggest a more quantitative analysis of the learning process until convergence. We train networks listed in Figure \ref{fig:3} for $5 \times 10^4$ steps with a certain learning rate. We show the results of a certain network of depth $L=100$ and width $n=400$ trained with a learning rate $\eta = 0.01$ as a typical example in Figure \ref{fig:4}. The results of other network structures can be found in the appendix. It is shown that the training speed of orthogonally initialized networks is faster than that of Gaussian initialized networks {\it outside} the NTK regime. At the same time, orthogonally initialized networks can finally obtain a higher generalization result.

We draw two main conclusions from these experiments. First, orthogonal initialization results in faster training speeds and better generalization than Gaussian initialization in the large learning rate phase. It was shown that the large learning rate phase has many different properties from the small learning rate phase \cite{lewkowycz2020large,li2019towards}. Our finding can be seen as another effect in the large learning rate phase. Second, given the constant width, the greater the depth of the network, the more significant the difference in performance between orthogonal and Gaussian initialization. This phenomenon is consistent with the theoretical result observed in deep linear networks. It was found that the width needed for efficient convergence to a global minimum with orthogonal initialization is independent of the depth. In contrast, the width needed for efficient convergence with Gaussian initialization scales proportionally in depth \cite{hu2020provable}.

\section{Conclusion}
	
This study on the neural tangent kernel of wide and nonlinear networks with orthogonal initialization has proven, theoretically and empirically, that the NTK of an orthogonally-initialized network across both FCN and CNN converges to the same deterministic kernel of a network initialized from Gaussian weights in the finite-width limit. We find that with an infinite-width network and a gradient descent (gradient flow) training scheme, the NTK of an orthogonally initialized network does not change during training. Further, it has the same order convergence rate from a finite to an infinite width limit as that of a Gaussian initialized network. Our theoretical results suggest that the dynamics of wide networks with orthogonal initialization behave similarly to that of Gaussian networks with a small learning rate verified by experiments. This observation implies that orthogonal initialization is only effective when not in the lazy (NTK) regime. And it is consistent with the fact that the infinite-width analysis does not explain the practically observed power of deep learning \cite{arora2019exact,chizat2019lazy}. Last, we find that orthogonal networks can outperform Gaussian networks in the large learning rate and depth on both train and test sets. 
	
\setcounter{prop}{0}
\setcounter{thm}{0}
\setcounter{lem}{0}
\setcounter{rem}{0}

\appendix
\section{Appendix}
This appendix is dedicated to proving the key results of this paper, namely Theorem \ref{thm:output_limit}, Theorem \ref{thm:convergence_NTK_initialization}, and Theorem \ref{thm:main} based on a series of lemmas, which describe the asymptotics of neural networks with orthogonal weights at initialization and during training. We prove the Gaussian process behavior of the output function, the limiting deterministic kernel for the orthogonal initialization, and its stability during training in the first three sections. Besides, we provide more experiments in the NTK regime and outside the NTK regime in the last section.

\subsection{NNGP at Initialization}

Throughout this section, we assume that $n_1 = n_2 = \cdots = n_L = n$. That is, $\{W_{ij}\}_{n\times n}$ is an orthogonal mapping. We first cite the following lemma from \cite{chatterjee2007multivariate}, which describes how the post-activations of one-layer transform by multiplying a random orthogonal matrix.
\begin{lem}\label{lem:haar}
Let $(W_{ij})_{n\times n}$ be an orthogonal matrix randomly sampled by the Haar measure of orthogonal matirx. Let $B$ be a $n \times n$ matrix s.t $Tr(BB^T) = n$. Then $Tr(BW)$ converges to a standard Gaussian distribution as the size of the matrix $n$ tends to infinity.
\end{lem}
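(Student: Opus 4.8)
The plan is to prove the statement with Stein's method, using the infinitesimal version of the exchangeable-pair construction --- equivalently, the generator of Haar-stationary Brownian motion on the orthogonal group $O(n)$ (this is in the spirit of Chatterjee's approach). Write $X_n := \Tr(BW)$. Since Haar measure on $O(n)$ satisfies $\mathbb{E}[W_{ij}]=0$ and $\mathbb{E}[W_{ij}W_{kl}]=\tfrac1n\delta_{ik}\delta_{jl}$, the first two moments come for free: $\mathbb{E}[X_n]=0$ and $\Var(X_n)=\tfrac1n\sum_{i,j}B_{ij}^2=\tfrac1n\Tr(BB^T)=1$, so the candidate limit is exactly $\mathcal{N}(0,1)$, and this is the only place the normalization $\Tr(BB^T)=n$ enters the first-order analysis.

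The structural fact that makes the method work is that $X_n$ is an \emph{exact eigenfunction} of the Laplace--Beltrami operator $\mathcal{L}$ of $O(n)$ (the Brownian generator, normalized as $\mathcal{L}=\tfrac1n\sum_{j<k}X_{jk}^2$ with $X_{jk}:W\mapsto W(E_{jk}-E_{kj})$). Indeed $X_{jk}X_n=(BW)_{kj}-(BW)_{jk}$, and summing the second derivatives gives $\mathcal{L}X_n=-\tfrac{n-1}{n}X_n$, the eigenvalue $\lambda_n=\tfrac{n-1}{n}\to 1$ being \emph{independent of $B$}. The carr\'e du champ is then
\[
\Gamma(X_n,X_n)=\tfrac1n\!\sum_{j<k}\!\big((BW)_{kj}-(BW)_{jk}\big)^2=\tfrac1n\big(\|BW\|_F^2-\Tr((BW)^2)\big)=1-\tfrac1n\Tr\!\big((BW)^2\big),
\]
where the last step uses $WW^T=I$ and $\Tr(BB^T)=n$; in particular its mean equals the eigenvalue, $\mathbb{E}[\Gamma(X_n,X_n)]=\lambda_n\Var(X_n)=\tfrac{n-1}{n}$.

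Given these two facts, the conclusion is the standard generator form of Stein's method: for every $C^1$ test function $f$ with bounded derivative,
\[
\mathbb{E}\big[f'(X_n)-X_nf(X_n)\big]=\tfrac1{\lambda_n}\,\mathbb{E}\!\big[f'(X_n)\big(\lambda_n-\Gamma(X_n,X_n)\big)\big],
\]
hence $\big|\mathbb{E}[f'(X_n)-X_nf(X_n)]\big|\le\tfrac{\|f'\|_\infty}{\lambda_n}\,\mathbb{E}\big|\Gamma(X_n,X_n)-\mathbb{E}\Gamma(X_n,X_n)\big|\le\tfrac{\|f'\|_\infty}{\lambda_n}\sqrt{\Var\Gamma(X_n,X_n)}$; Stein's characterization of $\mathcal{N}(0,1)$ then gives $X_n\Rightarrow\mathcal{N}(0,1)$ (with a Wasserstein rate of order $\sqrt{\Var\Gamma(X_n,X_n)}$). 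So the whole proof reduces to one estimate, and this is the main obstacle:
\[
\Var\!\Big(\tfrac1n\Tr\big((BW)^2\big)\Big)\longrightarrow 0\qquad\text{uniformly over }\{B:\Tr(BB^T)=n\}.
\]

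To dispatch this I would avoid expanding fourth Haar moments by hand (the Weingarten formulas give $\Var(\Tr((BW)^2))=O(1)$ directly, but the bookkeeping is unpleasant) and instead invoke concentration of measure on $O(n)$. The map $g:W\mapsto\tfrac1n\Tr((BW)^2)$ has Fr\'echet derivative $\nabla g(W)=\tfrac2n B^TW^TB^T$ in the Frobenius inner product, of norm at most $\tfrac2n\|B\|_{\mathrm{op}}\|B\|_F\le\tfrac2n\big(\Tr(BB^T)\big)^{1/2}\big(\Tr(BB^T)\big)^{1/2}=2$, so $g$ is $2$-Lipschitz; the sub-Gaussian concentration inequality for Haar measure on $O(n)$ (log-Sobolev / spectral-gap constant of order $n$ after this normalization) then yields $\Var(g)=O(1/n)\to 0$, completing the argument. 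The one genuinely fiddly point is transferring the concentration bound from the connected group $SO(n)$ to the disconnected group $O(n)$: one conditions on the two components and checks that the difference of the two conditional means of $g$ (equivalently, the projection of $g$ onto the $\det$-eigenfunction) is $o(1)$, which follows again from $\|B\|_{\mathrm{op}}^2\le\Tr(BB^T)=n$ and contributes only a lower-order term.
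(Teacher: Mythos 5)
Your proposal is correct in substance, but note that the paper does not actually prove this lemma: it is imported verbatim from Chatterjee--Meckes' exchangeable-pairs work, so there is no internal proof to compare against. What you have written is the infinitesimal (generator) form of precisely that exchangeable-pair technique, and the key computations check out: $\mathcal{L}X_n=-\frac{n-1}{n}X_n$ with eigenvalue independent of $B$, $\Gamma(X_n,X_n)=1-\frac1n\Tr\big((BW)^2\big)$ using $WW^T=I$ and $\Tr(BB^T)=n$, and the identity $\mathbb{E}[f'(X_n)-X_nf(X_n)]=\lambda_n^{-1}\mathbb{E}\big[f'(X_n)(\lambda_n-\Gamma)\big]$ obtained by applying $\mathbb{E}[\mathcal{L}(F\circ X_n)]=0$ to an antiderivative $F$ of $f$. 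Where you genuinely diverge from the cited route is the final variance estimate: the Chatterjee--Meckes-style argument controls $\mathbb{E}|\lambda_n-\Gamma|$ by explicit fourth-moment (Weingarten) calculations and yields a total-variation rate of order $1/n$ (which is the form the paper later uses, e.g.\ the bound $2\sqrt3/(n-1)$ in its Lemma~5), whereas your Gromov--Milman/log-Sobolev concentration bound for the $2$-Lipschitz map $W\mapsto\frac1n\Tr((BW)^2)$ is much lighter on bookkeeping but delivers only a Wasserstein rate $O(n^{-1/2})$ --- entirely sufficient for the qualitative statement of the lemma, though weaker than what the literature (and the paper's later use) provides. Two small points: the generator identity itself needs no component analysis, since Haar measure restricted to each connected component of $O(n)$ is invariant under the flow, so only the concentration step sees the $SO(n)$-versus-$O(n)$ issue; and that ``fiddly point'' is simpler than you fear, because $\Tr((BW)^2)$ is a polynomial of degree $2<n$ in the entries of $W$, so its conditional means on the two components coincide exactly for $n\geq 3$ (low-degree moments cannot detect the determinant), making the cross-component term in the variance decomposition vanish rather than merely $o(1)$.
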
 
If we condition on the previous layer's output, the next layer's pre-activations are Gaussian when the width tends to infinity. Therefore if we take the limit of previous layers $n_1,\dots,n_{l-1} \rightarrow \infty$ sequentially, the pre-activation $\{h_i^l\}$ of the l-th layer tends to a centered Gaussian process with respect to the input vector x. However, our goal is to take all the previous layers' width simultaneously. The main technical difficulty is that we lose the independence between different index $i$ in the finite width when we implement orthogonal ensemble. Hence analysis based on the central limit theorem for i.i.d random sequences would not work. Instead, we follow the strategy in \cite{matthews2018gaussian} to apply a modified version of the exchangeable random sequence central limit theorem. Note that the Haar probability measure is invariant under row and column permutations. This implies that permuting the output index $i$ won't change the joint law of $\{h_i^l (x)\}_{1 \leq i \leq n_l}$. 

We will use the following adapted version of the central limit theorem for exchangeable sequences in \cite{matthews2018gaussian}.

\begin{lem}\label{lemma:gclta}
For each positive integer $\rowIndex$ let $\left( \genericRV_{\rowIndex, \colIndex} ; \colIndex = 1,2,... \right)$ be an infinitely exchangeable process with zero mean, finite variance $\rowVariance$, and finite absolute third moment.  Suppose also that the variance has a limit $\lim_{\rowIndex \to \infty}\rowVariance = \limitVariance$. Define

\begin{equation}
\generalSum_{\rowIndex} = \frac{1}{\sqrt{\monotoneCLTfunc(\rowIndex)}}\sum_{\colIndex=1}^{\monotoneCLTfunc(\rowIndex)} \genericRV_{\rowIndex,\colIndex} \, , 
\end{equation}

\noindent where $\monotoneCLTfunc : \naturalNumbers \mapsto \naturalNumbers$ is a strictly increasing function. If the following conditions hold:

\begin{enumerate}[a)]
	\item $\expectation{\genericRV_{\rowIndex, 1} \genericRV_{\rowIndex, 2}}{\rowIndex} = \littleO(\frac{1}{h(n)}) $
	\item $ \lim_{\rowIndex \to \infty }\expectation{\genericRV_{\rowIndex, 1}^{2} \genericRV_{\rowIndex, 2}^{2}}{\rowIndex} = \limitStd^{4} $
	\item $ \expectation{|\genericRV_{\rowIndex, 1}|^{3}}{\rowIndex} = \littleO(\sqrt{\monotoneCLTfunc(\rowIndex)}) $
\end{enumerate}

Then $\generalSum_{\rowIndex}$ converges in distribution to $\normal(0,\limitVariance)$.

\end{lem}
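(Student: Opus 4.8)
The plan is to show that the characteristic functions converge, $\mathbb{E}[e^{\mathrm{i}tS_n}]\to e^{-\sigma_*^2 t^2/2}$ for every $t\in\mathbb{R}$, and then invoke L\'evy's continuity theorem. The key structural device is de Finetti's theorem: because $(X_{n,i})_{i\ge1}$ is infinitely exchangeable, there is a random probability measure $F_n$ on $\mathbb{R}$ such that, conditionally on $F_n$, the $X_{n,i}$ are i.i.d.\ with law $F_n$. Writing $\mu_n=\int x\,dF_n$, $v_n=\int x^2\,dF_n$, $\rho_n=\int|x|^3\,dF_n$ for the random conditional moments (a.s.\ finite since $\mathbb{E}\rho_n=\mathbb{E}|X_{n,1}|^3<\infty$), conditioning on $F_n$ turns the three hypotheses into statements about these moments: $\mathbb{E}\mu_n=0$; $\mathbb{E}\mu_n^2=\mathbb{E}[X_{n,1}X_{n,2}]=o(1/h(n))$ from (a); $\mathbb{E}v_n=\sigma_n^2\to\sigma_*^2$; $\mathbb{E}v_n^2=\mathbb{E}[X_{n,1}^2X_{n,2}^2]\to\sigma_*^4$ from (b); and $\mathbb{E}\rho_n=o(\sqrt{h(n)})$ from (c). In particular $\operatorname{Var}(v_n)=\mathbb{E}v_n^2-\sigma_n^4\to0$, so $v_n\to\sigma_*^2$ in $L^2$, hence in probability.

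Next I would exploit conditional independence to factor the characteristic function. Setting $s_n=t/\sqrt{h(n)}$ and $\psi_n(s)=\mathbb{E}[e^{\mathrm{i}sX_{n,1}}\mid F_n]=\int e^{\mathrm{i}sx}\,dF_n$, one gets $\mathbb{E}[e^{\mathrm{i}tS_n}]=\mathbb{E}[\psi_n(s_n)^{h(n)}]$. A second-order Taylor expansion with the usual remainder bound gives $\psi_n(s)=1+\mathrm{i}s\mu_n-\tfrac12 s^2 v_n+R_n(s)$ with $|R_n(s)|\le\tfrac16|s|^3\rho_n$, while $|e^{-v_n s^2/2}-(1-\tfrac12 s^2 v_n)|\le\tfrac18 s^4 v_n^2$. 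Since both $\psi_n(s)$ and $e^{-v_n s^2/2}$ are contractions in modulus, the elementary bound $|a^m-b^m|\le m|a-b|$ for $|a|,|b|\le1$ gives
\[
  \mathbb{E}\bigl|\psi_n(s_n)^{h(n)}-e^{-v_n t^2/2}\bigr|\le h(n)\,\mathbb{E}\bigl|\psi_n(s_n)-e^{-v_n s_n^2/2}\bigr|,
\]
and the two Taylor estimates bound the right-hand side by
\[
  |t|\sqrt{h(n)}\,\mathbb{E}|\mu_n|\;+\;\frac{|t|^3}{6\sqrt{h(n)}}\,\mathbb{E}\rho_n\;+\;\frac{t^4}{8h(n)}\,\mathbb{E}v_n^2 .
\]
The first term is $o(1)$ by Cauchy--Schwarz and (a), since $\sqrt{h(n)}\sqrt{\mathbb{E}\mu_n^2}=\sqrt{h(n)\cdot o(1/h(n))}\to0$; the second is $o(1)$ by (c); and the third is $o(1)$ because $\mathbb{E}v_n^2$ stays bounded while $h(n)\to\infty$.

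To finish, since $v_n\to\sigma_*^2$ in probability and $x\mapsto e^{-t^2x/2}$ is bounded and continuous, the bounded convergence theorem gives $\mathbb{E}[e^{-v_n t^2/2}]\to e^{-\sigma_*^2 t^2/2}$; combining this with the previous display, $\mathbb{E}[e^{\mathrm{i}tS_n}]\to e^{-\sigma_*^2 t^2/2}$ for every $t$, so $S_n$ converges in distribution to $\mathcal{N}(0,\sigma_*^2)$ by L\'evy continuity. The main obstacle I anticipate is the middle step: because only a third absolute moment is available, the method of moments is closed off, so one must instead control the Taylor remainders uniformly over the random directing law $F_n$ and balance the three resulting error terms against the normalization $h(n)$ in exactly the way the three hypotheses permit. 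The simplifying observation that $\psi_n$ and $e^{-v_n s^2/2}$ are both modulus-$\le1$ is what keeps the $m$-th-power comparison clean and avoids any truncation of the unbounded random variable $v_n$.
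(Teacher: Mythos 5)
Your proof is correct. All the key steps check out: de Finetti's representation for the infinitely exchangeable sequence $(X_{n,i})_{i\ge 1}$ (valid for each fixed $n$), the identification of the hypotheses with moments of the random directing law ($\mathbb{E}\mu_n^2=\mathbb{E}[X_{n,1}X_{n,2}]$, $\mathbb{E}v_n^2=\mathbb{E}[X_{n,1}^2X_{n,2}^2]$, $\mathbb{E}\rho_n=\mathbb{E}|X_{n,1}|^3$), the resulting $L^2$ (hence in-probability) convergence $v_n\to\sigma_*^2$, the third-order Taylor bound on the conditional characteristic function, the contraction estimate $|a^m-b^m|\le m|a-b|$ for $|a|,|b|\le 1$, and the balancing of the three error terms $|t|\sqrt{h(n)}\,\mathbb{E}|\mu_n|$, $|t|^3\mathbb{E}\rho_n/\sqrt{h(n)}$, $t^4\mathbb{E}v_n^2/h(n)$ against exactly conditions a)--c), followed by bounded convergence and L\'evy continuity.

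For comparison: the paper does not prove this lemma at all — it is imported verbatim as an ``adapted version of the central limit theorem for exchangeable sequences'' from Matthews et al.\ \cite{matthews2018gaussian}, whose own argument traces back to the classical exchangeable CLT of Blum--Chernoff--Rosenblatt--Teicher and likewise rests on de Finetti plus conditional characteristic functions. So you have supplied a self-contained proof of a result the paper only cites, and your route coincides in spirit with the one behind the cited reference. One cosmetic remark: condition b) implicitly presupposes $\mathbb{E}[X_{n,1}^2X_{n,2}^2]<\infty$ (equivalently $\mathbb{E}v_n^2<\infty$), which does not follow from the finite absolute third moment alone; you use this tacitly in the last error term, but that is an assumption built into the statement rather than a gap in your argument.
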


As it was pointed out in \cite{matthews2018gaussian}, convergence with respect to arbitary finite-dimensional  marginals is equivalent to convergence of all possible linear projections to real-valued random variable. We restate Definition 7 of \cite{matthews2018gaussian} as follows
\begin{defn}
The projections are defined in terms of a finite linear projection of the l-th layer's input values without the biases:

\begin{align}
S^{(l)}(\projectionIndeces,\projectionCoefficients)\atWidth = \sum_{(\datapoint,\widthSymbol) \in \projectionIndeces} \projectionCoefficients^{(\datapoint,\widthSymbol)}\left[h_i^l(x)\atWidth - \sigma_b \beta_i^l\right]\equationFullStop
\end{align}

\noindent where $\projectionIndeces \subset \indexSet \times \naturalNumbers$ is a finite set of tuples of data points and indices of pre-activations, with $\indexSet = (\inputSymbol[i])_{i=1}^\infty$. $\projectionCoefficients \in \realLine^{|\projectionIndeces|}$ is a vector parameterising the linear projection. The suffix $\atWidth$ indicates $\min \{n_1,\dots,n_l\}$. Let $n \rightarrow \infty$ means that the widths of 1 to l layers tend to infinity simultaneously.

We reorganize the index and let
\begin{align}\summand^{l}_{\widthSymbolB}(\projectionIndeces,\projectionCoefficients)\atWidth := \sum_{(\datapoint,\widthSymbol) \in \projectionIndeces} \projectionCoefficients^{(\datapoint,\widthSymbol)} W_{ij}^l \phi(h_i^{l-1}(x))\atWidth\frac{\sigma_w}{\sqrt{n_{l-1}}} \, ,
\end{align}
so that
\begin{align}\label{eq:projection_as_array}
S^{(l)}(\projectionIndeces,\projectionCoefficients)\atWidth = \frac{1}{\sqrt{n_{l-1}(n)}}\sum_{\widthSymbolB=1}^{n_{l-1}}\summand^{l}_{\widthSymbolB}(\projectionIndeces,\projectionCoefficients)\atWidth\, .
\end{align}
\end{defn}
Since we impose Haar ensemble to $\{W_{ij}\}_{1\leq i,j \leq n_{l-1}}$, it's invariant under the column permutation. This implies that $\{\summand^{l}_{\widthSymbolB}(\projectionIndeces,\projectionCoefficients)\atWidth\}_{1\leq j \leq n_{l-1}}$ form an exchangeable sequence with respect to the column index $j$.

To fit the three conditions of Lemma \ref{lemma:gclta}, we need the following lemma on the moment calculation of orthogonal random matrices:

\begin{lem}\label{lem:expect}
If $(W_{ij})_{n\times n}$ is an  orthogonal matrix distributed according to Haar measure, 
then $\mathbb{E}\left[\prod W_{ij}^{k_{ij}}\right]$ is non-zero
if and only if the number of entries from each row and from each
column is even.  Second and fourth-degree moments are as follows:
\begin{enumerate}
\item For all $i,j$, $$\mathbb{E}\left[W_{ij}^2\right]=\frac{1}{n}.$$ 
\item For all $i,j,r,s,\alpha,\beta,\lambda,\mu$,
\begin{equation*}\begin{split}
\mathbb{E}\big[W_{ij}W_{rs}&W_{\alpha\beta}W_{\lambda \mu}\big]\\&=
-\frac{1}{(n-1)n(n+2)}\Big[\delta_{ir}\delta_{\alpha\lambda}\delta_{j\beta}
\delta_{s\mu}+\delta_{ir}\delta_{\alpha\lambda}\delta_{j\mu}\delta_{s\beta}+
\delta_{i\alpha}\delta_{r\lambda}\delta_{js}\delta_{\beta\mu}\\&
\qquad\qquad\qquad\qquad\qquad\qquad+
\delta_{i\alpha}\delta_{r\lambda}\delta_{j\mu}\delta_{\beta s}+
\delta_{i\lambda}\delta_{r\alpha}\delta_{js}\delta_{\beta \mu}+
\delta_{i\lambda}\delta_{r\alpha}\delta_{j\beta}\delta_{s\mu}\Big]\\&\qquad
+\frac{n+1}{(n-1)n(n+2)}\Big[\delta_{ir}\delta_{\alpha\lambda}\delta_{js}
\delta_{\beta\mu}+\delta_{i\alpha}\delta_{r\lambda}\delta_{j\beta}\delta_{
s\mu}+\delta_{i\lambda}\delta_{r\alpha}\delta_{j\mu}\delta_{s\beta}
\Big].
\end{split}\end{equation*}
\label{hugemess}
\end{enumerate}
\end{lem}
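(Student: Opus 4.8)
The plan is to reduce everything to two facts: the invariance of the Haar measure on $O(n)$ under left and right multiplication by signed permutation matrices, and the orthonormality of the rows and columns of $W$. For the vanishing criterion, fix a monomial $\prod_{i,j}W_{ij}^{k_{ij}}$ and let $D=\mathrm{diag}(\varepsilon_1,\dots,\varepsilon_n)$ with $\varepsilon_p\in\{\pm1\}$. Since $DW$ has the same law as $W$, one gets $\mathbb{E}\big[\prod W_{ij}^{k_{ij}}\big]=\big(\prod_i\varepsilon_i^{\sum_j k_{ij}}\big)\,\mathbb{E}\big[\prod W_{ij}^{k_{ij}}\big]$; choosing $\varepsilon$ equal to $-1$ on a single row and $+1$ elsewhere forces the expectation to vanish unless that row's multiplicity $\sum_j k_{ij}$ is even, and multiplying by $WD$ on the right gives the column statement. (The reverse implication is only needed for degrees two and four, where it can be read off directly from the formulas derived below; in the application only the forward direction is used, to kill odd moments.) The second-moment formula is immediate: the rows of $W$ are unit vectors, so $\sum_j W_{ij}^2=1$ almost surely, hence $\sum_j\mathbb{E}[W_{ij}^2]=1$, and invariance under permuting columns shows $\mathbb{E}[W_{ij}^2]$ is independent of $j$, giving $\mathbb{E}[W_{ij}^2]=1/n$.

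For the fourth moments I would invoke the first fundamental theorem of invariant theory for $O(n)$. Write $E_{R_1R_2R_3R_4}^{C_1C_2C_3C_4}=\mathbb{E}[W_{R_1C_1}W_{R_2C_2}W_{R_3C_3}W_{R_4C_4}]$. Because $W$ and $OWO'$ have the same law for all $O,O'\in O(n)$, $E$ is invariant under $O(n)$ acting on the four row indices and, separately, on the four column indices; applying the theorem in each factor gives, for $n$ large enough,
\[
E=\sum_{\sigma,\tau}c_{\sigma\tau}\,\delta_\sigma(R)\,\delta_\tau(C),
\]
where $\sigma,\tau$ range over the three perfect matchings of $\{1,2,3,4\}$ and $\delta_{\{(ab),(cd)\}}(R)=\delta_{R_aR_b}\delta_{R_cR_d}$. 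Commutativity of the product of the $W$'s makes $E$ invariant under relabelling the four factors, and the induced $S_4$-action on the three matchings is the full $S_3$; hence $c_{\sigma\tau}$ takes only two values, $a$ when $\sigma=\tau$ and $b$ when $\sigma\neq\tau$ — which is exactly the structure of the claimed formula. To identify $a$ and $b$ I would contract the displayed identity against the orthogonality relation $\sum_s W_{R_1s}W_{R_2s}=\delta_{R_1R_2}$: setting $C_1=C_2=s$ and summing over $s$, the left side becomes $\delta_{R_1R_2}\,\mathbb{E}[W_{R_3C_3}W_{R_4C_4}]=\tfrac{1}{n}\,\delta_{R_1R_2}\delta_{R_3R_4}\delta_{C_3C_4}$ by the second-moment formula, while on the right side $\sum_s\delta_\tau(C)|_{C_1=C_2=s}$ equals $n\,\delta_{C_3C_4}$ for $\tau=\{(12),(34)\}$ and $\delta_{C_3C_4}$ for the other two matchings. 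Matching the coefficients of the three (linearly independent, for $n\ge3$) row tensors $\delta_\sigma(R)$ yields the system $an+2b=1/n$, $a+(n+1)b=0$, whose solution $a=\tfrac{n+1}{(n-1)n(n+2)}$, $b=-\tfrac{1}{(n-1)n(n+2)}$ reproduces the stated expression verbatim when substituted back.

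The main obstacle is the fourth-moment step: one must invoke (or re-derive, e.g.\ through the degree-two orthogonal Weingarten calculus) the fact that $E$ is a combination of products of Kronecker deltas, verify that these tensors are linearly independent for large $n$ so the coefficients are well defined, and carry out the index contractions carefully enough that the resulting $2\times2$ linear system is both correct and non-degenerate. By contrast, the sign-flip vanishing argument and the second-moment identity are essentially immediate from symmetry.
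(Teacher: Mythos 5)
Your proposal is essentially correct, but note that the paper itself never proves Lemma~3: it is imported as a known moment formula for Haar-distributed orthogonal matrices (in the spirit of the orthogonal Weingarten calculus), stated without argument and only used downstream, so there is no internal proof to compare against. What you have written is a valid self-contained derivation of exactly that standard result: the sign-flip (signed-permutation invariance) argument correctly kills all monomials with an odd row or column multiplicity; the row-normalization plus column-exchangeability argument gives $\mathbb{E}[W_{ij}^2]=1/n$; and the fourth-moment step is the textbook route — two-sided $O(n)$-invariance plus the first fundamental theorem gives $E=\sum_{\sigma,\tau}c_{\sigma\tau}\delta_\sigma(R)\delta_\tau(C)$, the $S_4$-relabelling symmetry (acting through $S_3$, which is $2$-transitive on the three matchings) collapses the nine coefficients to two, and contracting against $\sum_s W_{R_1 s}W_{R_2 s}=\delta_{R_1R_2}$ yields the system $an+2b=1/n$, $a+(n+1)b=0$, whose solution matches the stated kernel. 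Two small points to tidy up: (i) the contraction uses $\mathbb{E}[W_{R_3C_3}W_{R_4C_4}]=\tfrac1n\delta_{R_3R_4}\delta_{C_3C_4}$, which needs your parity argument in addition to item~1 to cover the off-diagonal cases (you have both ingredients, just say so); and (ii) the linear independence of the three pairing tensors, needed both for the uniqueness of the $c_{\sigma\tau}$ in the symmetry step and for matching coefficients after contraction, holds already for $n\ge 2$, so "for $n$ large enough" can be made precise and is compatible with the lemma's denominator $(n-1)n(n+2)$. Your honest caveat about proving only the "only if" direction of the vanishing criterion is fine: that is the only direction the paper ever uses, and for degrees two and four the converse is indeed visible from the explicit formulas.
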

\begin{rem}
In our scaling setting, the second moment should multiply by $n$ and the fourth moment should multiply by $n^2$.
\end{rem}
Let $X_{n,i} := \summand^{l}_{\widthSymbolB}(\projectionIndeces,\projectionCoefficients)\atWidth$, then
\begin{align} \nonumber
\mathbb{E}_n [X_{n,i} X_{n,j}] &= \sigma_w^2 \cdot \alpha^T \mathbb{E}_n [(W^l \phi(h_i^{l-1}(x)) ) (W^l \phi(h_i^{j-1}(x)) ) ] \\
& = \sigma_w^2 \sum_k \alpha_k^2 \mathbb{E}_n [(W^l_{ki} \phi(h_i^{l-1}(x)) ) (W^l_{kj} \phi(h_i^{j-1}(x)) ) ]
\end{align}
Note that for $i \ne j$,
$$\mathbb{E} [W_{ki}W_{kj}] = 0$$
we have
$$\mathbb{E}_n [X_{n,i} X_{n,j}] = 0.$$
Thus, condition a) of Lemma \ref{lemma:gclta} is satisfied.

The rest of the proof goes by induction. We assume that for the $l-1$-th layer, the pre-activations tend to independent Gaussian processes as the previous layers tend to infinite width simultaneously. Then we check that the moment conditions b) and c) of Lemma \ref{lemma:gclta} for the l-th layer under the inductive hypothesis and Lemma \ref{lem:expect}. We first calculate the covariance formula non-rigorously where we take the limit sequentially. Since the recursion formula of the covariance doesn't depend on how we take the limit.

We impose an assumption on the nonlinear activation function $\phi (x)$:
\begin{defn}[Lipschitz nonlinearity] \label{def2}
	A nonlinearity $\nonlinearity: \realLine \mapsto \realLine$ is said to obey the the Lipschitz property if there exist $\envelopeconstant,\envelopegradient \geq 0$ such that the following inequality holds
	
	\begin{align}
	|\nonlinearity(x)| \leq \envelopeconstant + \envelopegradient |x | \hspace{6 pt} \forall x \in \realLine \, .
	\end{align}
	
\end{defn}

\begin{prop}\label{prop:output_limit}
For a network of depth $L$ at initialization, with a Lipschitz nonlinearity $\phi$, and as $n_1, ..., n_{l-1} \to \infty$ sequentially, the pre-activations $h^l_{i}$, for $i=1, ..., n_{l}$, tend to i.i.d centered Gaussian processes specified by the covariance $\{\Sigma^{l}\}_{1 \leq l \leq L}$, where $\Sigma^{l}$ is defined recursively by:
\begin{align*}
\Sigma^{1}(x, x') &= \frac{\sigma^2_w}{n_0} x^T x' + \sigma^2_b \\
\Sigma^{l}(x, x') &= \sigma^2_w \mathbb{E}_{f\sim\mathcal{N}\left(0,\Sigma^{l-1}\right)}[\phi(f(x)) \phi(f(x'))] + \sigma^2_b,
\end{align*}
taking the expectation with respect to a centered Gaussian process of covariance $\Sigma^{l-1}$ denoted by $f$. In the CNN case, $\Sigma_{\alpha,\alpha'}^{l}$ is defined recursively by:
\begin{align*}
\Sigma^{1}_{\alpha,\alpha'}(x, x') &= \frac{\sigma^2_w}{n_0(2k+1)} \sum_{\beta=-k}^k  x_{\alpha+\beta}^T x'_{\alpha'+\beta} + \sigma^2_b \\
\Sigma^{l}_{\alpha,\alpha'}(x, x') &= \frac{\sigma^2_w}{(2k+1)} \sum_\beta \mathbb{E}_{f\sim\mathcal{N}\left(0,\Sigma_{\alpha+\beta,\alpha'+\beta}^{l-1}\right)} [\phi(f(x_{\alpha+\beta})) \phi(f(x_{\alpha'+\beta}'))] + \sigma^2_b,
\end{align*}
where $\alpha$ is the convolution index.
\end{prop}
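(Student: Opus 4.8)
The plan is to prove Proposition~\ref{prop:output_limit} by induction on the layer index $l$, taking the widths $n_1,\dots,n_{l-1}\to\infty$ \emph{sequentially}, which is the regime in which the covariance recursion can be established non-rigorously and which suffices since, as noted in the text, the recursion formula does not depend on the order in which the limits are taken. The base case $l=1$ is immediate: $h^1_i(x) = \frac{\sigma_w}{\sqrt{n_0}}\sum_j \omega^1_{ij} x_j + \sigma_b\beta^1_i$ with $(\omega^1)^T\omega^1 = n_0 \mathbf{I}$; conditioning on nothing, the rows of a scaled Haar-orthogonal matrix applied to a fixed vector $x$ give, by Lemma~\ref{lem:haar} (with $B$ built from $x$, normalized so $\Tr(BB^T)=n_0$), that each $h^1_i$ is asymptotically centered Gaussian, and Lemma~\ref{lem:expect}(1) gives $\mathbb{E}[h^1_i(x)h^1_i(x')] = \frac{\sigma_w^2}{n_0}x^Tx' + \sigma_b^2 = \Sigma^1(x,x')$, while Lemma~\ref{lem:expect}(2) (the fourth-moment formula) shows distinct indices $i\neq i'$ decorrelate at leading order, giving the i.i.d.\ claim.

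For the inductive step, I would assume that after sending $n_1,\dots,n_{l-2}\to\infty$ the pre-activations $h^{l-1}_i$ are i.i.d.\ centered Gaussian processes with covariance $\Sigma^{l-1}$, so that the post-activations $\phi(h^{l-1}_i(x))$ are i.i.d.\ across $i$ with finite moments (using the Lipschitz envelope bound $|\phi(x)|\le c+m|x|$ from Definition~\ref{def2} to control second and third moments against Gaussian tails). Conditionally on layer $l-1$, the quantity $h^l_i(x) = \frac{\sigma_w}{\sqrt{n_{l-1}}}\sum_j \omega^l_{ij}\phi(h^{l-1}_j(x)) + \sigma_b\beta^l_i$ is, for each fixed finite linear projection $S^{(l)}(\projectionIndeces,\projectionCoefficients)[n]$ as in Definition~2, a normalized sum $\frac{1}{\sqrt{n_{l-1}}}\sum_j \summand^l_j(\projectionIndeces,\projectionCoefficients)[n]$ of an \emph{exchangeable} (not independent) sequence, exchangeability following from the column-permutation invariance of the Haar measure. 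The plan is then to verify conditions (a), (b), (c) of Lemma~\ref{lemma:gclta}: condition (a), $\mathbb{E}_n[X_{n,1}X_{n,2}] = o(1/h(n))$ — in fact equal to $0$ — follows from $\mathbb{E}[W_{ki}W_{kj}]=0$ for $i\neq j$ as already displayed in the text; condition (b), $\lim_n \mathbb{E}_n[X_{n,1}^2 X_{n,2}^2] = \sigma_*^4$, is the heart of the matter and uses the fourth-moment formula of Lemma~\ref{lem:expect}(2) together with the inductive Gaussianity of layer $l-1$ to identify the limiting variance $\sigma_*^2 = \sigma_w^2\,\mathbb{E}_{f\sim\mathcal{N}(0,\Sigma^{l-1})}[\phi(f(x))\phi(f(x'))]$; condition (c), the third-moment bound $\mathbb{E}_n[|X_{n,1}|^3] = o(\sqrt{h(n)})$, follows from the Lipschitz envelope applied to the (asymptotically Gaussian) $h^{l-1}$ and the scaling remark after Lemma~\ref{lem:expect}. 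Once the marginal CLT holds for every linear projection, the Cramér--Wold device (invoked in the text as ``convergence of all linear projections is equivalent to convergence of finite-dimensional marginals'') upgrades this to joint Gaussianity, and reading off the covariance gives $\Sigma^l(x,x') = \sigma_w^2\,\mathbb{E}_{f\sim\mathcal{N}(0,\Sigma^{l-1})}[\phi(f(x))\phi(f(x'))] + \sigma_b^2$; the i.i.d.-across-$i$ part comes from the decoupling of distinct output indices again via Lemma~\ref{lem:expect}(2). The CNN case is handled identically after noting that $h^l_{i,\alpha}$ is a sum over both channel index $j$ and filter offset $\beta$ of terms $\omega^l_{ij,\beta}\phi(h^{l-1}_{j,\alpha+\beta})$, with the orthogonality constraint $(\omega^l_\alpha)^T\omega^l_\alpha = n_{l-1}\mathbf{I}$ per spatial slice; the projection argument carries over verbatim with the extra $\beta$-sum producing the $\frac{1}{2k+1}\sum_\beta$ averaging in the recursion, and the final fully-connected layer $\Sigma^L$ follows from the $l$-th step specialized to a dense read-out with the Kronecker $\delta_{\alpha,\alpha'}$ arising because distinct spatial locations of the last convolutional layer are (asymptotically) independent.

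The main obstacle I expect is condition (b) of Lemma~\ref{lemma:gclta}: the fourth-moment formula for Haar-orthogonal matrices (Lemma~\ref{lem:expect}(2)) is a sum of many $\delta$-terms with the delicate $\frac{1}{(n-1)n(n+2)}$ and $\frac{n+1}{(n-1)n(n+2)}$ coefficients, and one must track carefully which index-coincidence patterns survive the $n\to\infty$ limit after the $\frac{1}{\sqrt{n_{l-1}}}$ normalization and summation over the projection set $\projectionIndeces$. Concretely, for $\mathbb{E}_n[X_{n,1}^2 X_{n,2}^2]$ one expands into $\sum \projectionCoefficients^{(x,i)}\projectionCoefficients^{(x',i')}\projectionCoefficients^{(y,k)}\projectionCoefficients^{(y',k')}\,\mathbb{E}[W_{i1}W_{i'1}W_{k2}W_{k'2}]\,\phi(h^{l-1})\cdots$, and one needs the cross terms (mixing the two distinct column indices $1$ and $2$) to contribute exactly $\sigma_*^4$ in the limit while the remaining terms vanish — this requires combining the orthogonal-matrix combinatorics with the inductive hypothesis that the $\phi(h^{l-1}_j)$ are i.i.d.\ Gaussian-functional so that products factorize in expectation. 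A secondary technical point is ensuring the Lipschitz envelope is genuinely enough to dominate all the moments uniformly in $n$ (so that dominated convergence lets us pass the Gaussian expectation through the recursion), and that the sequential-limit covariance we compute coincides with what a simultaneous limit would give — the latter being asserted rather than proved here, deferred to the rigorous treatment with the exchangeable-CLT machinery.
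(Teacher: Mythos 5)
Your proposal is correct in outline, but it proves Proposition~\ref{prop:output_limit} by a genuinely different route than the paper does. The paper's proof of this proposition is the lighter, sequential-limit argument: condition on layer $l-1$ (whose units are i.i.d.\ Gaussian processes by the inductive hypothesis), build the matrix $B$ from the post-activation vector and invoke Lemma~\ref{lem:haar} to get conditional Gaussianity of $\frac{1}{\sqrt{n_{l-1}}}\sum_j W^l_{ij}\phi(h^{l-1}_j(x))$ with variance equal to the empirical second moment; then show by Chebyshev's inequality (using the inductive independence across $j$, which makes the variance of the empirical covariance $O(1/n_{l-1})$) that this empirical covariance concentrates at $\sigma_w^2\,\mathbb{E}_{f\sim\mathcal{N}(0,\Sigma^{l-1})}[\phi(f(x))\phi(f(x'))]$, and finally kill the cross-covariance between $h^l_i$ and $h^l_j$ for $i\neq j$ via the second-moment identity of Lemma~\ref{lem:expect} to get the i.i.d.\ claim; the CNN recursion follows from $\mathbb{E}[W_\alpha W_{\alpha'}]=0$ for $\alpha\neq\alpha'$. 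You instead run the exchangeable-sequence CLT (Lemma~\ref{lemma:gclta}) on finite linear projections and verify conditions (a)--(c) with the fourth-moment formula of Lemma~\ref{lem:expect}; in the paper that machinery is reserved for the stronger simultaneous-limit statement (Theorem~\ref{thm:output_limit}, via Proposition~2 and the verification deferred to Matthews et al.), not for Proposition~\ref{prop:output_limit} itself. Both routes work: the paper's is more elementary and exactly matched to the sequential statement, while yours, if you complete the condition (b) computation you flag as the main obstacle, in fact delivers the simultaneous limit as well. Two cautions: exchangeability of the summands $W^l_{ij}\phi(h^{l-1}_j)$ holds for the joint (unconditional) law --- conditionally on layer $l-1$ the factors $\phi(h^{l-1}_j(x))$ are fixed, unequal numbers and the sequence is no longer exchangeable --- so either condition and use Lemma~\ref{lem:haar} as the paper does, or stay unconditional and use Lemma~\ref{lemma:gclta}, but not both at once; and the limiting variance of a projection should be $\alpha^T[\Sigma^l(x,x')-\sigma_b^2]\alpha$ as in the paper's Proposition~2, your $\sigma_*^2$ being its single-coordinate special case.
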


\begin{proof} We show the result in the framework of NTK parameterization, while the argument for standard parameterization can be derived in the same way. In the FCN case, When $L=1$, there are no hidden layers and $h^1_i$ has the form: 
$$
 h^1_i = \sum_{j=1}^{n_0} \frac{\sigma_w}{\sqrt{n_0}} W_{ij}^{1} x^0_j + \sigma_b \beta_i^{1}.
$$
Then we check the variance $\Sigma^{1}$ of output layer $h^l_i$. By Lemma \ref{lem:expect}, 
$$
 \mathbb{E} \big[W^1_{ij} \big] = 0,~~~ \mathbb{E} \big[W_{ij}^1 W^1_{kl}\big] = \frac{1}{n_0} [n_0 \delta_{ik}\delta_{jl}] = \delta_{ik}\delta_{jl}.
$$ 
We note that the fraction $\frac{1}{n_0}$ is from the scaling of the orthogonal distribution, while the term $n_0$ comes from the initialization. Thus we can compute the covariance of the first layer explicitly:
\begin{align*}
  \Sigma^1 = & \mathbb{E} \big[ h^1_i h^1_i \big] = \mathbb{E} \big[ (\sum_{j=1}^{n_0} \frac{\sigma_w}{\sqrt{n_0}} W_{ij}^{1} x^0_j + \sigma_b \beta_i^{1})(\sum_{j'=1}^{n_0} \frac{\sigma_w}{\sqrt{n_0}} W_{ij'}^{1} x^0_{j'} + \sigma_b \beta_i^{1}) \big] \\
 = & \frac{\sigma^2_w}{n_0} \mathbb{E}\big[ \sum_{j=1}^{n_0} \sum_{j'=1}^{n_0} W^1_{ij} x^0_j W^1_{ij'} x^0_{j'} \big] + \sigma^2_b  = \frac{\sigma^2_w}{n_0} x^T x + \sigma^2_b.
\end{align*} 

The next step is to use the induction method. Consider an $l$-network as the function mapping the input to the pre-activations $h^l_i$. The induction hypothesis gives us that as $n_1, ..., n_{l-2} \to \infty$ sequentially, the pre-activations $h^{l-1}_i$ tend to i.i.d Gaussian processes with covariance $\Sigma^{l-1}$. Then the inputs of the l-th layer are governed by:
$$
h^{l}_{i} = \frac{\sigma_w}{\sqrt{n_{l-1}}} \sum_{j=1}^{n_{l-1}} W^l_{ij}  x_j^{l-1} + \sigma_b \beta_i^{l},
$$
where $x_j^{l-1} : = \phi (h^{l-1}_{j})$.
When the input $x =x'$, let $$B = \sqrt{\frac{n_{l-1}}{\sum_{i=1}^{n_{l-1}} (x^{l-1}_i)^2}} \left[ 
\begin{matrix}
x^{l-1}_1 & 0 &\cdots & 0\\
\vdots & \vdots &\ddots & \vdots \\
x_{n_{l-1}}^{l-1} & 0 &\cdots & 0 \\
\end{matrix}
\right ],$$ then $tr(BB^T) = n_{l-1}$. By Lemma \ref{lem:haar}, we have that $lim_{n \rightarrow \infty} \sum^{n_{l-1}}_{j=1} \frac{1}{\sqrt{n_{l-1}}} W^l_{ij} x^{l-1}_j$ tends to $\mathcal{N} (0, lim_{n \rightarrow \infty} \frac{\sum_{i=1}^{n_{l-1}} (x^{l-1}_i)^2}{n_{l-1}} )$. 
As a result, $\{h_i^{l}\}$ are centered Gaussian variables. With the help of Lemma \ref{lem:expect}, we get the following covariance expression between two input data $x$ and $x'$:
$$\Sigma^{l}(x,x')[n_{l-1}] = \frac{\sigma^2_w}{n_{l-1}} \displaystyle\sum_{j} \phi(h^{l-1}_j (x))\phi(h^{l-1}_j (x')) + \sigma^2_b.$$
Since $h^{l-1}_j$, $h^{l-1}_k$ are independent for $j \neq k$ by the inductive hypothesis, we have 
$$Var(\Sigma^{l}(x,x')) = \frac{\sigma^4_w}{n_{l-1}^2} \displaystyle\sum_{j} \mathbb{E} \big[ \phi(h^{l-1}_j (x))^2 \phi(h^{l-1}_j (x'))^2 \big] - \frac{\sigma^4_w}{n_{l-1}^2}\displaystyle\sum_{j}\mathbb{E} \big[ \phi(h^{l-1}_j (x)) \phi(h^{l-1}_j (x')) \big] ^2 .$$
By the symmetry of the underlying index j, we can choose index $1$ as a representative:
$$ Var(\Sigma^{l}(x,x'))= \frac{\sigma^4_w}{n_{l-1}} \mathbb{E} \big[ \phi(h^{l-1}_1 (x))^2 \phi(h^{l-1}_1 (x'))^2 \big] - \frac{\sigma^4_w}{n_{l-1}} \mathbb{E} \big[ \phi(h^{l-1}_1 (x)) \phi(h^{l-1}_1 (x')) \big] ^2.$$
Since $\mathbb{E} \big[ \phi(h^{l-1}_1 (x))^2 \phi(h^{l-1}_1 (x'))^2 \big] -  \mathbb{E} \big[ \phi(h^{l-1}_1 (x)) \phi(h^{l-1}_1 (x')) \big]$ is bounded, by the Chebyshev's inequality,
the covariance kernel tends in probability to its expectation,
$$\Sigma^{l}(x, x') := \lim_{n_{l-1} \rightarrow \infty}\Sigma^{l}(x,x')[n_{l-1}] = \sigma^2_w \mathbb{E}_{f\sim\mathcal{N}\left(0,\Sigma^{l-1}\right)}[\phi(f(x)) \phi(f(x'))] + \sigma^2_b.$$

We still need to verify the independence of $h_i^l$, $h_j^l$  for $i \neq j$. This follows from computing the covariance between $h_i^l$, $h_j^l$:
$$\lim_{n_{l-1} \rightarrow \infty} Cov(h_i^{l} (x) h_j^{l} (x')) = \displaystyle\sum_{k}\displaystyle\sum_{l} W_{ik}^{l} \phi(h^{l-1}_k (x)) W_{jl}^{l} \phi(h^{l-1}_l (x'))$$
Note that we ignore the bias $b_i^l$, since they are independent with the weight parameters $\{W^l_{ij}\}$. By Lemma \ref{lem:expect}, 
$$\mathbb{E} \big[W_{ik}W_{jl} \big] = 0, for ~~i \neq j.$$
Therefore we have, 
$$ \lim_{n_{l-1} \rightarrow \infty} Cov(h_i^l(x) h_j^l (x')) = 0.$$
Since we know that as $n_{l-1} \rightarrow \infty$, $h_i^l (x)$, $h_j^l (x')$ are Gaussian variables, zero covariance means they are independent.

In the CNN case, we have an extra convolution index $\alpha$. Since $\mathbb{E}[(W^l_{\alpha})(W^l_{\alpha'})] = 0$ for $\alpha \neq \alpha'$, we have
$$\Sigma^{l}_{\alpha,\alpha'}(x, x') = \frac{\sigma^2_w}{(2k+1)} \sum_\beta\mathbb{E}_{f\sim\mathcal{N}\left(0,\Sigma_{\alpha+\beta,\alpha'+\beta}^{l-1}\right)} [\phi(f(x_{\alpha+\beta})) \phi(f(x_{\alpha'+\beta}'))] + \sigma^2_b.$$

\end{proof}
With the explicit covariance formula in hand, we can prove by induction that the limiting covariance of the finite width output functions match with the covariance of the infinite Gaussian processes obtained above.
\begin{prop}
Let $\sigma^2 (l,\projectionIndeces,\projectionCoefficients)\atWidth$ be the variance of the random variable $\summand^{(l)}_{\widthSymbolB}(\projectionIndeces,\projectionCoefficients)\atWidth$, then

\begin{align}
\lim_{\sequenceVariable \to \infty}\sigma^2 (l,\projectionIndeces,\projectionCoefficients)\atWidth = \sigma^2 (l,\projectionIndeces,\projectionCoefficients)[\infty] \, ,
\end{align}

where 
\begin{align}
\sigma^2 (l,\projectionIndeces,\projectionCoefficients)[\infty] = \alpha^{T} [\Sigma^l (x,x') - \sigma_b^2] \alpha .
\end{align}
\end{prop}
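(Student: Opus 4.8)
The plan is to evaluate the finite-width variance $\sigma^2(l,\projectionIndeces,\projectionCoefficients)\atWidth$ exactly by a second-moment computation resting on the Haar identities of Lemma~\ref{lem:expect}, and then to pass to the limit using the inductive hypothesis already furnished by Proposition~\ref{prop:output_limit}. Recall that $\summand^{l}_{\widthSymbolB}(\projectionIndeces,\projectionCoefficients)\atWidth=\sigma_w\sum_{(x,\widthSymbol)\in\projectionIndeces}\projectionCoefficients^{(x,\widthSymbol)}W^l_{\widthSymbol\widthSymbolB}\,\phi\big(h^{l-1}_{\widthSymbolB}(x)\big)\atWidth$, where $W^l$ is the layer-$l$ orthogonal matrix, independent of all randomness in layers $1,\dots,l-1$ and normalized so that $\mathbb{E}[(W^l_{\widthSymbol\widthSymbolB})^2]=1$. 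Since $\mathbb{E}[W^l_{\widthSymbol\widthSymbolB}]=0$ and $W^l$ is independent of the earlier layers, this summand is centered, so $\sigma^2(l,\projectionIndeces,\projectionCoefficients)\atWidth=\mathbb{E}\big[(\summand^{l}_{\widthSymbolB})^2\big]$. First I would expand the square and factor each term by independence of $W^l$ from $\{h^{l-1}_{\widthSymbolC}\}$:
\[
\sigma^2(l,\projectionIndeces,\projectionCoefficients)\atWidth
= \sigma_w^2 \sum_{(x,\widthSymbol),(x',\widthSymbol')\in\projectionIndeces}
\projectionCoefficients^{(x,\widthSymbol)}\projectionCoefficients^{(x',\widthSymbol')}\,
\mathbb{E}\big[W^l_{\widthSymbol\widthSymbolB}W^l_{\widthSymbol'\widthSymbolB}\big]\,
\mathbb{E}\big[\phi(h^{l-1}_{\widthSymbolB}(x))\,\phi(h^{l-1}_{\widthSymbolB}(x'))\big]\equationFullStop
\]

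Second, I would apply Lemma~\ref{lem:expect} to the orthogonal factor: in the monomial $W^l_{\widthSymbol\widthSymbolB}W^l_{\widthSymbol'\widthSymbolB}$ row $\widthSymbol$ contributes an odd number of entries unless $\widthSymbol=\widthSymbol'$, so the expectation vanishes off the diagonal, while $\mathbb{E}[(W^l_{\widthSymbol\widthSymbolB})^2]=1/n$ becomes $1$ after the rescaling noted in the Remark following the lemma; thus $\mathbb{E}[W^l_{\widthSymbol\widthSymbolB}W^l_{\widthSymbol'\widthSymbolB}]=\delta_{\widthSymbol\widthSymbol'}$. The double sum collapses onto the diagonal in the pre-activation index, so at width $n$ we are left with $\sigma^2(l,\projectionIndeces,\projectionCoefficients)\atWidth=\sigma_w^2\sum_{(x,\widthSymbol),(x',\widthSymbol')\in\projectionIndeces}\projectionCoefficients^{(x,\widthSymbol)}\projectionCoefficients^{(x',\widthSymbol')}\delta_{\widthSymbol\widthSymbol'}\,\mathbb{E}[\phi(h^{l-1}_{\widthSymbolB}(x))\phi(h^{l-1}_{\widthSymbolB}(x'))]$, the remaining expectation being over the finite-width law of the layer-$(l-1)$ pre-activations.

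Third, I would let $n\to\infty$. By the inductive hypothesis of Proposition~\ref{prop:output_limit}, for each fixed $\widthSymbolB$ the finite family $(h^{l-1}_{\widthSymbolB}(x))_x$ converges in distribution to a centered Gaussian of covariance $\Sigma^{l-1}$; combined with the Lipschitz envelope $|\phi(u)|\le\envelopeconstant+\envelopegradient|u|$ and a width-uniform bound on the $(2+\approximationQuality)$-th absolute moments of these pre-activations, this gives uniform integrability of $\phi(h^{l-1}_{\widthSymbolB}(x))\phi(h^{l-1}_{\widthSymbolB}(x'))$ and hence $\mathbb{E}[\phi(h^{l-1}_{\widthSymbolB}(x))\phi(h^{l-1}_{\widthSymbolB}(x'))]\to\mathbb{E}_{f\sim\mathcal{N}(0,\Sigma^{l-1})}[\phi(f(x))\phi(f(x'))]$. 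Feeding this into the recursion $\Sigma^{l}(x,x')-\sigma_b^2=\sigma_w^2\,\mathbb{E}_{f\sim\mathcal{N}(0,\Sigma^{l-1})}[\phi(f(x))\phi(f(x'))]$ of Proposition~\ref{prop:output_limit} gives $\lim_{n\to\infty}\sigma^2(l,\projectionIndeces,\projectionCoefficients)\atWidth=\sum_{(x,\widthSymbol),(x',\widthSymbol')\in\projectionIndeces}\projectionCoefficients^{(x,\widthSymbol)}\projectionCoefficients^{(x',\widthSymbol')}\delta_{\widthSymbol\widthSymbol'}[\Sigma^{l}(x,x')-\sigma_b^2]$, which is exactly $\projectionCoefficients^{T}[\Sigma^{l}-\sigma_b^2]\projectionCoefficients$ in the notation of the statement. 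The CNN case goes through verbatim once the spatial index $\alpha$ is carried along, since $\mathbb{E}[(W^l_\alpha)(W^l_{\alpha'})]\propto\delta_{\alpha\alpha'}$ again annihilates the cross terms.

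The step I expect to be the main obstacle is the interchange of limit and expectation in the third paragraph. Convergence in distribution of $h^{l-1}_{\widthSymbolB}$ does not by itself imply convergence of $\mathbb{E}[\phi(h^{l-1}_{\widthSymbolB}(x))\phi(h^{l-1}_{\widthSymbolB}(x'))]$; one must establish a width-uniform moment bound of the type $\sup_n\mathbb{E}\big[|h^{l-1}_{\widthSymbolB}(x)|^{2+\approximationQuality}\big]<\infty$ to secure uniform integrability. Such a bound should follow by induction on $l$ from the recursive variance identity together with the linear-growth bound on $\phi$ (in the spirit of the moment estimates in \cite{matthews2018gaussian}), but it is the one genuinely analytic ingredient here; the remainder is the algebraic expansion above combined with the orthogonal-moment identity.
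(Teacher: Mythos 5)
Your proposal is correct and follows essentially the same route as the paper: expand the variance of $\gamma^{l}_{j}$, use the Haar second-moment identity of Lemma 3 together with independence of the layer-$l$ weights from the previous layers to annihilate the off-diagonal terms, and pass to the limit via uniform integrability of $\phi(h^{l-1}(x))\phi(h^{l-1}(x'))$, secured by a width-uniform moment bound on the pre-activations proved inductively from the orthogonality relation and the Lipschitz envelope, in the spirit of Lemma 17 of \cite{matthews2018gaussian}. The only minor difference is that you ask for a uniform $(2+\epsilon)$-moment bound, whereas the paper carries out exactly the second-moment induction you sketch and defers the remaining interchange-of-limits detail to the argument of \cite{matthews2018gaussian}.
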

\begin{proof}
$$\sigma^2 (l,\projectionIndeces,\projectionCoefficients)\atWidth = \frac{\sigma_w^2}{n_{l-1}} \alpha^T\mathbb{E}[W^l\phi(h^{l-1}_1) [W^l\phi(h^{l-1}_1)^T]\alpha.$$
By Lemma \ref{lem:expect} and independence between l-th layer and (l-1)-th layer, we have
$$\mathbb{E}[W^l_{i1}\phi(h^{l-1}_1) [W^l_{j1}\phi(h^{l-1}_1)^T] = \frac{\sigma_w^2}{n_{l-1}}\delta(i=j) \mathbb{E}
[\phi(h^{l-1}_1(x_i)[n])\phi(h^{l-1}_1(x_j)[n])].$$
Once we prove that $\phi(h^{l-1}_1(x_i)[n])\phi(h^{l-1}_1(x_j)[n])$ is uniformly integrable with respect to $n$, the proof goes exactly the same as the proof of Lemma 17 in \cite{matthews2018gaussian}. To build the uniform integrability, we need the Lipschitz nonlinearity property of $\phi$.
$$\mathbb{E}
[\phi(h^{l-1}_1[n]_i)\phi(h^{l-1}_1[n]_j)] \leq \mathbb{E} [(c + m |h^{l-1}_1(x_i)[n])|\cdot (c + m |h^{l-1}_1(x_j)[n])| ].$$
To claim we have a uniform bound for the right hand side, it suffices to show that $\mathbb{E} [|h^{l-1}_1(x_i)[n]|]$ and $\mathbb{E} [|h^{l-1}_1(x_i)[n]h^{l-1}_1(x_j)[n]|]$ are uniformly bounded. By the boundedness of Pearson correlation coefficient, we are down if
$$\mathbb{E} [|(h^{l-1}_1(x)[n])|^2]$$
is uniformly bounded as $n \rightarrow \infty$. This can be down by induction.

Since $\{W_{ij}\}$ is a scaled orthogonal matrix,
\begin{align*}
n_{l} \mathbb{E} [|h^{l}_1(x)[n]|^2 &= \mathbb{E} [\norm{h_{1:n_l}^l (x)[n]}^2]\\
& = \mathbb{E} [\mathbb{E} [\norm{h_{1:n_l}^l (x)[n]}^2 | h^{l-1}_{1:n_{l-1}}(x)[n]]\\
& = \sigma_w^2 \mathbb{E}[\norm{\phi(h_{1:n_{l-1}}^{l-1} (x)[n])}^2]
\end{align*}
Applying the lipschitz nonlinearity property, we get
\begin{align}
\mathbb{E}[\norm{\phi(h_{1:n_{l-1}}^{l-1} (x)[n])}^2] &= \mathbb{E} [\sum_{j=1}^{j=n} \phi^2(h_j^{l-1} (x)[n])]\\
& \leq \mathbb{E} [\sum_{j=1}^{j=n} (c+ m |h_j^{l-1} (x)[n]|)^2 ].
\end{align}
Each term in the expansion of the square is bounded by $\mathbb{E} [|h_j^{l-1} (x)[n]|^2]$, which is uniformly bounded by the inductive hypothesis. The number of terms in the expansion is of order $n$, thus $\mathbb{E} [|h^{l}_1(x)[n]|^2$ is uniformly bounded. Therefore, we can safely change the order of limit as in the proof of Lemma 17 in \cite{matthews2018gaussian}.
\end{proof}
\begin{rem}
Another way to bound the $\mathbb{E} [|h^{l}_1(x)[n]|^2$ is to apply Lemma \ref{lem:expect} to the recursion relation directly:
$$\mathbb{E} [|h^{l}_1(x)[n]|^2 ]= \frac{\sigma_w^2}{n} \mathbb{E} [\sum_j W_{1j} \phi(h^{l-1}_j(x)[n])]^2.$$
\end{rem}
To verify condition (2) and (3) of Lemma \ref{lemma:gclta} under the inductive hypothesis, we go through the same procedure 
as the proof of lemmas 15 and 16 in \cite{matthews2018gaussian}. We neglect the detail and conclude that by Lemma \ref{lem:expect} and the inductive hypothesis,
\begin{align}
S^{(l)}(\projectionIndeces,\projectionCoefficients)\atWidth &= \frac{1}{\sqrt{n_{l-1}(n)}}\sum_{\widthSymbolB=1}^{n_{l-1}}\summand^{l}_{\widthSymbolB}(\projectionIndeces,\projectionCoefficients)\atWidth\, \\
& \stackrel{P}{\longrightarrow} \normal(0,\sigma^2[\infty]).
\end{align}
Since it holds for arbitary linear functional $\alpha$, we have shown that $\{h_i^l (x)\}$ converge to independent Gaussian processes, where the covariance is specified by the recursion formula in Proposition 1. In the CNN case, adding the convolution index $\alpha$ won't change the symmetry of index $j$ and Lemma \ref{lemma:gclta} can be extended to CNN in the same way without any nontrivial change. Therefore we have simultaneously proved the following proposition for FNN and CNN:

\begin{thm}\label{thm:output_limit}
Consider a FCN at orthogonal initialization, with a Lipschitz nonlinearity $\phi$, and in the limit as $n_1, ..., n_{L-1} \to \infty$, the pre-activations $h^l_{i}$, for $i=1, ..., n_{l}$ for $l \in \{1,\dots, L\}$, tend to i.i.d centered Gaussian processes of covariance $\Sigma^{l}$ which is defined recursively by:
\begin{align*}
\Sigma^{1}(x, x') &= \frac{\sigma^2_w}{n_0} x^T x' + \sigma^2_b \\
\Sigma^{l}(x, x') &= \sigma^2_w \mathbb{E}_{f\sim\mathcal{N}\left(0,\Sigma^{l-1}\right)}[\phi(f(x)) \phi(f(x'))] + \sigma^2_b,
\end{align*}

For a CNN at orthogonal initialization, and in the limit as $n_1, ..., n_{L-1} \to \infty$, the pre-activations $h^l_{i,\alpha}$ tend to Gaussian processes of covariance $\Sigma_{\alpha,\alpha'}^{l}$ for $l \in \{1,\dots, L-1\}$, which is defined recursively by:
	\begin{align*}
	\Sigma^{1}_{\alpha,\alpha'}(x, x') &= \frac{\sigma^2_w}{n_0(2k+1)} \sum_{\beta=-k}^k  x_{\alpha+\beta}^T x'_{\alpha'+\beta} + \sigma^2_b \\
	\Sigma^{l}_{\alpha,\alpha'}(x, x') &= \frac{\sigma^2_w}{(2k+1)} \sum_{\beta=-k}^k \big[ \mathbb{E}_{f\sim\mathcal{N}\left(0,\Sigma_{\alpha+\beta,\alpha'+\beta}^{l-1}\right)}\\
	&        [\phi(f(x_{\alpha+\beta})) \phi(f(x_{\alpha'+\beta}'))] \big] + \sigma^2_b.
	\end{align*}
with covariance of output layer $\Sigma^{L}(x, x') =  {\rm Tr} \big[ \mathbb{E}_{f\sim\mathcal{N}\left(0,\Sigma_{\alpha,\alpha'}^{L-1}\right)}  [\phi(f(x_{\alpha})) \phi(f(x_{\alpha'}'))] \big]$ .
\end{thm}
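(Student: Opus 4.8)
The plan is to prove the theorem by induction on the layer index $l$, using at each step the fact (recalled from \cite{matthews2018gaussian}) that convergence of the finite-dimensional marginals of $\{h^l_i(x)\}$ is equivalent to convergence of every one-dimensional linear projection $S^{(l)}(\projectionIndeces,\projectionCoefficients)\atWidth$. The essential departure from the Gaussian-weight argument is that the entries of a Haar-distributed orthogonal matrix are correlated, so the classical i.i.d.\ CLT is unavailable; instead I would invoke the central limit theorem for exchangeable sequences, Lemma \ref{lemma:gclta}, after writing each projection as a normalized sum $\frac{1}{\sqrt{n_{l-1}}}\sum_j \summand^{l}_{j}(\projectionIndeces,\projectionCoefficients)\atWidth$ over the column index $j$, which is an exchangeable sequence precisely because the Haar law of $W^l$ is invariant under column permutations.

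For the base case $l=1$, the pre-activation $h^1_i$ is an exact linear combination of orthogonal entries, and the second-moment formula $\mathbb{E}[W^1_{ij}W^1_{kl}] = \delta_{ik}\delta_{jl}$ from Lemma \ref{lem:expect} (with the $n$-scaling correction of its accompanying remark) gives at once that the $h^1_i$ are jointly centered with covariance $\Sigma^1(x,x') = \frac{\sigma^2_w}{n_0}x^Tx' + \sigma^2_b$; the vanishing of $\mathbb{E}[W^1_{ik}W^1_{jl}]$ for $i\neq j$ gives independence across $i$, and asymptotic Gaussianity of a single coordinate follows from Lemma \ref{lem:haar}. For the inductive step, assuming $h^{l-1}_i$ converge to i.i.d.\ centered Gaussian processes of covariance $\Sigma^{l-1}$, I would verify the three hypotheses of Lemma \ref{lemma:gclta} for the summands $\summand^l_j$: condition (a) is immediate from $\mathbb{E}[W_{ki}W_{kj}] = 0$ for $i\neq j$, which kills the cross-covariance of two summands; conditions (b) and (c), the fourth- and third-moment estimates, are carried out exactly as in Lemmas 15--16 of \cite{matthews2018gaussian}, now substituting the orthogonal fourth-moment formula of Lemma \ref{lem:expect}, whose contributions differ from the Gaussian (Wick) contractions only by $O(1/n)$ corrections after the scaling normalization. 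Convergence of the projection variance to $\projectionCoefficients^T[\Sigma^l(x,x') - \sigma^2_b]\projectionCoefficients$ is supplied by the preceding proposition, and combining these facts yields joint Gaussianity of $\{h^l_i(x)\}$ with the stated recursion for $\Sigma^l$; independence across $i$ then follows because $\mathbb{E}[W_{ik}W_{jl}] = 0$ for $i\neq j$ forces $\lim_{n_{l-1}\to\infty}\Cov(h^l_i(x),h^l_j(x')) = 0$, and zero covariance between jointly Gaussian variables is independence.

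The main obstacle is justifying the interchange of limits: the induction first sends the widths of layers $1,\dots,l-2$ to infinity and only afterwards sends $n_{l-1}\to\infty$, so one must check that $\phi(h^{l-1}_1(x_i)\atWidth)\phi(h^{l-1}_1(x_j)\atWidth)$ is uniformly integrable in $n$. This is where the Lipschitz nonlinearity of Definition \ref{def2} enters: from $|\phi(x)|\leq \envelopeconstant + \envelopegradient|x|$ and the orthogonality-preserved identity $n_l\,\mathbb{E}|h^l_1(x)\atWidth|^2 = \sigma^2_w\,\mathbb{E}\norm{\phi(h_{1:n_{l-1}}^{l-1}(x)\atWidth)}^2 \leq \sigma^2_w\,\mathbb{E}\sum_j(\envelopeconstant + \envelopegradient|h^{l-1}_j(x)\atWidth|)^2$, an $O(n)$-term sum of quantities uniformly bounded by the inductive hypothesis, one obtains a uniform bound on $\mathbb{E}|h^l_1(x)\atWidth|^2$; together with the boundedness of the Pearson correlation coefficient this supplies the required uniform integrability, exactly as in Lemma 17 of \cite{matthews2018gaussian}. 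Finally, for the CNN case one only notes that the extra spatial index $\alpha$ does not disturb exchangeability in the channel index $j$, and that $\mathbb{E}[(W^l_\alpha)(W^l_{\alpha'})] = 0$ for $\alpha\neq\alpha'$, so the same induction runs through, with the average over $\beta$ producing the convolutional recursion for $\Sigma^l_{\alpha,\alpha'}$ and the trace over $\alpha$ in the terminal fully-connected layer producing the output covariance $\Sigma^L$.
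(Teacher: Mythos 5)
Your proposal is correct and follows essentially the same route as the paper's own proof: an induction over layers in which each finite-dimensional marginal is reduced to linear projections, the exchangeable-sequence CLT of Lemma \ref{lemma:gclta} replaces the i.i.d.\ CLT using column-permutation invariance of the Haar measure, the orthogonal moment formulas of Lemma \ref{lem:expect} supply conditions (a)--(c) and the zero cross-covariances giving independence in $i$, and the Lipschitz envelope yields the uniform integrability needed to identify the limiting variance, with the CNN case handled by the same symmetry argument plus the trace over the final fully-connected layer. No gaps worth flagging.
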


The covariance of last layer is a trace over $\alpha,\alpha' \in m \times m$, because the last layer is fully connected:
$$
\Sigma^{L}(x, x') =  \sum_\alpha \delta_{\alpha,\alpha'} \big[ \mathbb{E}_{f\sim\mathcal{N}\left(0,\Sigma_{\alpha,\alpha'}^{L-1}\right)}  [\phi(f(x_{\alpha})) \phi(f(x_{\alpha'}'))] \big]
$$

\subsection{NTK at Initialization}

In the infinite-width limit, the neural tangent kernel which is random at networks with orthogonal initialization, converges in probability to a deterministic limit. 
\begin{thm}\label{thm:convergence_NTK_initialization}
Consider a FCN at orthogonal initialization, with a Lipschitz nonlinearity $\phi$, and in the limit as the layers width $n_1, ..., n_{L-1} \to \infty$, the NTK $\Theta^L_0(x,x')$, converges in probability to a deterministic limiting kernel: $$\Theta_0^{L}(x,x') \to \Theta^{L}_\infty(x,x') \otimes {\bf I}_{n_L \times n_L}.$$
The scalar kernel $\Theta^{L}_\infty(x,x')$ is defined recursively by
\begin{align*}
    \Theta^{1}_\infty(x, x') &= \Sigma^{1}(x, x') \\
    \Theta^{l}_\infty(x, x') &= \sigma^2_w  \dot{\Sigma}^{l}(x, x') \Theta^{l-1}_\infty(x, x')  + \Sigma^{l}(x, x'),
\end{align*}
where
\[
	\dot{\Sigma}^{l}\left(x,x'\right)=
	\mathbb{E}_{f\sim\mathcal{N}\left(0,\Sigma^{\left(l-1\right)}\right)}\left[\dot{\phi}\left(f\left(x\right)\right)\dot{\phi}\left(f\left(x'\right)\right)\right],
\]

For a CNN at orthogonal initialization, and in the infinite-channel limit, the NTK $\Theta^{L}_0(x,x')$, converges in probability to a deterministic limiting kernel: $$\Theta^{L}_0(x,x') \to \Theta^{L}_\infty(x,x') \otimes {\bf I}_{n_L \times n_L}.$$
The scalar kernel $\Theta^{L}_\infty(x,x')$ is given recursively by
\begin{align*}
   & {\Theta_{\alpha,\alpha'}^{1}}_\infty(x, x') = \Sigma_{\alpha,\alpha'}^{1}(x, x') \\
   & {\Theta_{\alpha,\alpha'}^{l}}_\infty(x, x') = \frac{\sigma^2_w}{(2k+1)} \sum_\beta \big[ \dot{\Sigma}_{\alpha+\beta,\alpha'+\beta}^{l}(x, x')
                                              {\Theta_{\alpha+\beta,\alpha'+\beta}^{l-1}}_\infty(x, x')  + \Sigma_{\alpha+\beta,\alpha'+\beta}^{l}(x, x') \big] \\
   & \Theta^L_\infty(x,x')  = {\rm Tr}\big[  \dot{\Sigma}_{\alpha,\alpha'}^{L}(x, x') 
                                                 {\Theta_{\alpha,\alpha'}^{L-1}}_\infty(x, x')  + \Sigma_{\alpha,\alpha'}^{L}(x, x')   \big]
\end{align*}
\end{thm}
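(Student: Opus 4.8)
The plan is to follow the layer-by-layer argument Jacot et al.\ \cite{jacot2018neural} use for Gaussian NTK convergence, replacing every i.i.d.\ moment computation by the Haar-orthogonal moment formulas of Lemma \ref{lem:expect} and every central-limit step by the exchangeable CLT of Lemma \ref{lemma:gclta}; the Gaussian-process behaviour of the pre-activations established in Theorem \ref{thm:output_limit} is taken as the base input. First I would decompose the NTK by layers: with $f=h^L$ the chain rule gives
\[
\Theta^L_0(x,x')=\sum_{l=1}^{L}\Big[\big\langle\nabla_{W^l}f(x),\nabla_{W^l}f(x')\big\rangle+\big\langle\nabla_{b^l}f(x),\nabla_{b^l}f(x')\big\rangle\Big].
\]
Introducing the backpropagated signal $\delta^l_i(x):=\partial f(x)/\partial h^l_i(x)$ and using $\partial f/\partial W^l_{ij}=\tfrac{\sigma_w}{\sqrt{n_{l-1}}}\,\delta^l_i(x)\,\phi(h^{l-1}_j(x))$ and $\partial f/\partial b^l_i=\sigma_b\,\delta^l_i(x)$, the $l$-th summand factorizes as $\Pi^l(x,x')\big(\tfrac{\sigma_w^2}{n_{l-1}}\sum_j\phi(h^{l-1}_j(x))\phi(h^{l-1}_j(x'))+\sigma_b^2\big)$, where $\Pi^l(x,x'):=\sum_i\delta^l_i(x)\delta^l_i(x')$ is the ``backward'' kernel. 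By Theorem \ref{thm:output_limit} together with the uniform-integrability estimate of the previous subsection (the Lipschitz envelope bounds the relevant second moments, and the empirical average is a functional of an exchangeable array controlled by Lemma \ref{lemma:gclta} and Chebyshev's inequality), the forward factor tends in probability to $\Sigma^l(x,x')$. So the whole statement reduces to identifying the limit of $\Pi^l$.

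For $\Pi^l$ I would use $\delta^l_i(x)=\dot\phi(h^l_i(x))\tfrac{\sigma_w}{\sqrt{n_l}}\sum_k W^{l+1}_{ki}\delta^{l+1}_k(x)$, giving
\[
\Pi^l(x,x')=\frac{\sigma_w^2}{n_l}\sum_i\dot\phi(h^l_i(x))\dot\phi(h^l_i(x'))\sum_{k,k'}W^{l+1}_{ki}W^{l+1}_{k'i}\,\delta^{l+1}_k(x)\delta^{l+1}_{k'}(x').
\]
The claim is that this concentrates on $\sigma_w^2\,\dot\Sigma^{l+1}(x,x')\,\Pi^{l+1}(x,x')$, so that an induction from $\Pi^L\equiv{\bf I}_{n_L}$ downward, combined with the forward factor, yields $\Theta^l_\infty=\sigma_w^2\dot\Sigma^l\,\Theta^{l-1}_\infty+\Sigma^l$ with $\Theta^1_\infty=\Sigma^1$, i.e.\ the stated recursion. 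To see the concentration I would condition on layers $\le l$ (freezing the $\dot\phi(h^l_i)$), observe that permuting the index $i$ permutes the rows of $W^l$ and the columns of $W^{l+1}$ simultaneously and hence leaves the joint Haar law invariant, so the $i$-indexed array is exchangeable; the second-moment identity $\mathbb E[W^{l+1}_{ki}W^{l+1}_{k'i}]=\delta_{kk'}$ (the scaled normalization of the Remark after Lemma \ref{lem:expect}) then isolates the mean $\sigma_w^2\big(\tfrac1{n_l}\sum_i\dot\phi(h^l_i(x))\dot\phi(h^l_i(x'))\big)\Pi^{l+1}(x,x')$, with $\tfrac1{n_l}\sum_i\dot\phi(h^l_i(x))\dot\phi(h^l_i(x'))\to\dot\Sigma^{l+1}(x,x')$ again by Theorem \ref{thm:output_limit}, while the fourth-moment formula of Lemma \ref{lem:expect} bounds the variance of the off-diagonal ($k\ne k'$) and fluctuation parts by $O(1/n)$; Lemma \ref{lemma:gclta} then passes the exchangeable sum to its limit.

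I expect the genuine obstacle to be exactly the ``gradient independence'' issue: $(W^{l+1})^{T}$ occurring in backprop is, up to scaling, the inverse of the same matrix $W^{l+1}$ used in the forward pass, and $\delta^{l+1}_k$ still depends on $W^{l+1}$ through the forward propagation of layers $l+1,\dots,L$, so conditioning on layers $\le l$ does not literally make $W^{l+1}$ independent of the $\delta^{l+1}$'s. This is more delicate for Haar-orthogonal weights than in the Gaussian setting, where the arguments of \cite{yang2019scaling,yang2020tensor} apply directly; I would handle it by noting that each $\delta^{l+1}_k$ is itself an average over $n_{l+1}$ coordinates whose sensitivity to any single entry of $W^{l+1}$ is $O(1/\sqrt n)$, so the residual correlation between $\{\delta^{l+1}_k\}$ and $W^{l+1}$ contributes only $o(1)$ to $\Pi^l$ -- making this quantitative through Lemma \ref{lem:expect} (to fourth order) and exchangeability is the bulk of the work. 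Finally, the CNN case is structurally identical with the channel index playing the role of the exchangeable index and the spatial size fixed: weight sharing across the spatial locations $\alpha$ turns the layer-$l$ contribution into the filter average $\tfrac1{2k+1}\sum_\beta$ of $\dot\Sigma^l_{\alpha+\beta,\alpha'+\beta}$ and $\Sigma^l_{\alpha+\beta,\alpha'+\beta}$ in the recursion, the fully connected read-out contracts $\alpha$ with $\alpha'$ and produces the trace expression for $\Theta^L_\infty$, and the Remark after the theorem statement takes care of ReLU since $\dot\phi$ only enters under an expectation, off a measure-zero set. Adding the finitely many per-layer limits (a finite sum of convergences in probability is a convergence in probability) gives $\Theta^L_0(x,x')\to\Theta^L_\infty(x,x')\otimes{\bf I}_{n_L\times n_L}$.
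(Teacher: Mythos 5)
Your decomposition is genuinely different from the paper's, and the difference matters. You expand the NTK as a sum over layers of ``backward kernel $\times$ forward factor'', with $\Pi^l(x,x')=\sum_i\delta^l_i(x)\delta^l_i(x')$ defined through backpropagation from the output of the \emph{full} depth-$L$ network. In that decomposition the matrix $W^{l+1}$ appearing explicitly in the recursion for $\delta^l$ is \emph{not} independent of $\delta^{l+1}$ (nor of $\dot\phi(h^{l+1})$), since $\delta^{l+1}$ depends on $W^{l+1}$ through the forward pass. You correctly identify this as the crux, but your proposed resolution --- ``each $\delta^{l+1}_k$ has $O(1/\sqrt n)$ sensitivity to any single entry of $W^{l+1}$, so the residual correlation is $o(1)$'' --- is precisely the gradient-independence heuristic that required Yang's machinery to make rigorous in the Gaussian case, and it is harder here because all entries of a Haar-orthogonal matrix are correlated, so Lemma \ref{lem:expect}'s second and fourth moments alone do not quantify that sensitivity argument. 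As written, the concentration of $\Pi^l$ on $\sigma_w^2\,\dot\Sigma^{l+1}\,\Pi^{l+1}$ --- the step on which the whole recursion rests --- is asserted rather than proved, so this is a genuine gap, not just a compression.

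The paper's proof avoids this obstacle by choosing a different induction: it inducts over the \emph{depth} of the network, writing the NTK of the depth-$l$ network (output $h^l$) as the contribution of the $l$-th layer's own parameters, $\alpha_l=\frac{\sigma_w^2}{n_{l-1}}\sum_i\phi(h^{l-1}_i(x))\phi(h^{l-1}_i(x'))+\sigma_b^2$, plus the contribution of the earlier parameters, $\frac{\sigma_w^2}{n_{l-1}}\sum_{i,i'}\Theta^{l-1}_{ii'}(x,x')\dot\phi(h^{l-1}_i(x))\dot\phi(h^{l-1}_{i'}(x'))W^l_{ik}W^l_{i'k'}$. In this form $W^l$ is a fresh Haar matrix exactly independent of $\Theta^{l-1}_{ii'}\dot\phi(h^{l-1}_i)\dot\phi(h^{l-1}_{i'})$, so no gradient-independence argument is needed at all: the mean is read off from the second-moment formula, the variance is bounded by splitting the expansion into the index-coincidence cases $i\neq i'$, $(i=i')\neq(j=j')$, $(i=i')=(j=j')$ and applying the fourth-moment formula of Lemma \ref{lem:expect}, and Chebyshev gives convergence in probability; the forward factor $\alpha_l$ is handled the same way (with the uniform-integrability bound from the NNGP section for Lipschitz $\phi$). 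Your treatment of the forward factor, of the first layer, and of the CNN case (channel index as the exchangeable index, filter average $\frac{1}{2k+1}\sum_\beta$, trace over $\alpha$ for the fully connected readout) matches the paper, and your unrolled sum would reproduce the stated recursion if the $\Pi^l$ limit were established --- but to make your route rigorous for the orthogonal ensemble you would either need to develop the quantitative decoupling you sketch, or restructure the argument as the paper does so that the needed independence holds exactly.
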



\begin{proof}
We show the result in the framework of NTK parameterization, while the argument for standard parameterization can be derived in the same way with a similar result, see the details for the Gaussian initialization in \cite{sohl2020infinite}. For the input layer $L = 1$, taking the derivative with respect to $W^1_{ij}$, $b^1_j$, we have
\begin{align*}
\Theta^{1}_{kk'} (x,x') &= \frac{\sigma^2_w}{n_0} \displaystyle\sum_{i = 1}^{ n_0} \displaystyle\sum_{j = 1}^{n_0} x_i x_i ' \delta_{jk}\delta_{jk'} + \sigma^2_b \displaystyle\sum_{j = 1}^{n_0} \delta_{jk}\delta_{jk'}\\
& =\frac{\sigma^2_w}{n_0}x^T x' \delta_{kk'} + \sigma^2_b \delta_{kk'}.
\end{align*}
From $(l-1)$-th layer to $l$-th layer, by the inductive hypothesis, as $n_1, \dots, n_{l-2} \rightarrow \infty$, the pre-activations $h^{l-1}_i$ are i.i.d centered Gaussian with covariance $\Sigma^{l-1}$ and $\Theta^{l-1}_{ii'} (x,x')$ converges to:
$$(\partial_{\theta} h^{l-1}_i(x))^T (\partial_{\theta} h^{l-1}_{i'}(x')) \rightarrow \Theta^{l-1}_{\infty} (x,x') \delta_{ii'}.$$
Now we calculate the NTK for $l$ layer network and divide the parameters into two parts. The first part only involves the parameters of the $l$-th layer, and the other is the collection of parameters from previous $1,\dots ,l-1$ layers. 
The first part of the NTK is given by
$$\alpha_l := \partial_{w^l} h^l_k (x) \cdot \partial_{w^l} h^l_k (x') + \partial_{b^l} h^l_k (x) \cdot \partial_{b^l} h^l_k (x') = \sum_i \frac{\sigma^2_w}{n_{l-1}}\phi(h^{l-1}_i (x))\phi(h^{l-1}_i (x')) + \sigma^2_b. $$
Note that in the Gaussian initialization,
$$\phi(h^{l-1}_i (x))\phi(h^{l-1}_i (x'))$$
is independent of
$$\phi(h^{l-1}_j (x))\phi(h^{l-1}_j (x')),$$
for $i \ne j$ before taking the $n_{l-1} \rightarrow \infty$ limit. Thus for the $\frac{1}{n_{l-1}}$ scaling, we already observe that $\sum_i \frac{\sigma^2_w}{n_{l-1}}\phi(h^{l-1}_i (x))\phi(h^{l-1}_i (x'))$ tends to its mean by the classical law of large numbers. If we take the limit sequentially, since $h_i^{l-1}(x)$ and $h_j^{l-1}(x)$ are independent Gaussian by the previous section, we know that it tends to $\Sigma^l (x,x')$ for the same reason.

If we want to know how $\alpha_l$ behaves when the width tends to infinity  simultaneously, the nonlinear activation would break the non-asymptotic independence. So we first assume that we work in the linear network category. Then
\begin{align*}
Var[\alpha_l [n]] &= (\frac{\sigma_w^2}{n_{l-1}[n]})^2 Var[\sum_i h_i^{l-1}(x)\cdot h_i^{l-1}(x')]\\
& = (\frac{\sigma_w^2}{n_{l-1}[n]})^2 (\sum_{i=1}^n Var [h_i^{l-1}(x)h_i^{l-1}(x')] +
\sum_{i \ne j} Cov[h_i^{l-1}(x)h_i^{l-1}(x'),h_j^{l-1}(x)h_j^{l-1}(x')]),
\end{align*}
where
\begin{align*}Cov[h_i^{l-1}(x)h_i^{l-1}(x')h_j^{l-1}(x)h_j^{l-1}(x')] = & \mathbb{E}[h_i^{l-1}(x)h_i^{l-1}(x'),h_j^{l-1}(x)h_j^{l-1}(x')] \\ &- \mathbb{E}[h_i^{l-1}(x)h_i^{l-1}(x')]\mathbb{E}[h_j^{l-1}(x)h_j^{l-1}(x')].\end{align*}
By Lemma \ref{lem:expect}, 
$$\mathbb{E} [h_i^{l-1}h_i^{l-1}] = \frac{\sigma_w^2}{n_{l-2}} \sum_{k=1}^{n_{l-2}} \mathbb{E}[\phi^2(h_k^{l-2})].$$
The right-hand side is independent of index $i$ as expected.
\begin{align*}
\mathbb{E}[h_i^{l-1}(x)h_i^{l-1}(x'),h_j^{l-1}(x)h_j^{l-1}(x')] & = (\frac{\sigma_w^2}{n_{l-2}[n]})^2 \frac{n_{l-2}^2(n_{l-2}+1)}{(n-1)n(n+2)} (\sum_{k=1}^{n_{l-2}} \mathbb{E}[\phi^2(h_k^{l-2})])^2 \\
&+ (\frac{\sigma_w^2}{n_{l-2}[n]})^2 \frac{n_{l-2}^3(n_{l-2}-1)}{(n-1)n(n+2)}\mathbb{E}[\phi^2(h_1^{l-2})\phi^2(h_2^{l-2})]
\end{align*}
Thus
$$Cov[h_i^{l-1}(x)h_i^{l-1}(x')h_j^{l-1}(x)h_j^{l-1}(x')] \sim O(\frac{1}{n_{l-2}[n]}).$$
This implies that
$$Var[\alpha_l [n]] \sim O(\frac{1}{n}).$$
Let $\mu_n$ denote the mean of $\alpha_l [n]$, then by Chebyshev's inequality, for $\forall \epsilon > 0$,
$$P(|\alpha_l [n] - \mu_n|\ge \epsilon) \sim O(\frac{1}{n}).$$
Since $\lim_{n \rightarrow \infty} \mu_n = \Sigma_l$, we have
$$P(|\alpha_l [n] - \Sigma_l|\ge \epsilon) \sim O(\frac{1}{n}),$$
for n large enough. In conclusion, we have $\alpha_l [n]$ tends to $\Sigma_l(x,x')$. \\

In the non-linear activation case, if the activation satisfies definition \ref{def2}, we still have the asymptotic result
$$Var[\alpha_l [n]] \rightarrow 0.$$ Therefore, by Chebyshev's inequality, we get
$$P(|\alpha_l [n] - \Sigma_l|\ge \epsilon) \rightarrow 0,$$
for $\forall$ $\epsilon > 0$.

For the second part, denoting the parameters of the previous $l- 1$ layers as $\tilde{\theta}$, we have
$$\partial_{\tilde{\theta}} h^l_k (x) = \frac{\sigma_w}{\sqrt{n_{l-1}}}\displaystyle\sum_{i = 1}^{n_{l-1}} \partial_{\tilde{\theta}}h_i^{l-1} (x) \dot{\phi}(h^{l-1}_i (x)) W_{ik}^l.$$
By the induction hypothesis, the NTK of $(l-1)$-layer networks $\Theta^{l-1}_{kk'} (x,x')$ converges to a diagonal kernel as $n_1, \dots, n_{l-2} \rightarrow \infty$. we denote the second part of NTK by,
$$\alpha_{l-1} [n] := \frac{\sigma^2_w}{n_{l-1}} \displaystyle\sum_{i,i'= 1}^{n_{l-1}} \Theta^{l-1}_{ii'} (x,x')\dot{\phi}(h^{l-1}_i (x)) \dot{\phi}(h^{l-1}_{i'} (x'))W_{ik}^l W_{i'k'}^l.$$
Note that $\{W_{ik}^l W_{i'k'}^l\}$ is independent of $\Theta^{l-1}_{ii'} (x,x')\dot{\phi}(h^{l-1}_i (x)) \dot{\phi}(h^{l-1}_{i'} (x'))$, this allows us to prove that it converges to a deterministic kernel by induction.\\
By Lemma \ref{lem:expect},
$$\mathbb{E}[\alpha_{l-1} [n]] = \frac{\sigma^2_w}{n_{l-1}} \delta_{kk'}\delta_{ii'} \displaystyle\sum_{i,i'= 1}^{n_{l-1}}\mathbb{E}[\Theta^{l-1}_{ii'} (x,x')\dot{\phi}(h^{l-1}_i (x)) \dot{\phi}(h^{l-1}_{i'} (x'))],$$
$$Var[\alpha_{l-1} [n]] = \mathbb{E}\big[  ( \alpha_{l-1} - \mathbb{E}[\alpha_{l-1}])^2 \big] = \frac{\sigma^4_w}{n^2_{l-1}}\mathbb{E} \big[ \big( \displaystyle\sum_{i,i'= 1}^{n_{l-1}} \Theta^{l-1}_{ii'} (x,x')\dot{\phi}(h^{l-1}_i (x)) \dot{\phi}(h^{l-1}_{i'} (x'))(W_{ik}^l W_{i'k'}^l -  \delta_{kk'}\delta_{ii'}) \big)^2\big].$$
Expanding the square and note that
$$\mathbb{E}[W_{ik}W_{i'k'}W_{jk}W_{j'k'}] = 0,$$
if $i \ne j \ne i' \ne j'$. This implies that $n_l (n_{l-1}-1)(n_{l-1}-2)(n_{l-1}-3)$ number of terms in the expansion are zero. Now we reorganize the left terms into three groups:
\begin{itemize}
 \item $i \ne i'$:\\
 The only terms in the expansion that survive after taking the expectation are of the form:
 $W_{ik}W_{i'k'}W_{ik'}W_{i'k}.$ The expectation is separated into
 $$\mathbb{E}[\Theta^{l-1}_{ii'} (x,x')\dot{\phi}(h^{l-1}_i (x)) \dot{\phi}(h^{l-1}_{i'} (x'))]^2 \cdot \mathbb{E}[W_{ik}W_{i'k'}W_{ik'}W_{i'k}].$$
 There are $O(n_{l-1}\cdot (n_{l-1}-1))$ such terms in the expansion, by Lemma \ref{lem:expect},
 $$\mathbb{E}[W_{ik}W_{i'k'}W_{ik'}W_{i'k}] \sim O(1).$$
 We conclude that
 $$\frac{\sigma^4_w}{n^2_{l-1}}\mathbb{E} \big[ \big( \displaystyle\sum_{i\ne i'} \Theta^{l-1}_{ii'} (x,x')\dot{\phi}(h^{l-1}_i (x)) \dot{\phi}(h^{l-1}_{i'} (x'))(W_{ik}^l W_{ik'}^l) \big)^2\big] \sim O(\mathbb{E}[\Theta^{l-1}_{ii'} (x,x')\dot{\phi}(h^{l-1}_i (x)) \dot{\phi}(h^{l-1}_{i'} (x'))]^2).$$
 \item $(i =i')\ne (j = j')$:\\
 We need to estimate a fourth order moment
 $$\gamma := \mathbb{E} \big[(W^l_{ik}W^l_{ik'} -  \delta_{kk'})(W^l_{i'k}W^l_{i'k'} -  \delta_{kk'})\big].$$
 If $k = k'$, by Lemma \ref{lem:expect},
 \begin{align*}
 \gamma &= -\frac{2 n^2_{l-1}}{(n_{l-1}-1)n_{l-1}(n_{l-1}+2)} + \frac{(n_{l-1}+1)n^2_{l-1}}{(n_{l-1}-1)n_{l-1}(n_{l-1}+2)} - 1 \sim O(\frac{1}{n_{l-1}}).\\
 \end{align*}
 
 If $k \neq k'$,
 $$ \mathbb{E} \big[ (W_{ik}W_{ik'}W_{i'k}W_{i'k'}) \big] = -\frac{n^2_{l-1}}{(n_{l-1}-1)n_{l-1}(n_{l-1}+2)} \sim O(\frac{1}{n_{l-1}}). $$
 In each case, we get that $\gamma \sim O(\frac{1}{n_{l-1}})$. Thus we have \\
 $$\frac{\sigma^4_w}{n^2_{l-1}} \mathbb{E} \big[  \displaystyle\sum_{i=1}^{n_{l-1}} \displaystyle\sum_{j=1}^{n_{l-1}} \big( \Theta^{l-1}_{ii}\Theta^{l-1}_{jj} (x,x')\dot{\phi}^2(h^{l-1}_i (x)) \dot{\phi}^2(h^{l-1}_{j} (x')) \big) \big( W^l_{ik}W^l_{ik'} -  \delta_{kk'} \big) \big( W^l_{jk}W^l_{jk'} -  \delta_{kk'} \big) \big] \sim O(\frac{1}{n_{l-1}}).$$
 \item $(i =i')= (j = j')$:
We have
 $$\mathbb{E} \big[ (W^l_{ik}W^l_{ik'} -  \delta_{kk'})^2 \big] = \mathbb{E} [W^l_{ik}W^l_{ik'}]^2 -  \delta_{kk'}.$$
 Note that the weights are drawn from the NTK parameterization, and by Lemma \ref{lem:expect},
 $$\mathbb{E} [ (W^l_{ik})^2 (W^l_{ik'})^2] \sim O(1).$$
 Therefore the number of terms with the same index i are of order $ O(\frac{1}{n_{l-1}})$ and we get
 $$\frac{\sigma^4_w}{n^2_{l-1}} \mathbb{E} \big[  \displaystyle\sum_{i=1}^{n_{l-1}} \big( \Theta^{l-1}_{ii} (x,x')\dot{\phi}(h^{l-1}_i (x)) \dot{\phi}(h^{l-1}_{i} (x')) \big)^2 \big( W^l_{ik}W^l_{ik'} -  \delta_{kk'} \big)^2 \big] \sim O(\frac{1}{n_{l-1}}).$$
\end{itemize}
Combing the three groups, we have
$$Var[\alpha_{l-1} [n]] \sim O(\frac{1}{n_{l-1}}) + O(\mathbb{E}[\Theta^{l-1}_{ii'} (x,x')\dot{\phi}(h^{l-1}_i (x)) \dot{\phi}(h^{l-1}_{i'} (x'))]^2).$$
By the inductive hypothesis,
$$\mu := \lim_{n_{l-1} \rightarrow \infty} \mathbb{E} [\alpha_{l-1} [n]] = \sigma_w^2 \Theta^{l-1}_{\infty} (x,x') \dot{\Sigma}^{l}\left(x,x'\right) \delta_{kk'}.$$
By Chebyshev's inequality,
$$P(|\alpha_{l-1} [n] - \mu| \ge \epsilon) \leq \frac{2}{\epsilon^2}(O(\frac{1}{n_{l-1}}) + O(\mathbb{E}[\Theta^{l-1}_{ii'} (x,x')\dot{\phi}(h^{l-1}_i (x)) \dot{\phi}(h^{l-1}_{i'} (x'))]^2)).$$
By the inductive hypothesis and the integrability result from the last section,
$$\mathbb{E}[\Theta^{l-1}_{ii'} (x,x')\dot{\phi}(h^{l-1}_i (x)) \dot{\phi}(h^{l-1}_{i'} (x'))]^2 \stackrel{n_{l-1} \rightarrow \infty}{\longrightarrow} 0.$$
Thus we obtain that second part of the NTK tends to $\sigma^2_w \Theta^{l-1}_{\infty} (x,x') \dot{\Sigma}^{l} (x,x')$ if we let the width go to infinity simultaneously.\\
Combining the two parts, we have
$$\Theta^{l}_{\infty} (x,x') =\sigma^2_w \Theta^{l-1}_{\infty} (x,x') \dot{\Sigma}^{l} (x,x') + \Sigma^{l} (x,x').$$

In the CNN case, when $L=1$,
$$\Theta^{l}_{\alpha,\alpha'}(x,x') = \frac{\sigma_w^2}{n_0 (2k+1)}\sum_{\beta}  x^T_{\alpha+\beta}x_{\alpha'+\beta} +  \sigma_b^2.$$
Here we omit the subscript $\infty$ for simplicity. Assume the NTK formula is true for $L=l-1$, then the first part of the l-layer NTK is given by
$$\alpha_L \rightarrow (\frac{\sigma_w^2}{n_0 (2k+1)}\sum_{\beta} \mathbb{E}[\phi(h_{\alpha + \beta}^{l-1}(x))\phi(h_{\alpha' + \beta}^{l-1}(x'))] + \sigma_b^2),$$
by the same moment argument as before. For the second part, we still have
$$\Theta^{l-1}_{(i,\alpha+\beta),(i',\alpha'+\beta)}(x,x') = \delta_{ii'} \Theta^{l-1}_{\alpha+\beta,\alpha'+\beta}(x,x').$$
Thus, for the second part,
$$\partial_{\tilde{\theta}}h_{k,\alpha}^l (x) \cdot \partial_{\tilde{\theta}}h_{k,\alpha'}^l (x) \rightarrow \sigma_w^2 \Theta^{l-1}_{\alpha+\beta,\alpha'+\beta}(x,x')\dot{\Sigma}_{\alpha+\beta,\alpha'+\beta}^{l}(x, x'),$$
since $\mathbb{E} [W_{\alpha}W_{\alpha'} ] = 0$ for $\alpha \ne \alpha'$. We conclude that
$${\Theta_{\alpha,\alpha'}^{l}}_\infty(x, x') = \frac{\sigma^2_w}{(2k+1)} \sum_\beta \dot{\Sigma}_{\alpha+\beta,\alpha'+\beta}^{l}(x, x') {\Theta_{\alpha+\beta,\alpha'+\beta}^{l-1}}_\infty(x, x')  + \Sigma_{\alpha+\beta,\alpha'+\beta}^{l}(x, x').$$

For the last (fully-connected) layer, we have,
$$
\begin{aligned}
\Theta^L_\infty(x,x') & = \sum_\alpha \delta_{\alpha,\alpha'} \big[  \dot{\Sigma}_{\alpha,\alpha'}^{L}(x, x')  {\Theta_{\alpha,\alpha'}^{L-1}}_\infty(x, x')  + \Sigma_{\alpha,\alpha'}^{L}(x, x')   \big] \\
                      & = {\rm Tr}\big[  \dot{\Sigma}_{\alpha,\alpha'}^{L}(x, x')  {\Theta_{\alpha,\alpha'}^{L-1}}_\infty(x, x')  + \Sigma_{\alpha,\alpha'}^{L}(x, x')   \big] 
\end{aligned}
$$
 
\end{proof}
\begin{rem}
Note that the proof for the NNGP and NTK behavior is mainly based on the moment calculation in Lemma \ref{lem:expect}. Let's take the second moment as an example. $\mathbb{E}[W_{ij}W_{rs}]$can be seen as the correlation between one normalized vector and another normal vector sampled uniformly from the orthogonal complement. From this point view, the moment is independent of the matrix structure, and the results of Lemma 3 can be extended to the case where $(W_{ij})_{n \times m}$ satisfies 
$$WW^T = I,\ \ \ \ n \leq m.$$
Thus, we believe that the previous theorems also hold without assuming that the weight matrix is a  square matrix.
\end{rem}

\subsection{NTK during Training}

The NTK of networks with orthogonal initialization stays constant during training in the infinite-width limit. More precisely, we can give an upper bound for the change of parameters and NTK at wide width.
  \begin{thm}\label{thm:main}
            Assume that  $\lambda_{\rm min}(\Theta_\infty) >0$ and $\eta_{\rm critical} = \frac{\lambda_{\rm min}(\Theta_\infty)+\lambda_{\rm max}(\Theta_\infty)}{2}$. Let $n = n_1, ..., n_{L-1}$ be the width of hidden layers. Consider a FCN at orthogonal initialization, trained by gradient descent with learning rate $\eta < \eta_{\rm critical}$ (or gradient flow). For every input $x\in \mathbb R^{n_0}$ with $\|x\|_2\leq 1$, with probability arbitrarily close to 1,  
           \begin{equation}
            \sup_{t\geq 0}\frac{\left\|\theta_t -\theta_0\right\|_2}{\sqrt n},
            \,\, 
            \sup_{t\geq 0}\left\|\hat \Theta_t - \hat \Theta_0 \right\|_F = O(n^{-\frac 1 2}), \,\, {\rm as }\quad n\to \infty\,. 
           \end{equation}
 where $\hat \Theta_t$  are empirical kernels of network with finite width.
 
 For a CNN at orthogonal initialization, trained by gradient descent with learning rate $\eta < \eta_{\rm critical}$ (or gradient flow), for every input $x\in \mathbb R^{n_0}$ with $\|x\|_2\leq 1$, and filter relative spatial location $\beta \in  \{-k,\dots, 0, \dots, k \}$, with probability arbitrarily close to 1,
           \begin{equation}
            \sup_{t\geq 0}\frac{\left\|\theta_{\beta,t} -\theta_{\beta,0} \right\|_2}{\sqrt n},
            \,\, 
            \sup_{t\geq 0}\left\| \hat \Theta_t -   \hat \Theta_0 \right\|_F = O(n^{-\frac 1 2}).
           \end{equation}
 \end{thm}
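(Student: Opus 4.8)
The plan is to adapt the ``linearization'' argument of \cite{lee2019wide} (and its refinement in \cite{liu2020linearity}), replacing each use of independence of the Gaussian weights by the orthogonal moment identities of Lemma \ref{lem:expect}. First I would reduce to the FCN with equal hidden widths $n_1=\cdots=n_{L-1}=n$ in the ntk-parameterization: the CNN case is identical once the spatial indices are carried along and all bounds are taken uniformly in $\alpha$ and in the filter offset $\beta$ (the rectangular first and last layers carry only $O(n_0)$ and $O(n_L)$ parameters and are lower order), the orthogonality constraint $(\omega^l_\alpha)^T\omega^l_\alpha=n_{l-1}{\bf I}$ playing there exactly the role that $(\omega^l)^T\omega^l=n_{l-1}{\bf I}$ does in the dense case. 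Fix a confidence level $\delta>0$; all estimates below are to hold with probability at least $1-\delta$ for $n$ large.

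The first ingredient is a priori control at initialization. By Theorem \ref{thm:convergence_NTK_initialization}, $\|\hat\Theta_0-\Theta^L_\infty\otimes{\bf I}\|_F\to 0$ in probability, so w.h.p. $\lambda_{\min}(\hat\Theta_0)\ge\tfrac12\lambda_{\min}(\Theta_\infty)>0$ and $\|\hat\Theta_0\|_{\rm op}=O(1)$; by Theorem \ref{thm:output_limit} the law of $f_0(\theta_0,X)$ is bounded, so the initial residual $g_0:=f_0(\theta_0,X)-Y$ has $\|g_0\|_2=O(1)$. Two features of the orthogonal ensemble simplify the rest: each initial weight matrix has $\|W^l_0\|_{\rm op}=\sigma_w$ \emph{exactly} (all singular values of $\omega^l_0$ equal $\sqrt{n_{l-1}}$), and $\tfrac1n\|x^l_0\|_2^2=O(1)$ w.h.p. by the $L^2$ bounds already proved in Section A.1 from Lemma \ref{lem:expect}.

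The analytic core is a \emph{local Lipschitz bound on the Jacobian} $J(\theta)=\nabla_\theta f(\theta,X)$: there are constants $R,K_0,K_1>0$, independent of $n$, with $\|J(\theta)\|_{\rm op}\le K_0$ and $\|J(\theta)-J(\tilde\theta)\|_{\rm op}\le K_1\,n^{-1/2}\|\theta-\tilde\theta\|_2$ on the ball $B=\{\|\theta-\theta_0\|_2\le R\}$. I would prove this by a layerwise induction controlling $\tfrac1{\sqrt n}\|x^l(\theta)\|_2$, $\|\partial_\theta x^l(\theta)\|_{\rm op}$ and the corresponding differences; orthogonality enters only through $\|W^l(\theta)\|_{\rm op}\le\sigma_w+R$ on $B$ and through $O(1/n)$ variance bounds for the quadratic forms in the weights, which the fourth-moment formula of Lemma \ref{lem:expect} supplies in place of independence. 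The extra $n^{-1/2}$ in the Lipschitz constant is the usual ``small Hessian'' effect: a perturbation of one interior weight matrix affects $J$ only after an $n^{-1/2}$-weighted average over hidden units. This step needs $\phi$ with bounded, Lipschitz derivative; for ReLU the bound survives by the measure-zero argument of the Remark after Theorem \ref{thm:convergence_NTK_initialization}. Granting this, the theorem follows by a bootstrap: while $\theta_t\in B$ on $[0,T]$ we get $\|\hat\Theta_t-\hat\Theta_0\|_F\le 2K_0K_1 n^{-1/2}\|\theta_t-\theta_0\|_2\to 0$, hence $\lambda_{\min}(\hat\Theta_t)\ge\tfrac14\lambda_{\min}(\Theta_\infty)$; under gradient flow $\tfrac{d}{dt}\|g_t\|_2^2=-2\eta\,g_t^{\!\top}\hat\Theta_t g_t\le-\tfrac12\eta\lambda_{\min}(\Theta_\infty)\|g_t\|_2^2$, so $\|g_t\|_2\le e^{-ct}\|g_0\|_2$ with $c=\tfrac14\eta\lambda_{\min}(\Theta_\infty)$, and $\|\theta_t-\theta_0\|_2\le\eta K_0\int_0^t\|g_s\|_2\,ds\le\tfrac{\eta K_0}{c}\|g_0\|_2$, a fixed $O(1)$ constant; choosing $R$ above it, a continuation argument gives $T=\infty$. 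The discrete case with $\eta<\eta_{\rm critical}$ is the same with $g_{t+1}=(I-\eta\hat\Theta_t)g_t$ and $\|I-\eta\hat\Theta_t\|_{\rm op}\le 1-c'<1$. Dividing by $\sqrt n$ gives $\sup_t\|\theta_t-\theta_0\|_2/\sqrt n=O(n^{-1/2})$, and feeding this back into the Jacobian bound gives $\sup_t\|\hat\Theta_t-\hat\Theta_0\|_F=O(n^{-1/2})$; the CNN statement is identical with $\theta\mapsto\theta_\beta$.

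The step I expect to be hardest is the local Lipschitz lemma: re-running the Gaussian-case concentration of all the intermediate layerwise quantities — activation norms, Jacobian norms, and their perturbations — for the \emph{correlated} orthogonal weights, where one must substitute Lemma \ref{lem:expect} for independence and verify that every extra correlation term it generates is $O(1/n)$ and hence harmless; the bootstrap and the reductions are then routine. As the paper notes, the exponent $n^{-1/2}$ is not tight, and sharpening it to $n^{-1}$ along the lines of \cite{huang2019dynamics} should carry over, but I would not pursue that here.
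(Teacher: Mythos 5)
Your proposal follows essentially the same route as the paper's proof: a local Lipschitzness/boundedness lemma for the Jacobian $J(\theta)$ under orthogonal initialization (with orthogonality giving the operator-norm bound on $W^l$ directly and Lemma \ref{lem:expect} replacing Gaussian independence in the concentration estimates), followed by the bootstrap/Gronwall argument of \cite{lee2019wide}, which the paper likewise invokes essentially verbatim. The only divergences are minor details that your sketch glosses over but that fit within your stated plan: for the CNN case the paper routes through the Hessian-norm control of \cite{liu2020linearity} together with a Stein's-method bound on the pre-activation norms rather than a direct repetition of the FCN argument, and in standard parameterization it must rescale the input layer, $(W^1)^T W^1 = \sigma_w^2 \frac{n}{n_0}{\bf I}$, to recover the same Lipschitz constants as in the Gaussian case.
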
 

We fist prove the {\it local Lipschitzness} of $J(\theta) :=  \nabla_{\theta} h^L(\theta,X) \in \mathbb{R}^{(Dn_L) \times |\theta|} $, where $|\theta|$ is the number of parameters. Once we have the upper bound of $J(\theta)$, the stability comes from proving that the difference is small of the NTK form time $t$ to time $t+1$. Our primary target is construct local Lipschitzness of orthogonal initialization. The Lemma below illustrates the local Lipschitzness of $J(\theta)$ without scale condition for the input layer.
\begin{lem}[local Lipschitzness of $J(\theta)$]
$\exists K > 0$ s.t for every $c > 0$, $\exists n_c$ s.t for $n \geq n_c$, we have the following bound:
\begin{equation}
\left\{
             \begin{array}{lr}
             ||J(\theta) - J(\tilde{\theta})||_F \leq K ||\theta - \tilde{\theta}||_2 , &  \\
               ~~~~~~~~~~~~~~~~~~~~~~~~~~~~~~~~~~~~~~~~~~~~~~~~~~~~~~~~~~~~~~~,  \forall \theta, \tilde{\theta} \in B(\theta_0 , \frac{c}{n})& \\
            ||J(\theta)||_F \leq K, 
             \end{array}
\right.
\end{equation}
Where $$B(\theta_0 , R) := \{\theta: ||\theta - \theta_0 ||_2 < R \}.$$
\end{lem}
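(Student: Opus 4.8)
I would follow the template of \cite[Lemma~1]{lee2019wide} (see also \cite{liu2020linearity}) for Gaussian initialization, isolating the two places where orthogonality changes the argument. The first step is to decompose the Jacobian layer by layer. Writing $x^l(\theta,x)$ for the post-activations, $D^l(\theta,x)=\mathrm{diag}\big(\dot\phi(h^l(\theta,x))\big)$, and $\delta^l(\theta,x):=\partial h^L/\partial h^l$ for the back-propagated sensitivity --- a product of $W^{l+1},\dots,W^L$ interleaved with the diagonals $D^{l},\dots,D^{L-1}$ --- the blocks of $J(\theta)$ are, up to the ntk scaling $\sigma_w/\sqrt{n_{l-1}}$, outer products of a row of $\delta^l(\theta,x)$ with $x^{l-1}(\theta,x)$ (the $W^l$-block) and $\delta^l(\theta,x)$ itself up to $\sigma_b$ (the $b^l$-block). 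Hence both $\|J(\theta)\|_F\le K$ and the Lipschitz bound follow once I control, uniformly for $\theta\in B(\theta_0,c/n)$: (i) the norms $\|x^l\|_2$, $\|\delta^l\|$, $\|W^l\|_{\mathrm{op}}$, $\|D^l\|_{\mathrm{op}}$; and (ii) a Lipschitz-in-$\theta$ modulus for each of these factors.

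\textbf{Bounds at the center.} This is where orthogonality pays off. In the ntk-parameterization $W^l(\theta_0)=\frac{\sigma_w}{\sqrt{n_{l-1}}}\omega^l$ with $(\omega^l)^T\omega^l=n_{l-1}I$, so $\|W^l(\theta_0)\|_{\mathrm{op}}=\sigma_w$ \emph{exactly}, for every finite $n$ and every realization --- no random-matrix concentration is needed (contrast the Gaussian case, where $\|W^l\|_{\mathrm{op}}\le\sigma_w(1+\sqrt{n_l/n_{l-1}})+o(1)$ only holds with high probability). The a.e.\ bound $|\dot\phi|\le m$ of Definition~\ref{def2} gives $\|D^l(\theta_0)\|_{\mathrm{op}}\le m$. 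Then an induction on $l$, using the Lipschitz property of $\phi$ and $\|x\|_2\le1$, yields $\|x^l(\theta_0,x)\|_2\le C\sqrt n$ (the $\sqrt n$ being exactly absorbed by the $1/\sqrt{n_{l-1}}$ factors in $J$), and chaining the operator norms backward bounds $\|\delta^l(\theta_0,x)\|$. The only probabilistic ingredient is the control of the lower-order fluctuations, e.g.\ that $\|x^l\|_2^2/n$ concentrates near $\Sigma^l(x,x)$; this follows from Theorem~\ref{thm:output_limit} and the moment estimates of Lemma~\ref{lem:expect}, and is the source of the qualifier ``with probability arbitrarily close to $1$''.

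\textbf{Transfer to the ball and telescoping.} For $\theta\in B(\theta_0,c/n)$ one has $\|\omega^l(\theta)-\omega^l(\theta_0)\|_F\le\|\theta-\theta_0\|_2\le c/n$, hence $\|W^l(\theta)\|_{\mathrm{op}}\le\sigma_w+O(n^{-3/2})$; feeding this back into the forward and backward recursions (again via Lipschitz $\phi$) shows that, for $n\ge n_c$, all of $\|x^l(\theta,x)\|_2$ and $\|\delta^l(\theta,x)\|$ stay within a universal constant factor of their values at $\theta_0$. Combining the blockwise expressions gives $\|J(\theta)\|_F\le K$. For the Lipschitz bound I would telescope $J(\theta)-J(\tilde\theta)$ over the factors occurring in each block: every summand is (difference of one factor)$\,\times\,$(product of already-bounded factors), and every factor is Lipschitz in $\theta$ --- $W^l$ trivially, $x^l$ by the forward recursion, and $D^l$ through $|\dot\phi(a)-\dot\phi(b)|\le(\mathrm{const})\,|a-b|$. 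Summing the $O(L)$ contributions gives $\|J(\theta)-J(\tilde\theta)\|_F\le K\|\theta-\tilde\theta\|_2$. The CNN case is the same once matrix products are replaced by convolution operators --- which again have deterministically bounded operator norm for scaled-orthogonal filters --- with an extra sum over the spatial index $\alpha$ affecting only constants.

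\textbf{Main obstacle.} The delicate point is the Lipschitz estimate for $D^l$ in the telescoping step, which needs more than Lipschitzness of $\phi$ --- a bounded second derivative, or at least a Lipschitz $\dot\phi$ --- so one must either invoke an implicit extra smoothness hypothesis on $\phi$ or regularize $\phi$ by smooth approximants and pass to the limit, exploiting that the non-smooth set has measure zero (cf.\ the Remark after Theorem~\ref{thm:convergence_NTK_initialization}). The orthogonality-specific subtlety, and the real reason the deterministic identity $\|W^l(\theta_0)\|_{\mathrm{op}}=\sigma_w$ is so convenient, is keeping every estimate \emph{uniform in $n$} as the ball radius $c/n$ shrinks: a high-probability operator-norm bound would instead carry $n$-dependent slack together with a union bound over the uncountably many $\theta$ in the ball, both of which the exact orthogonal identity sidesteps.
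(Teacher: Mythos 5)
Your proposal follows essentially the same route as the paper's proof: decompose $J(\theta)$ into the layerwise blocks $x^{l-1}(\theta,x)\,(\delta^l(\theta,x))^T$ and $\delta^l(\theta,x)$, use the deterministic operator-norm bound on the scaled-orthogonal weights (the paper uses $\|W^l\|_{\mathrm{op}},\|\tilde W^l\|_{\mathrm{op}}\le 3\sigma_w$ on the ball), import the bounds and Lipschitz estimates for $\|x^l\|_2$ and $\|\delta^l\|_2$ from \cite{lee2019wide}, and conclude by summing/telescoping over layers. The only notable differences are presentational: you work in the ntk-parameterization and explicitly flag the smoothness of $\dot\phi$ needed in the telescoping step and the input-layer rescaling, points the paper handles implicitly by citing \cite{lee2019wide} and by its subsequent note and corollary on scaling the first layer.
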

\begin{proof} We prove the result by induction for the standard parameterization, while the proof for ntk parameterization can be derived in the same way.
For $l \geq 1$, let
$$\delta^l (\theta , x) :=  \nabla_{h^l (\theta , x)} h^L (x) \in \mathbb{R}^{n_L n} $$
$$\delta^l (\theta , X) :=  \nabla_{h^l (\theta , X)} h^L (X) \in \mathbb{R}^{(n_L D) \times(n_L D)} $$
Let $\theta = \{W^l , b^l \}$,  $\tilde{\theta} = \{\tilde{W}^l , \tilde{b}^l\}$ be two points in $B( \theta_0 , \frac{c}{n})$. Since the initialization is to choose an orthogonal matrix, i.e. $W^T W = \sigma_w^2{\bf I}$. If the width $n$ is large enough, we have,
$$|| W^l ||_{op} , ~~|| \tilde W^l ||_{op} \leq 3\sigma_w  {\rm~~~~~ for~~ all~~} l.\\$$
As in the original proof for Gaussian initialization \cite{lee2019wide}, there is a constant $K_1$, depending on $D$, $\sigma^2_w$, $\sigma_b^2$, and number of layers $L$ s.t with high probability over orthogonal initialization,
$$||x^l ( \theta , X ) ||_2 , \ \ \ \ \ ||\delta^l (\theta , X) ||_2 \leq K_1$$
$$||x^l ( \theta , X ) - x^l ( \tilde{\theta} , X )||_2 , \ \ \ ||\delta^l  ( \theta , X ) - \delta^l  ( \tilde{\theta} , X )||_2 \leq K_1 ||\tilde{\theta} - \theta ||_2$$
Note: there is a scaling factor $\frac{1}{\sqrt{n}}$ along with $||x^l (\theta , X)||_2$  in the Gaussian case, since from the input layer to the first layer, for the Gaussian initialization, we have
$$|| W^1 ||_{op} , \ \ \ \ \ \ \ \ \ \  || \tilde W^1 ||_{op} \leq 3 \sigma_w \frac{\sqrt{n}}{\sqrt{n_0}}.$$

Decomposing the $J(\theta)$ into two parts, we have
\begin{align*}
||J(\theta)||^2_F &= \displaystyle\sum_{l}|| \nabla_{W^l} h^{L} (\theta)||_F^2 + || \nabla_{b^l} L^{h} (\theta)||_F^2\\
&=  \displaystyle\sum_{l} \displaystyle\sum_{x \in X} ||x^{l-1} (\theta , x) \delta^l (\theta , x)^T||^2_F + ||\delta^l (\theta , x)^T||^2_F\\
& \leq \displaystyle\sum_{l} \displaystyle\sum_{x \in X} (1 + ||x^{l - 1} (\theta , x) \delta^l (\theta , x)^T||^2_F )\cdot  ||\delta^l (\theta , x)^T||^2_F \\
& \leq \displaystyle\sum_{l} (1 + K_1^2 ) \displaystyle\sum_{x} ||\delta^l (\theta , x)^T||^2_F \\
& \leq 2L  K_1^4
\end{align*}
and similarly,
\begin{align*}
||J(\theta) - J(\tilde{\theta})||^2_F & = \displaystyle\sum_{l} \displaystyle\sum_{x \in X} ||x^{l - 1} (\theta , x) \delta^l (\theta , x)^T - x^{l - 1} (\tilde{\theta} , x) \delta^l (\tilde{\theta} , x)^T||^2_F + ||\delta^l (\theta , x)^T - \delta^l (\tilde{\theta} , x)^T ||^2_F \\
& = \displaystyle\sum_{l} \displaystyle\sum_{x \in X} ||x^{l - 1} (\theta , x) \delta^l (\theta , x)^T - x^{l - 1} (\tilde{\theta} , x) \delta^l (\theta , x)^T ||^2 \\
& + ||x^{l - 1} (\tilde{\theta} , x) \delta^l (\theta , x)^T -x^{l - 1} (\tilde{\theta} , x) \delta^l (\tilde{\theta} , x)^T||^2 + ||\delta^l (\theta , x)^T - \delta^l (\tilde{\theta} , x)^T ||^2 \\
& \leq  \displaystyle\sum_{l} 2 K_1^4 ||\tilde{\theta} - \theta ||^2  + K_1^2 ||\tilde{\theta} - \theta ||^2 \\
&\leq 3 K_1^4 L  ||\tilde{\theta} - \theta ||^2 
\end{align*}
\end{proof}
Note: In our setting, since we want the orthogonal and Gaussian initialization have the same NTK, we need to scale the initialization for the input layer:
$$(W^1)^T W^1 =\sigma^2_w \frac{n}{n_0},\ \ \ \ \ W^1 \in \mathbb{R}^{n \times n_0}$$
If we impose this condition, then,
$$||W^1 ||_{op}, ~~~~|| \tilde W^1 ||_{op} \leq 3\sigma_w \frac{\sqrt{n}}{\sqrt{n_0}} $$
So we recover the same lipschitz constant as in the Gaussian initialization condition:
\begin{cor}(local Lipschitzness of $J(\theta)$ with scale condition)
Under the above scaling condition on the input layer, $\exists K > 0$ s.t for every $c > 0$, $\exists n_c$ s.t for $n \geq n_c$, we have the following bound:
$\exists K > 0$ s.t for every $c > 0$, $\exists n_c$ s.t for $n \geq n_c$, we have the following bound:
\begin{equation}
\left\{
             \begin{array}{lr}
               \frac{1}{\sqrt{n}}||J(\theta) - J(\tilde{\theta})||_F \leq K ||\theta - \tilde{\theta}||_2 , &  \\
               ~~~~~~~~~~~~~~~~~~~~~~~~~~~~~~~~~~~~~~~~~~~~~~~~~~~~~~~~~~~~~~~,  \forall \theta, \tilde{\theta} \in B(\theta_0 , \frac{c}{\sqrt{n}})& \\
             \frac{1}{\sqrt{n}}||J(\theta)||_F \leq K, 
             \end{array}
\right.
\end{equation}
Where $B(\theta_0 , R) := \{\theta: ||\theta - \theta_0 ||_2 < R \}$.
\end{cor}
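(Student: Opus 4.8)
The plan is to obtain the corollary from the preceding lemma by re-running its induction with the input layer rescaled so that $(W^1)^{\top}W^1 = \sigma_w^2\,(n/n_0)\,{\bf I}$ on $W^1\in\mathbb{R}^{n\times n_0}$, the only quantity that changes being the spectral norm of the first-layer matrix. First I would record the operator-norm facts: for an exactly scaled orthogonal $W^1_0$ all $n_0$ nonzero singular values equal $\sigma_w\sqrt{n/n_0}$, so $\|W^1_0\|_{\mathrm{op}} = \sigma_w\sqrt{n/n_0}$ holds deterministically, while for $l\geq 2$ the square orthogonal $W^l_0$ has $\|W^l_0\|_{\mathrm{op}} = \sigma_w$; and on the ball $B(\theta_0, c/\sqrt n)$, since $\|W^l-W^l_0\|_{\mathrm{op}}\le\|W^l-W^l_0\|_F\le\|\theta-\theta_0\|_2 = O(n^{-1/2})\to 0$, one gets $\|W^1\|_{\mathrm{op}}\le 3\sigma_w\sqrt{n/n_0}$ and $\|W^l\|_{\mathrm{op}}\le 3\sigma_w$ ($l\ge 2$) for $n$ large --- exactly the operator-norm regime of the Gaussian initialization in \cite{lee2019wide}.

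Next I would propagate these bounds through the network as in the lemma, carrying a single factor $\sqrt n$ that enters only through the first layer. The target is to show that there is a constant $K_1 = K_1(D,\sigma_w^2,\sigma_b^2,L)$ so that, with probability arbitrarily close to $1$ over the orthogonal initialization and uniformly over $\theta,\tilde\theta\in B(\theta_0, c/\sqrt n)$,
\[
\tfrac{1}{\sqrt n}\,\|x^l(\theta,X)\|_2\le K_1,\qquad \tfrac{1}{\sqrt n}\,\|x^l(\theta,X)-x^l(\tilde\theta,X)\|_2\le K_1\,\|\theta-\tilde\theta\|_2,
\]
obtained by the same induction on $l$ as in the lemma, using the linear-growth bound on $\phi$, the operator-norm bounds above, and concentration of the Gaussian bias norms $\|b^l\|_2 = O(\sqrt n)$; meanwhile $\|\delta^l(\theta,X)\|_2$ and $\|\delta^l(\theta,X)-\delta^l(\tilde\theta,X)\|_2/\|\theta-\tilde\theta\|_2$ remain $O(1)$, because the backward pass $\delta^l = \nabla_{h^l}h^L$ chains only through $W^L,\dots,W^{l+1}$ and hence never multiplies by $W^1$. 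Substituting into
\[
\|J(\theta)\|_F^2 = \sum_{l}\sum_{x\in X}\Big(\|x^{l-1}(\theta,x)\,\delta^l(\theta,x)^{\top}\|_F^2 + \|\delta^l(\theta,x)^{\top}\|_F^2\Big)
\]
makes every summand $O(n)$, giving $\tfrac{1}{\sqrt n}\|J(\theta)\|_F\le K$; subtracting $J(\tilde\theta)$ term by term with the add-and-subtract split used in the lemma then gives $\tfrac{1}{\sqrt n}\|J(\theta)-J(\tilde\theta)\|_F\le K\|\theta-\tilde\theta\|_2$, with $K$ depending only on $D,\sigma_w^2,\sigma_b^2,L$, which is the claim.

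The step I expect to be the main obstacle is the operator-norm control of the weights on the ball --- this is the only place the orthogonal ensemble really enters --- but here it is actually easier than the Gaussian case, since orthogonal matrices have deterministic spectra and no random-matrix concentration estimate is needed. The remaining probabilistic input is the crude activation bound $\|x^l(\theta_0,X)\|_2 = O(\sqrt n)$, which I would get from the linear-growth bound on $\phi$ together with $\|W^1_0 x^0\|_2\le\sigma_w\sqrt{n/n_0}\|x^0\|_2$ and concentration of $\|b^l\|_2$ (and, if a sharper two-point bound is wanted, the rotational invariance of $W^1_0$ and the exchangeability machinery of Proposition~\ref{prop:output_limit}). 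Beyond this everything is the verbatim argument of \cite{lee2019wide,liu2020linearity}, which used only operator-norm bounds on the weights and Lipschitzness of $\phi$ and $\dot\phi$; the only care required is keeping the single $\sqrt n$ consistent throughout and checking that the ball radius $c/\sqrt n$ is small enough for the perturbative operator-norm bounds to apply.
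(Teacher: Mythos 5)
Your proposal is correct and follows essentially the same route as the paper: impose the input-layer scaling $(W^1)^{\top}W^1=\sigma_w^2 (n/n_0)\,{\bf I}$, deduce the deterministic operator-norm bound $\|W^1\|_{\mathrm{op}}\leq 3\sigma_w\sqrt{n/n_0}$ on the ball (with $\|W^l\|_{\mathrm{op}}\leq 3\sigma_w$ for $l\geq 2$), and then rerun the unscaled lemma's induction so that the single $\sqrt{n}$ factor entering through the first layer is absorbed by the $1/\sqrt{n}$ normalization, recovering the Lipschitz constants of the Gaussian case in \cite{lee2019wide}. The paper states this more tersely as a "Note" and defers the rest to \cite{lee2019wide}, whereas you spell out the forward/backward propagation of the bounds, but the argument is the same.
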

With this Corollary, the left proof for Theorem \ref{thm:main} is exact same as that of Theorem G.1 and Theorem G.2 for Gaussian initialization in \cite{lee2019wide}.\\

For the CNN case, we refer the reader to \cite{liu2020linearity}, especially $P_{26} \sim P_{29}$. The constancy of NTK can be reduced to control of the Hessian norm. Most of the estimates are independent of the initialization, we only need to replace Lemma F.4 of \cite{liu2020linearity} into the following lemma:
 \begin{lem}\label{lemma:everyactivation}
	For any $l \in [L]$, given $i\in [m]$, with probability at least $1 - 2e^{-{c_\alpha^{(l)}\ln^2(n_{l-1})}} - \frac{2\sqrt{3}}{n_{l-1} -1}$ for some constant $c_\alpha^{(l)}>0$, the vector norm of the pre-activation $|h_{i}^{(l)}| = \tilde{O}(1)$ at initialization.
\end{lem}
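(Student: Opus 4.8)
The plan is to argue by induction on the depth $l$, conditioning at each step on all layers below $l$, and to substitute the concentration-of-measure inequality on the orthogonal group for the sum-of-independent-variables bounds that are available in the Gaussian case. Reading $h^{(l)}_i$ as the vector $(h^{(l)}_{i,\alpha})_{\alpha=1}^m$ of pre-activations of channel $i$ over the $m$ (constantly many) spatial locations, the claim reduces by a union bound over $\alpha$ to the scalar bound $|h^{(l)}_{i,\alpha}| = \tilde{O}(1)$ for each fixed $i$ and $\alpha$. Alongside this I would carry an auxiliary inductive hypothesis on the post-activations: for the fixed input $x$ with $\|x\|_2\le 1$, the rescaled squared length $\frac1{n_{l-1}}\sum_j \phi(h^{(l-1)}_{j,\alpha})^2$ stays within a $\mathrm{polylog}(n_{l-1})$ factor of the deterministic NNGP value $\Sigma^{l-1}_{\alpha,\alpha}(x,x)$, on an event of the advertised probability. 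The base case $l=1$ is immediate: $h^{1}$ is a fixed bounded linear image of $x$ under a scaled Haar-orthogonal matrix whose operator norm is pinned down by the scaling.

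For the inductive step, fix $i,\alpha$ and condition on layers $1,\dots,l-1$. Then $h^{(l)}_{i,\alpha} = \frac{\sigma_w}{\sqrt{(2k+1)n_{l-1}}}\sum_\beta \langle (\omega^{(l)}_\beta)_{i,\cdot}, x^{(l-1)}_{\cdot,\alpha+\beta}\rangle + \sigma_b\beta^{(l)}_i$ is, up to the independent Gaussian bias, a linear functional $F(\omega^{(l)})$ of the Haar-orthogonal weights; its mean vanishes because all odd moments of a Haar-orthogonal matrix are zero (Lemma \ref{lem:expect}), and its conditional variance matches the Gaussian limit of Lemma \ref{lem:haar}. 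The map $F$ is Lipschitz in the Hilbert--Schmidt metric with constant $L = \tilde{O}(1)/\sqrt{n_{l-1}}$, by the layer-$(l-1)$ length bound on $\|x^{(l-1)}_{\cdot,\alpha+\beta}\|_2$ together with the $1/\sqrt{n_{l-1}}$ prefactor. Applying the Gromov--Milman concentration inequality on $O(n_{l-1})$ — a $1$-Lipschitz function deviates from its mean by more than $\delta$ with probability at most $2\exp(-(n_{l-1}-2)\delta^2/12)$ — at deviation level $\delta = \Theta(\ln n_{l-1})\,L$ gives $|F(\omega^{(l)})| = \tilde{O}(1)$ off an event of probability at most $2e^{-c^{(l)}_\alpha\ln^2 n_{l-1}}$; the Gaussian bias is $\tilde{O}(1)$ with the same order of tail probability and is folded into the constant, and the union bound over the constantly many spatial locations, and over the $n_l$ channels when the full layer is needed below, is absorbed up to a worsened constant (using that the widths are polynomially comparable, as assumed in this section).

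To close the induction I still need the rescaled squared length at layer $l$. Here I use the exact second and fourth moment formulas of Lemma \ref{lem:expect}: conditionally on layer $l-1$, $\frac1{n_l}\sum_j\phi(h^{(l)}_{j,\alpha})^2$ has conditional mean equal to the recursion value $\Sigma^{l}_{\alpha,\alpha}$ up to lower order, and conditional variance of order $1/(n_{l-1}-1)$ because the Haar fourth moments carry the prefactor $\frac1{(n-1)n(n+2)}$ (the $\sqrt3$ in the final estimate traces back, after Chebyshev, to the fourth-moment constant $3$ in $\mathbb{E}[W_{ij}^4]$). Chebyshev's inequality then produces the $\frac{2\sqrt3}{n_{l-1}-1}$ deviation term, and the Lipschitz-nonlinearity assumption (Definition \ref{def2}) turns the resulting control of $\|h^{(l)}_{\cdot,\alpha}\|$ into the required control of $\|x^{(l)}_{\cdot,\alpha}\| = \|\phi(h^{(l)}_{\cdot,\alpha})\|$, closing the auxiliary hypothesis. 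Summing the finitely many layer-wise failure probabilities — and noting that $\ln^2 n_{l-1}$ and $(n_{l-1}-1)^{-1}$ dominate the analogous quantities at shallower layers — yields the stated bound with a single renormalized constant, after which substituting this lemma for Lemma F.4 of \cite{liu2020linearity} makes the Hessian-norm argument there go through unchanged for orthogonal initialization.

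The hard part is exactly the loss of independence among the entries of an orthogonal weight matrix: Bernstein- and sub-Gaussian-sum bounds are unavailable, so one must rely on measure concentration on the compact group $O(n)$ together with the closed-form low-degree moments of Lemma \ref{lem:expect}. This is what forces the $e^{-c\ln^2 n}$ tail (rather than $e^{-cn}$) and the Chebyshev-type $O(1/(n-1))$ term, and the most delicate point is keeping the Hilbert--Schmidt Lipschitz constants, hence the accumulated $\mathrm{polylog}$ factors, from compounding across the $L$ layers — which is why the inductive control of the post-activation norms through the Lipschitz envelope of Definition \ref{def2} is essential.
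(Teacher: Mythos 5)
Your route is genuinely different from the paper's: the paper handles the loss of independence by a Stein's-method total-variation bound (cited from the Chatterjee--Meckes line of work), showing that the linear functional $\sum_k W_{ik}x_k$ of a Haar row is within TV distance $\frac{2\sqrt{3}}{n_{l-1}-1}$ of a Gaussian, and then simply pays that TV error and applies the Gaussian tail bound to get the $e^{-c\ln^2 n}$ term; you instead invoke Gromov--Milman concentration on $O(n)$ for a Lipschitz functional, plus a moment/Chebyshev control of the post-activation norms. That substitution is legitimate in spirit, but as written it has a concrete quantitative flaw. Your own auxiliary hypothesis says $\frac{1}{n_{l-1}}\sum_j \phi(h^{(l-1)}_{j,\alpha})^2$ is $\tilde O(1)$, i.e.\ $\|x^{(l-1)}_{\cdot,\alpha+\beta}\|_2 = \tilde O(\sqrt{n_{l-1}})$, so the functional $F(\omega^{(l)})=\frac{\sigma_w}{\sqrt{(2k+1)n_{l-1}}}\sum_\beta\langle\omega^{(l)}_{\beta,i,\cdot},x^{(l-1)}_{\cdot,\alpha+\beta}\rangle$ is $\tilde O(1)$-Lipschitz in the Hilbert--Schmidt metric of the scaled weights (equivalently $\tilde O(\sqrt{n_{l-1}})$-Lipschitz on the group with its unscaled metric), not $\tilde O(1)/\sqrt{n_{l-1}}$-Lipschitz as you claim. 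With your constants, the deviation level $\delta=\Theta(\ln n_{l-1})L=\tilde O(\ln n_{l-1}/\sqrt{n_{l-1}})$ would make $h^{(l)}_{i,\alpha}$ vanish as the width grows, which contradicts Theorem 1 (the pre-activations converge to a nondegenerate Gaussian of variance $\Sigma^l$). The argument is salvageable: with the correct Lipschitz constant, take $\delta=\Theta(\ln n_{l-1})$ and the Gromov--Milman exponent is still $(n_{l-1}-2)\delta^2/L^2=\Theta(\ln^2 n_{l-1})$, recovering the advertised $2e^{-c^{(l)}_\alpha\ln^2 n_{l-1}}$ tail — but the bookkeeping must be fixed, since as stated it proves a false intermediate bound rather than merely a weaker form of the lemma.

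A second, smaller mismatch: you attribute the additive $\frac{2\sqrt{3}}{n_{l-1}-1}$ failure probability to a Chebyshev step on the rescaled squared length, with the $\sqrt{3}$ ``tracing back to the fourth moment.'' In the paper this constant is nothing of the sort — it is the Stein-method total-variation distance between $\sum_k W_{ik}x_k$ and $\mathcal N(0,\|x\|^2)$, quoted verbatim from the cited reference. Your Chebyshev argument does yield an $O(1/(n_{l-1}-1))$ additive term, so the shape of the probability statement survives, but the specific constant cannot be reverse-engineered from $\mathbb{E}[W_{ij}^4]$ the way you suggest; either quote the TV bound as the paper does, or state your version of the lemma with an unspecified constant in that term.
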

\begin{proof}
When $l =1$,
\begin{align*}
|h_i^1| &= |\phi(\frac{1}{n_0}\sum_{k=1}^{n_0}W_{ik}^1 x_k)| \\
& \leq |\frac{m}{\sqrt{n_0}}\sum_{k=1}^{n_0}W_{ik}^1 x_k| + |\phi(0)|.
\end{align*}
From \cite{chatterjee2007multivariate} or \cite{du}, by Stein's method, we have a total variation bound:
$$d(\sum_{k=1}^{n_0}W_{ik}^1 x_k, Z)_{TV} \leq \frac{2\sqrt{3}}{n_{0} -1} \ ,$$
where $Z \sim N(0, \norm{x}^2)$. By the definition of total variation distance, we conclude that under $1 - \frac{2\sqrt{3}}{n_{0} -1}$ probability,  $\sum_{k=1}^{n_0}W_{ik}^1 x_k \sim N(0, \norm{x}^2)$. Hence by the concentration inequality for Gaussian random variable, we have
\begin{align*}
\mathbb{P}[|h_i^1| &\ge \ln(d_0) + |\phi(0)|]\\
& \leq \mathbb{P}[|\frac{m}{\sqrt{n_0}}\sum_{k=1}^{n_0}W_{ik}^1 x_k| \ge \ln(d_0)] \\
& \leq 2 e^{\frac{-d_0 \ln^2 (d_0)}{2 m^2 \norm{x}^2}}.
\end{align*}
Since $\frac{d_0}{m^2 \norm{x}^2}$ is of order $O(1)$, we know that
$$|h_i^1| \sim O(\ln (d_0)).$$
When $l \ge 2$, by induction, we can prove the lemma along the same line as $l = 1$.
\end{proof}
Combing the lemma with the argument in \cite{liu2020linearity}, we have proved theorem 3 in the main text.\\
We remark that to further extend other results of \cite{liu2020linearity}, we also need a Chi-Squared Stein method for nonlinear functional of the orthogonal matrix ensemble, which can be achieved by the exchangeable pair method in \cite{du}. 
\subsection{Additional Numerical Experiments}

\begin{figure*}[t!]
\centering
  \centering
  \includegraphics[width=1.0\textwidth]{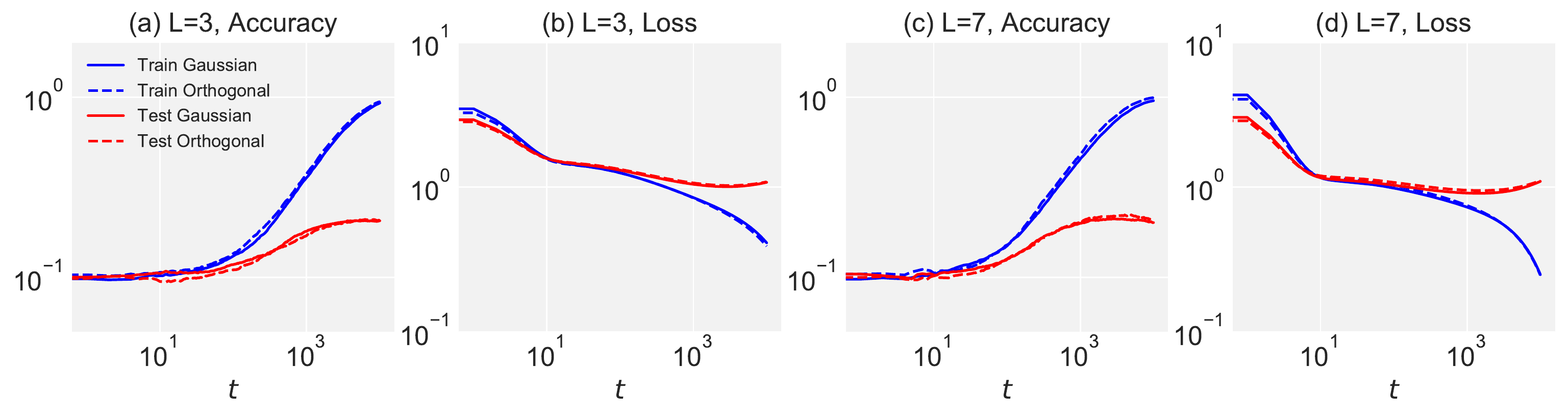}
 \caption{Dynamics of full batch gradient descent on Gaussian and orthogonally initialized networks of $T=10^4$ steps. Orthogonal networks behaves similarly to dynamics on the corresponding Gaussian networks, for loss and accuracy functions. The dataset is selected from full CIFAR10 with $D=256$, while MSE loss and tanh fully-connected networks are adopted for the classification task. (a,b) Network with depth $L=3$ and width of $n = 400$, with $\sigma^2_w = 1.5$, and $\sigma^2_b =0.01$. (c,d) Network with depth $L=7$ and width of $n = 800$, with $\sigma^2_w = 1.5$, and $\sigma^2_b =0.1$. While the solid lines stand for Gaussian weights, dotted lines represent orthogonal initialization.}
 \label{fig:1s}
\end{figure*}

\begin{figure*}[t!]
\centering
  \centering
  \includegraphics[width=1.0\textwidth]{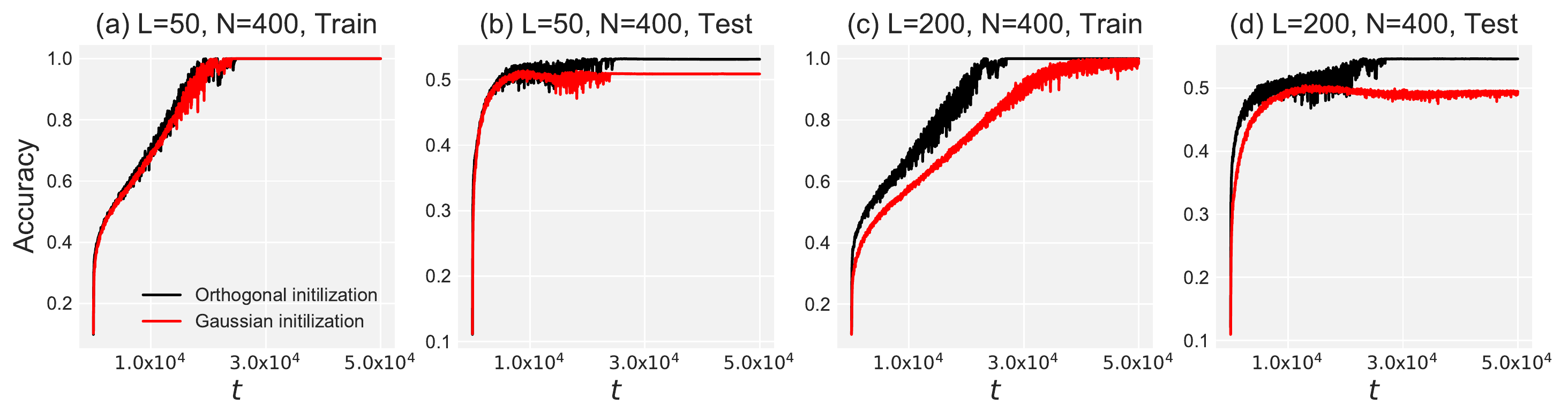}
 \caption{Learning dynamics measured by the optimization and generalization accuracy on train set and test set, for networks of (a,b) depth $L=50$, width $n=400$, and learning rate $\eta=0.01$ (c,d) depth $L=200$, width $n=400$, and learning rate $\eta = 0.004$. We additionally average our results over 30 different instantiations of the network to reduce noise. Black curves are the results of orthogonal initialization and red curves are performances of Gaussian initialization. (a,c) Training speed of an orthogonally initialized network is faster than that of a Gaussian initialized network. (b,d) On the test set, compared to the network with Gaussian initialization, the orthogonally initialized network not only learns faster, but ultimately converges to a higher generalization performance.}
 \label{fig:2s}
\end{figure*}

In the first experiment, summarized in Figure \ref{fig:1s}, we compare the train and test loss and accuracy with two different initialization, i.e., Gaussian and orthogonal weights using $D=256$ samples from the CIFAR10 dataset. To reduce noise, we averaged the results over 30 different instantiations of the networks. Figures \ref{fig:1s}(a,b) show the results of the experiments on the $L=3$, $n=400$ network with tanh activation, while figures \ref{fig:1s}(c,d) display the results for the $L=7$, $n=800$ network with tanh activation. All networks are optimized using gradient descent with a learning rate of $\eta=10^{-4}$ for $T=10^4$ steps. Consistent with our theoretical findings, the loss and accuracy of both networks are almost the same {\bf in the NTK regime}.

As for learning dynamics {\bf outside the NTK regime}, we preform more experiments on the convergence properties of networks in the large depth and large learning rate phase. Figure \ref{fig:2s}  not only confirms that the orthogonally initialized networks train faster and learn better than the orthogonally initialized network, but the performance gap between the two initializations can be very large as the depth increases.

\bibliographystyle{unsrt}
\bibliography{ref.bib}

\begin{thebibliography}{10}

\bibitem{nixon2019feature}
Mark Nixon and Alberto Aguado.
\newblock {\em Feature extraction and image processing for computer vision}.
\newblock Academic Press, 2019.

\bibitem{devlin2018bert}
Jacob Devlin, Ming-Wei Chang, Kenton Lee, and Kristina Toutanova.
\newblock Bert: Pre-training of deep bidirectional transformers for language
  understanding.
\newblock {\em arXiv preprint arXiv:1810.04805}, 2018.

\bibitem{mnih2015human}
Volodymyr Mnih, Koray Kavukcuoglu, David Silver, Andrei~A Rusu, Joel Veness,
  Marc~G Bellemare, Alex Graves, Martin Riedmiller, Andreas~K Fidjeland, Georg
  Ostrovski, et~al.
\newblock Human-level control through deep reinforcement learning.
\newblock {\em Nature}, 518(7540):529--533, 2015.

\bibitem{he2016deep}
Kaiming He, Xiangyu Zhang, Shaoqing Ren, and Jian Sun.
\newblock Deep residual learning for image recognition.
\newblock In {\em Proceedings of the IEEE conference on computer vision and
  pattern recognition}, pages 770--778, 2016.

\bibitem{srivastava2014dropout}
Nitish Srivastava, Geoffrey Hinton, Alex Krizhevsky, Ilya Sutskever, and Ruslan
  Salakhutdinov.
\newblock Dropout: a simple way to prevent neural networks from overfitting.
\newblock {\em The journal of machine learning research}, 15(1):1929--1958,
  2014.

\bibitem{ioffe2015batch}
Sergey Ioffe and Christian Szegedy.
\newblock Batch normalization: Accelerating deep network training by reducing
  internal covariate shift.
\newblock {\em arXiv preprint arXiv:1502.03167}, 2015.

\bibitem{poole2016exponential}
Ben Poole, Subhaneil Lahiri, Maithra Raghu, Jascha Sohl-Dickstein, and Surya
  Ganguli.
\newblock Exponential expressivity in deep neural networks through transient
  chaos.
\newblock In {\em Advances in neural information processing systems}, pages
  3360--3368, 2016.

\bibitem{schoenholz2016deep}
Samuel~S Schoenholz, Justin Gilmer, Surya Ganguli, and Jascha Sohl-Dickstein.
\newblock Deep information propagation.
\newblock {\em arXiv preprint arXiv:1611.01232}, 2016.

\bibitem{xiao2018dynamical}
Lechao Xiao, Yasaman Bahri, Jascha Sohl-Dickstein, Samuel~S Schoenholz, and
  Jeffrey Pennington.
\newblock Dynamical isometry and a mean field theory of cnns: How to train
  10,000-layer vanilla convolutional neural networks.
\newblock {\em arXiv preprint arXiv:1806.05393}, 2018.

\bibitem{chen2018dynamical}
Minmin Chen, Jeffrey Pennington, and Samuel~S Schoenholz.
\newblock Dynamical isometry and a mean field theory of rnns: Gating enables
  signal propagation in recurrent neural networks.
\newblock {\em arXiv preprint arXiv:1806.05394}, 2018.

\bibitem{huang2019mean}
Wei Huang, Richard~Yi Da~Xu, Weitao Du, Yutian Zeng, and Yunce Zhao.
\newblock Mean field theory for deep dropout networks: digging up gradient
  backpropagation deeply.
\newblock {\em arXiv preprint arXiv:1912.09132}, 2019.

\bibitem{yang2017mean}
Ge~Yang and Samuel Schoenholz.
\newblock Mean field residual networks: On the edge of chaos.
\newblock In {\em Advances in neural information processing systems}, pages
  7103--7114, 2017.

\bibitem{yang2019mean}
Greg Yang, Jeffrey Pennington, Vinay Rao, Jascha Sohl-Dickstein, and Samuel~S
  Schoenholz.
\newblock A mean field theory of batch normalization.
\newblock {\em arXiv preprint arXiv:1902.08129}, 2019.

\bibitem{saxe2013exact}
Andrew~M Saxe, James~L McClelland, and Surya Ganguli.
\newblock Exact solutions to the nonlinear dynamics of learning in deep linear
  neural networks.
\newblock {\em arXiv preprint arXiv:1312.6120}, 2013.

\bibitem{pennington2017resurrecting}
Jeffrey Pennington, Samuel Schoenholz, and Surya Ganguli.
\newblock Resurrecting the sigmoid in deep learning through dynamical isometry:
  theory and practice.
\newblock In {\em Advances in neural information processing systems}, pages
  4785--4795, 2017.

\bibitem{pennington2018emergence}
Jeffrey Pennington, Samuel~S Schoenholz, and Surya Ganguli.
\newblock The emergence of spectral universality in deep networks.
\newblock {\em arXiv preprint arXiv:1802.09979}, 2018.

\bibitem{tarnowski2018dynamical}
Wojciech Tarnowski, Piotr Warcho{\l}, Stanis{\l}aw Jastrz{\k{e}}bski, Jacek
  Tabor, and Maciej~A Nowak.
\newblock Dynamical isometry is achieved in residual networks in a universal
  way for any activation function.
\newblock {\em arXiv preprint arXiv:1809.08848}, 2018.

\bibitem{ling2019spectrum}
Zenan Ling and Robert~C Qiu.
\newblock Spectrum concentration in deep residual learning: a free probability
  approach.
\newblock {\em IEEE Access}, 7:105212--105223, 2019.

\bibitem{hu2020provable}
Wei Hu, Lechao Xiao, and Jeffrey Pennington.
\newblock Provable benefit of orthogonal initialization in optimizing deep
  linear networks.
\newblock {\em arXiv preprint arXiv:2001.05992}, 2020.

\bibitem{jacot2018neural}
Arthur Jacot, Franck Gabriel, and Cl{\'e}ment Hongler.
\newblock Neural tangent kernel: Convergence and generalization in neural
  networks.
\newblock In {\em Advances in neural information processing systems}, pages
  8571--8580, 2018.

\bibitem{lee2019wide}
Jaehoon Lee, Lechao Xiao, Samuel Schoenholz, Yasaman Bahri, Roman Novak, Jascha
  Sohl-Dickstein, and Jeffrey Pennington.
\newblock Wide neural networks of any depth evolve as linear models under
  gradient descent.
\newblock In {\em Advances in neural information processing systems}, pages
  8570--8581, 2019.

\bibitem{chizat2019lazy}
Lenaic Chizat, Edouard Oyallon, and Francis Bach.
\newblock On lazy training in differentiable programming.
\newblock In {\em Advances in Neural Information Processing Systems}, pages
  2937--2947, 2019.

\bibitem{sokol2018information}
Piotr~A Sokol and Il~Memming Park.
\newblock Information geometry of orthogonal initializations and training.
\newblock {\em arXiv preprint arXiv:1810.03785}, 2018.

\bibitem{yang2019scaling}
Greg Yang.
\newblock Scaling limits of wide neural networks with weight sharing: Gaussian
  process behavior, gradient independence, and neural tangent kernel
  derivation.
\newblock {\em arXiv preprint arXiv:1902.04760}, 2019.

\bibitem{yang2020tensor}
Greg Yang.
\newblock Tensor programs ii: Neural tangent kernel for any architecture.
\newblock {\em arXiv preprint arXiv:2006.14548}, 2020.

\bibitem{huang2019dynamics}
Jiaoyang Huang and Horng-Tzer Yau.
\newblock Dynamics of deep neural networks and neural tangent hierarchy.
\newblock {\em arXiv preprint arXiv:1909.08156}, 2019.

\bibitem{arora2019exact}
Sanjeev Arora, Simon~S Du, Wei Hu, Zhiyuan Li, Russ~R Salakhutdinov, and
  Ruosong Wang.
\newblock On exact computation with an infinitely wide neural net.
\newblock In {\em Advances in Neural Information Processing Systems}, pages
  8139--8148, 2019.

\bibitem{allen2019convergence}
Zeyuan Allen-Zhu, Yuanzhi Li, and Zhao Song.
\newblock A convergence theory for deep learning via over-parameterization.
\newblock In {\em International Conference on Machine Learning}, pages
  242--252. PMLR, 2019.

\bibitem{du2018gradient}
Simon~S Du, Xiyu Zhai, Barnabas Poczos, and Aarti Singh.
\newblock Gradient descent provably optimizes over-parameterized neural
  networks.
\newblock {\em arXiv preprint arXiv:1810.02054}, 2018.

\bibitem{neuraltangents2019}
Roman Novak, Lechao Xiao, Jiri Hron, Jaehoon Lee, Alexander~A. Alemi, Jascha
  Sohl-Dickstein, and Samuel~S. Schoenholz.
\newblock Neural tangents: Fast and easy infinite neural networks in python.
\newblock \url{https://github.com/google/neural-tangents},
  \url{https://arxiv.org/abs/1912.02803}, 2019.

\bibitem{sohl2020infinite}
Jascha Sohl-Dickstein, Roman Novak, Samuel~S Schoenholz, and Jaehoon Lee.
\newblock On the infinite width limit of neural networks with a standard
  parameterization.
\newblock {\em arXiv preprint arXiv:2001.07301}, 2020.

\bibitem{lee2017deep}
Jaehoon Lee, Yasaman Bahri, Roman Novak, Samuel~S Schoenholz, Jeffrey
  Pennington, and Jascha Sohl-Dickstein.
\newblock Deep neural networks as gaussian processes.
\newblock {\em arXiv preprint arXiv:1711.00165}, 2017.

\bibitem{matthews2018gaussian}
Alexander G de~G Matthews, Mark Rowland, Jiri Hron, Richard~E Turner, and
  Zoubin Ghahramani.
\newblock Gaussian process behaviour in wide deep neural networks.
\newblock {\em arXiv preprint arXiv:1804.11271}, 2018.

\bibitem{liu2020linearity}
Chaoyue Liu, Libin Zhu, and Mikhail Belkin.
\newblock On the linearity of large non-linear models: when and why the tangent
  kernel is constant.
\newblock {\em Advances in Neural Information Processing Systems}, 33, 2020.

\bibitem{hinton2012neural}
Geoffrey Hinton, Nitish Srivastava, and Kevin Swersky.
\newblock Neural networks for machine learning lecture 6a overview of
  mini-batch gradient descent.
\newblock {\em Cited on}, 14(8), 2012.

\bibitem{lewkowycz2020large}
Aitor Lewkowycz, Yasaman Bahri, Ethan Dyer, Jascha Sohl-Dickstein, and Guy
  Gur-Ari.
\newblock The large learning rate phase of deep learning: the catapult
  mechanism.
\newblock {\em arXiv preprint arXiv:2003.02218}, 2020.

\bibitem{li2019towards}
Yuanzhi Li, Colin Wei, and Tengyu Ma.
\newblock Towards explaining the regularization effect of initial large
  learning rate in training neural networks.
\newblock In {\em Advances in Neural Information Processing Systems}, pages
  11674--11685, 2019.

\bibitem{chatterjee2007multivariate}
Sourav Chatterjee and Elizabeth Meckes.
\newblock Multivariate normal approximation using exchangeable pairs.
\newblock {\em arXiv preprint math/0701464}, 2007.

\bibitem{du}
Weitao Du.
\newblock Constructing exchangeable pairs by diffusion on manifolds and its
  application.
\newblock {\em arXiv preprint arXiv:2006.09460}, 2020.

\end{thebibliography}
\end{document}